\newcommand{\mc}{\mathcal}
\newcommand{\mbb}{\mathbb}
\newcommand{\mr}{\mathrm}
\newcommand{\argmin}{\mathop{\rm argmin}\limits}
\newcommand{\vecb}{\mathbf{b}}
\newcommand{\vecg}{\mathbf{g}}
\newcommand{\vecu}{\mathbf{u}}
\newcommand{\vecv}{\mathbf{v}}
\newcommand{\vecw}{\mathbf{w}}
\newcommand{\vecx}{\mathbf{x}}
\newcommand{\vecz}{\mathbf{z}}
\newcommand{\vecbeta}{\boldsymbol \beta}
\newcommand{\vectheta}{\boldsymbol \theta}
\newcommand{\vecmu}{\boldsymbol \mu}
\numberwithin{equation}{section}
\newtheorem{theorem}{Theorem}[section]
\newtheorem{proposition}{Proposition}[section]
\newtheorem{remark}{Remark}[section]
\newtheorem{lemma}{Lemma}[section]
\newtheorem{corollary}{Corollary}[section]
\newtheorem{definition}{Definition}[section]
\newtheorem{assumption}{Assumption}[section]
\begin{document}
\begin{frontmatter}
	\title{Adversarial Robust Low Rank Matrix Estimation: Compressed Sensing and Matrix Completion}
	\runtitle{Adversarial robustness and sparsity}
		
	\begin{aug}			
		\author{\fnms{Takeyuki} \snm{Sasai}\ead[label=e1]{sasai@ism.ac.jp}}
		
		\address{Department of Statistical Science, The Graduate University for Advanced Studies, SOKENDAI, Tokyo, Japan. 
			\printead{e1}}
		
		\author{\fnms{Hironori} \snm{Fujisawa}
			\ead[label=e3]{fujisawa@ism.ac.jp}
		}	
		\address{The Institute of Statistical Mathematics, Tokyo, Japan. \\
Department of Statistical Science, The Graduate University for Advanced Studies, SOKENDAI, Tokyo, Japan. \\
Center for Advanced Integrated Intelligence Research, RIKEN, Tokyo, Japan. \\
			\printead{e3}
		}
		
		\thankstext{t1}{This work was supported in part by JSPS KAKENHI Grant Number 17K00065.}
		\runauthor{Sasai and Fujisawa}
	\end{aug}

	\begin{abstract}
		We consider robust low rank matrix estimation as a trace regression
		when outputs are contaminated by adversaries. The adversaries are
		allowed to add arbitrary values to arbitrary outputs. Such values can
		depend on any samples. We deal with matrix compressed sensing,
		including lasso as a partial problem, and matrix completion, and then
		we obtain sharp estimation error bounds. To obtain the
		error bounds for different models such as matrix compressed sensing
		and matrix completion, we propose a simple unified approach based on a
		combination of the Huber loss function and the nuclear norm
		penalization. Some error bounds obtained in the present paper are sharper than the past ones.
	\end{abstract}
	
	\begin{keyword}[class=MSC]
		\kwd{62G35}
		\kwd{62G05}
	\end{keyword}
	
	\begin{keyword}
		\kwd{Linear regression}
		\kwd{Robustness}
		\kwd{Convergence rate}
		\kwd{Huber loss}
	\end{keyword}
	\tableofcontents
\end{frontmatter}

\section{Introduction}
\label{intro}
Sparse estimation is a well-studied topic in high-dimensional statistics. 
For sparse estimation of the linear regression coefficient, the $\ell_1$ penalization and its variants have been introduced by \cite{Tib1996Regression}, \cite{FanLi2001Variable}, \cite{ZouHas2005Regularization}, \cite{YuaLin2006Model}, \cite{Zha2010Nearly}, \cite{SuCan2016Slope}, and \cite{BelLecTsy2018Slope} .
In some studies, such as \cite{KolLouTsy2011Nuclear}, \cite{NegWai2011Estimation}, \cite{RohTsy2011Estimation}, \cite{NegWai2012Restricted}, \cite{CaiZha2013Sparse}, \cite{Klo2014Noisy}, \cite{KloLouTsy2017Robust}, and \cite{FanWanZhu2021Shrinkage}, sparse estimation of the linear regression coefficient was extended to low rank matrix estimation, mostly utilizing the nuclear norm penalization.

In this study, we consider sparse estimations of regression coefficient under the existence of malicious outlier. \cite{NguTra2012Robust} considered the case that a part of outputs is adversarially contaminated. They dealt with the following model: 
\begin{align}
\label{intro:model}
y_i = \langle \vecx_i, \vecbeta^*\rangle + \xi_i + \sqrt{n}\theta^*_i,\quad i=1,\cdots,n,
\end{align}
where $\vecbeta^* \in \mbb{R}^d$ is the true coefficient vector, $\{\vecx_i\}_{i=1}^n$ is a sequence of covariate vectors, $\langle \cdot,\cdot \rangle$ is the inner product and  $\{\xi_i\}_{i=1}^n$ is a sequence of random noises.
In addition, $\vectheta^*  = \left(\theta_1^*,\cdots,\theta_n^*\right)^\top$ is a vector of adversarial noises ($\sqrt{n}$ is used for normalization). An adversary is allowed to set any value to any position in $\theta^*$.
Here we give a more explanation on the role of $\vectheta^*$. 
Let $I_I$ and $I_O$ be the index sets for uncontaminated and contaminated outputs, respectively, in other words, we have $\theta^*_i =0$ for $i \in I_I$ and $\theta^*_i \neq 0$ for $i \in I_O$, respectively.  Let $o$ be the number of elements of $I_O$. We allow the adversary can choose $I_O$ arbitrarily on the knowledge of $\{\vecx_i\}_{i=1}^n$ and $\{\xi_i\}_{i=1}^n$ only with the constraint that $o/n$, which is a ratio of the contaminated samples by the adversary, is sufficiently small.
We should note that inliers can lose their independence and outliers can be correlated to inliers because the values of $\theta^*_i$ for $i \in I_O$ are not constrained and $I_O$ can be chosen freely.

For a vector $\vecv$, let $\|\vecv\|_2$ and $\|\vecv\|_1$ be the $\ell_2$ and $\ell_1$ norms, respectively. 
\cite{NguTra2012Robust} introduced the following estimator:
\begin{align}
	\label{model:obj2-intro}
	&(\hat{\vecbeta},\hat{\vectheta}) \in 
	\argmin_{(\vecbeta,\vectheta) \in (\mbb{R}^d,\,\mbb{R}^n)} \mr{obj} (\vecbeta,\vectheta),\\
	&\quad \mr{obj} (\vecbeta,\vectheta)=  \sum_{i=1}^n \left(y_i- \langle  \vecx_i, \vecbeta\rangle -\sqrt{n} \theta_i\right)^2 + \lambda_*\|\vecbeta\|_1+ \lambda_o\|\vectheta\|_1.\nonumber 
\end{align}
Then, \cite{NguTra2012Robust} got a high-probability error bound for $\|\hat{\vecbeta}-\vecbeta^*\|_2$, which is
\begin{align}
	\label{NT}
	\mbb{P}\left\{\|\hat{\vecbeta}-\vecbeta^*\|_2 \leq C_{\vecx, \xi,\delta}\left(\sqrt{\frac{s \log d}{n}}+\sqrt{\frac{o}{n}\log n} \right)\right\} \geq 1-\delta,
\end{align}
where $C_{\vecx, \xi,\delta}$ is some constant depending on $\delta$ and properties of $\{\vecx_i\}_{i=1}^n$ and $\{\xi_i\}_{i=1}^n$ when $\{\vecx_i\}_{i=1}^n$ and $\{\xi_i\}_{i=1}^n$ are drawn from Gaussian distribution.
It is known that without adversarial noises, by using the $\ell_1$ penalization, we can get an error bound such as $\sqrt{\frac{s \log d}{n}}$ up to constant factor \cite{RasWaiYu2010Restricted}.
We note that even when adversarial noises contaminate outputs, the error bound is only loosened by $\sqrt{\frac{o}{n}\log n}$ up to constant factor by using estimator \eqref{model:obj2-intro}.
Model \eqref{intro:model} and estimator \eqref{model:obj2-intro} were also studied in \cite{DalTho2019Outlier}. \cite{DalTho2019Outlier} introduced new concentration inequalities and derived a sharper error bound than \eqref{NT}, which is 
\begin{align}
	\label{DT}
	\mbb{P}\left\{\|\hat{\vecbeta}-\vecbeta^*\|_2 \leq C'_{\vecx, \xi,\delta}\left(\sqrt{\frac{s \log d}{n}}+\frac{o}{n} \sqrt{\log n} \sqrt{\log \frac{n}{o}} \right)\right\} \geq 1-\delta,
\end{align}
where $C_{\vecx, \xi,\delta}'$  is some constant depending on $\delta$ and  properties of $\{\vecx_i\}_{i=1}^n$ and $\{\xi_i\}_{i=1}^n$.

Recently, \cite{Tho2020Outlier} proposed a new estimator, which is a variant of \eqref{model:obj2-intro}.
The estimator is a combination of \eqref{model:obj2-intro} and SLOPE \cite{SuCan2016Slope,BodVanSabSuCan2015Slope,BelLecTsy2018Slope}:
\begin{align}
	\label{TsubEst}
	&(\hat{\vecbeta},\hat{\vectheta}) \in 
	\argmin_{(\vecbeta,\vectheta) \in (\mbb{R}^d,\,\mbb{R}^n)} \mr{obj} (\vecbeta,\vectheta),\\
	&\quad \mr{obj} (\vecbeta,\vectheta)=  \sum_{i=1}^n \left(y_i- \langle  \vecx_i, \vecbeta\rangle -\sqrt{n} \theta_i\right)^2 + \|\vecbeta\|_{\flat}+ \|\vectheta\|_{\sharp},\nonumber 
\end{align}
where $\|\cdot\|_{\flat}$ and $\|\cdot\|_{\sharp}$ are the SLOPE norms.
The error bound of \cite{Tho2020Outlier} is 
\begin{align}
	\label{Tsub}
	\mbb{P}\left\{\|\hat{\vecbeta}-\vecbeta^*\|_2 \leq C_{\vecx, \xi}\left(\sqrt{\frac{s \log (d/s)}{n}}+\frac{o}{n}\log \frac{n}{o} +\frac{1+\sqrt{\log (1/\delta)}}{\sqrt{n}}\right)\right\} \geq 1-\delta,
\end{align}
where $C_{\vecx, \xi}$  is some constant depending on properties of $\{\vecx_i\}_{i=1}^n$ and $\{\xi_i\}_{i=1}^n$,  \cite{Tho2020Outlier} also weaken the assumption to make it applicable to the case that  $\{\vecx_i\}_{i=1}^n$ is drawn from an $L$-subGaussian distribution and $\{\xi_i\}_{i=1}^n$ is drawn from a subGaussian distribution.
We note that the error bound in \eqref{Tsub} is shaper than in \eqref{DT} and the constant of error bound in \eqref{Tsub} does not depend on $\delta$.

On the other hand, after optimizing about $\vectheta$, from \cite{SheChe2017Robust}, \eqref{model:obj2-intro} can be re-written as
\begin{align}
\label{obj2-intro-h}
&\hat{\vecbeta} \in {\argmin}_{\vecbeta} \mr{obj}_H(\vecbeta), \\
&\mr{obj}_H(\vecbeta)=\lambda_o^2\sum_{i=1}^n H\left(\frac{y_i-\langle \vecx_i,\vecbeta\rangle }{\lambda_o\sqrt{n}}\right)+\lambda_*\|\vecbeta\|_1,\nonumber
\end{align}
where $H(t)$ is the Huber loss function
\begin{align}
H(t) = \begin{cases}
|t| -1/2 & (|t| > 1) \\
t^2/2  & (|t| \leq 1)
\end{cases}.
\end{align}
In the present paper, we analysis  \eqref{obj2-intro-h} rather than \eqref{model:obj2-intro} and 
derive a sharper error bound 
\begin{align}
	\label{Our}
	\mbb{P}\left\{\|\hat{\vecbeta}-\vecbeta^*\|_2 \leq C'_{\vecx, \xi}\left(\sqrt{\frac{s \log (d/s)}{n}}+\frac{o}{n} \sqrt{\log \frac{n}{o}} +\frac{1+\sqrt{\log (1/\delta)}}{\sqrt{n}}\right)\right\} \geq 1-\delta,
\end{align}
where $C_{\vecx, \xi}'$  is some constant depending on properties of $\{\vecx_i\}_{i=1}^n$ and $\{\xi_i\}_{i=1}^n$. We also weaken the assumptions on $\{\xi_i\}_{i=1}^n$ from a subGaussian distribution to a heavy-tailed distribution with a finite absolute moment.
We note that the error bound in \eqref{Our} is sharper than that of \eqref{Tsub}, however, our 
estimator requires rough knowledge about the sparsity $s$ and the number of contaminated outputs $o$ in the tuning parameters, as mentioned in Theorem \ref{t:lasso:main}.

Sparse estimation can be considered not only on sparse vectors, but also on low rank matrices.
In the present paper, we consider the following model 
\begin{align}
	\label{intro:model:M}
	y_i = \langle X_i, B^*\rangle + \xi_i + \sqrt{n}\theta^*_i,\quad i=1,\cdots,n,
\end{align}
where $B^* \in \mbb{R}^{d_1 \times d_2}$ is the true (low rank) coefficient matrix, $\{X_i\}_{i=1}^n$ is a sequence of covariate matrices.
For a matrix $M$, let $\|M\|_*$ be the nuclear norm. 
To estimate $B^*$, \cite{Tho2020Outlier} considered the following more general estimator than \eqref{TsubEst}:
\begin{align}
	\label{model:obj2-intro-m}
	&(\hat{B},\hat{\vectheta}) \in 
	\argmin_{(B,\vectheta) \in (\mbb{R}^d,\,\mbb{R}^n)} \mr{obj} (B,\vectheta),\\
	&\quad \mr{obj} (B,\vectheta)=  \sum_{i=1}^n \left(y_i- \langle  X_i, B\rangle -\sqrt{n} \theta_i\right)^2 + \lambda_*\|B\|_*+ \|\vectheta\|_{\sharp}.\nonumber
\end{align}
\cite{Tho2020Outlier} dealt with robust matrix compressed sensing, robust matrix completion, trace regression with matrix decomposition.
On the other hand, to estimate the low rank matrix, we consider the following model:
\begin{align}
	\label{obj2-intro-m}
	&\hat{B} \in {\argmin}_{B} \mr{obj}_H(B), \\
	&\mr{obj}_H(\vecbeta)=\lambda_o^2\sum_{i=1}^n H\left(\frac{y_i-\langle X_i,B\rangle }{\lambda_o\sqrt{n}}\right)+\lambda_*\|B\|_*,\nonumber
\end{align}
which is an extension of \eqref{obj2-intro-h}.
In the present paper, we also derive a sharper error bound than \cite{Tho2020Outlier} about robust matrix compressed sensing and robust matrix completion under a weaker condition.

The remainder of this paper is organized as follows.
In Section \ref{sec:res}, we exhibit our results. 
In Section \ref{sec:re}, we explain related works. 
In Section \ref{sec:key}, we digest our key propositions and lemmas. 
All of the proofs are postponed to the Appendix

\section{Results}
\label{sec:res}
\subsection{Adversarial matrix compressed sensing}
\label{subsec:amcs}
Before presenting our results, we introduce the matrix restricted eigenvalue (MRE) condition for the covariance matrix of random matrix, which is defined in Definition \ref{def:MRE}. The MRE condition is an extension of restricted eigenvalue condition for the covariance matrix of random vector introduced in \cite{DalTho2019Outlier}.
This condition enables us to deal with the case where the covariance matrix of is singular.

Before defining the MRE condition, we prepare some notations. 
Let $\mr{Proj}_V$ be the orthogonal projection into a linear vector subspace 
$V$ of Euclidean space and  $V^{\bot}$ be the orthogonal complement space of $V$.
For a matrix $E$, let $l_i(E)$ and $r_i(E)$ be the left and right orthonormal singular vectors 
of $E$, respectively. Let $V_l(E)$ and $V_r(E)$ be  the linear spans of $\left\{ l_i(E)\right\}$ and  $\left\{ r_i(E)\right\}$, respectively. For a matrix $M \in \mbb{R}^{d_1 \times d_2}$, we define
\begin{align}
	\mr{P}^\bot_{E}(M) = \mr{Proj}_{V_l^\bot(E)}M\mr{Proj}_{V_r^\bot(E)}
\end{align}
and
\begin{align}
	 \mr{P}_{E}(M) = M- \mr{P}^\bot_{E}(M).
\end{align}
For a matrix $M$, let $\|M\|_{\mr{F}}$ be the Frobenius norm and $\mr{T}_{\Sigma} \in \mbb{R}^{d_1 \times d_2}$ is an operator such that $\mr{T}_\Sigma(M) = \mr{vec}^{-1} (\Sigma^\frac{1}{2}\mr{vec}(M))$, where $\mr{vec}(\cdot):\,\mbb{R}^{d_1\times d_2} \to \mbb{R}^{d_1d_2}$ is the operator that unfolds a matrix to a vector.
\begin{definition}[MRE Condition]
	\label{def:MRE}
	The matrix $\Sigma$ is said to satisfy the matrix restricted eigenvalue condition $\mr{MRE}(r,c_0,\kappa)$ with a positive integer $r$ and positive values $c_0$ and $\kappa$, if
	\begin{align}
		\label{con:re-v-pre}
		\kappa \|\mr{P}_{E}(M)\|_\mr{F} \leq \|\mr{T}_\Sigma( M)\|_\mr{F}
	\end{align}
	for any matrix $M\in \mbb{R}^{d_1 \times d_2}$ and any matrix $E \in \mbb{R}^{d_1 \times d_2}$ with $\mr{rank}(E) \leq r$ such that
	\begin{align}
		\label{con:re-v}
		\|\mr{P}_{E}^\bot (M)\|_*\leq c_0\|\mr{P}_{E} (M)\|_*.
	\end{align}
\end{definition}

Next, we introduce an $L$-subGaussian random vector, which appeared in \cite{Men2016Upper,MenZhi2020Robust,Tho2020Outlier} and others.
\begin{definition}[$L$-subGaussian random vector]
	A random vector $\vecx \in \mbb{R}^d$ with the mean $\mbb{E}\vecx = \vecmu$ is said to be an $L$-subGaussian if for every $\vecv \in \mbb{R}^d$ and every $p\geq 2$,
	\begin{align}
		\|\langle \vecx-\vecmu,\vecv\rangle\|_{\psi_2}\leq L\left(\mbb{E}|\langle \vecx-\vecmu,\vecv\rangle|^2\right)^\frac{1}{2},
	\end{align}
	where the norm $\|\cdot\|_{\psi_2}$ is defined in Definition \ref{d:orlicz}.
\end{definition}
\begin{remark}
	See Remark 1.3 of \cite{MenZhi2020Robust} for details on the difference between a subGaussian random variable and an L-subGaussian random variable. The L-subGaussian property enables us to use the Generic Chaining  \cite{Tal2014Upper}.
\end{remark}
\begin{definition}[$\psi_2$-norm]
	\label{d:orlicz}
	Let $f$ be defined on the some probability space. Set
	\begin{align}
		\|f\|_{\psi_2}:=	\inf\left[ \eta>0\,:\, \mbb{E}\exp(f/\eta)^2\leq 2\right] < \infty.
	\end{align}	
\end{definition}

For adversarial matrix compressed sensing, we use the following assumption.
\begin{assumption}
	\label{a:mcs}
	Assume that 
	\begin{itemize}
		\item[(i)] $\{\mr{vec}(Z_i)\}_{i=1}^n$  is a sequence with independent random vectors drawn from $L$-subGaussian distributions, where $\Sigma^\frac{1}{2} \mr{vec}(Z_i)=\mr{vec}(X_i)$ and $\Sigma:=\mbb{E} \mr{vec}(X_i) \mr{vec}(X_i)^\top$ and $\Sigma$ satisfies $MRE(r,c_0,\kappa)$. 
		\item[(ii)] $\{\xi_i\}_{i=1}^n$ is a sequence with independent random variables from a distribution whose absolute moments is bounded by $\sigma$,
		\item[(iii)] for $i=1,\cdots,n$, $\mbb{E}h\left(\frac{\xi_i}{\lambda_o\sqrt{n}}\right) \times X_i$ is the zero matrix.
	\end{itemize}
\end{assumption}

Under Assumption \ref{a:mcs}, we have the following theorem.
\begin{theorem}
	\label{t:cs:main}
	Suppose that Assumption \ref{a:mcs} holds. Consider the optimization problem \eqref{obj2-intro-m}. 
	Suppose that  $\lambda_o\sqrt{n} \geq 72 L^4 \sigma$ and $\lambda_* = c_{mcs} \times\lambda_o\sqrt{n}\times L \times r_{\lambda_*}$, where
	\begin{align}
r_{\lambda_*}= \rho\sqrt{\frac{d_1+d_2}{n}}+\frac{1}{c_\kappa \sqrt{r}}\frac{1+\sqrt{\log (1/\delta)}}{\sqrt{n}}+\frac{1}{c_\kappa \sqrt{r}}\frac{o}{n}\sqrt{\log \frac{n}{o}}
	\end{align}
	and where $c_{mcs}$ is some numerical constant and $c_\kappa = \frac{c_0+1}{\kappa}$ and $\rho^2$ is the maximum diagnonal element of $\Sigma$. Let 
	\begin{align}
		\label{ine:ebmcs}
		r_{mcs}  =c_{mcs}' \times\lambda_o\sqrt{n}\times L\times c_\kappa \sqrt{r} \times r_{\lambda_*}, 
	\end{align}
	where $c_{mcs}'$ is some numerical constant, and suppose $r_{mcs}\leq \frac{1}{4\sqrt{3} L^2}$.
	Then, the optimal solution $\hat{B}$ satisfies $\|\mr{T}_\Sigma(\hat{B} -B^*)\|_\mr{F} < r_{mcs}$
	with probability at least $1-3\delta$ for $0<\delta<1/7$.
\end{theorem}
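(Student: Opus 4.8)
The plan is to run the standard localized analysis of a penalized convex $M$-estimator, tailored to the Huber data-fitting term, whose two estimator-specific inputs --- a spectral-norm bound on the gradient at $B^*$ and a restricted-strong-convexity (RSC) lower bound --- are both insensitive to the adversary and to heavy tails because the Huber score $h=H'$ is clipped to $[-1,1]$. Write $L_H(B)=\lambda_o^2\sum_{i=1}^n H\!\left(\frac{y_i-\langle X_i,B\rangle}{\lambda_o\sqrt n}\right)$ and $\Delta=\hat B-B^*$; since $y_i-\langle X_i,B^*\rangle=\xi_i+\sqrt n\,\theta_i^*$, we have $\nabla L_H(B^*)=-\frac{\lambda_o}{\sqrt n}\sum_{i=1}^n h(a_i)X_i$ with $a_i=\frac{\xi_i+\sqrt n\theta_i^*}{\lambda_o\sqrt n}$, and, writing $b_i=-\frac{\langle X_i,\Delta\rangle}{\lambda_o\sqrt n}$, the exact second-order expansion of the convex function $L_H$ reads
\[
L_H(B^*+\Delta)-L_H(B^*)-\langle\nabla L_H(B^*),\Delta\rangle=\lambda_o^2\sum_{i=1}^n\left[H(a_i+b_i)-H(a_i)-H'(a_i)b_i\right].
\]

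The first step I would carry out is the gradient bound: on an event of probability at least $1-\delta$ (up to numerical constants) I want $\|\nabla L_H(B^*)\|_{\mr{op}}\le\frac{c_0-1}{c_0+1}\lambda_*$ and, more precisely, $|\langle\nabla L_H(B^*),\Delta\rangle|\lesssim r_{mcs}\|\mr{T}_\Sigma(\Delta)\|_{\mr{F}}$ uniformly over the cone below. The key device is the splitting $h(a_i)X_i=h\!\left(\frac{\xi_i}{\lambda_o\sqrt n}\right)X_i+\left(h(a_i)-h\!\left(\frac{\xi_i}{\lambda_o\sqrt n}\right)\right)X_i$: the first sum runs over all $n$ indices, its summands have clipped scores (hence are bounded) and are centred by Assumption \ref{a:mcs}(iii) and genuinely independent, so matrix concentration for the spectral norm plus a generic-chaining bound for $\langle\cdot,\Delta\rangle$ over the $L$-subGaussian design produce the $\rho\sqrt{(d_1+d_2)/n}$ and $(1+\sqrt{\log(1/\delta)})/\sqrt n$ parts of $r_{\lambda_*}$; the second sum is supported on the $o$ contaminated indices $I_O$ with coefficients in $[-2,2]$, so $\left|\sum_{i\in I_O}(\cdots)\langle X_i,\Delta\rangle\right|\le 2\sqrt o\,\big(\sup_{|S|=o}\sum_{i\in S}\langle X_i,\Delta\rangle^2\big)^{1/2}$, and controlling the $o$ largest order statistics of $\langle X_i,\Delta\rangle^2$ uniformly over the cone yields the $\frac{o}{n}\sqrt{\log(n/o)}$ part; these are the estimates I would expect Section \ref{sec:key} to supply. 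Given the spectral-norm bound, combining convexity of $L_H$, optimality $L_H(\hat B)+\lambda_*\|\hat B\|_*\le L_H(B^*)+\lambda_*\|B^*\|_*$, the first-order inequality $\langle\nabla L_H(B^*),\Delta\rangle\le L_H(\hat B)-L_H(B^*)$, and $\|B^*\|_*-\|\hat B\|_*\le\|\mr{P}_{B^*}(\Delta)\|_*-\|\mr{P}_{B^*}^\bot(\Delta)\|_*$ forces $\Delta$ into the cone $\|\mr{P}_{B^*}^\bot(\Delta)\|_*\le c_0\|\mr{P}_{B^*}(\Delta)\|_*$; then $\mr{MRE}(r,c_0,\kappa)$ gives $\|\Delta\|_*\le(1+c_0)\|\mr{P}_{B^*}(\Delta)\|_*\le(1+c_0)\sqrt{2r}\,\|\mr{P}_{B^*}(\Delta)\|_{\mr{F}}\le c_\kappa\sqrt{2r}\,\|\mr{T}_\Sigma(\Delta)\|_{\mr{F}}$, the bridge to the Frobenius-type quantity in the conclusion.

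By convexity of $\mr{obj}_H$ it then suffices to show $\mr{obj}_H(B^*+\Delta')>\mr{obj}_H(B^*)$ for every $\Delta'$ in that cone with $\|\mr{T}_\Sigma(\Delta')\|_{\mr{F}}=r_{mcs}$, since the minimizer then lies strictly inside the ball. For such $\Delta'$ the penalty changes by at least $-\lambda_*\|\mr{P}_{B^*}(\Delta')\|_*\ge-\lambda_* c_\kappa\sqrt{2r}\,r_{mcs}$, while by the displayed expansion the loss changes by at least $\langle\nabla L_H(B^*),\Delta'\rangle+\lambda_o^2\sum_{i\in\mathcal{I}}\frac{b_i^2}{2}=\langle\nabla L_H(B^*),\Delta'\rangle+\frac{1}{2n}\sum_{i\in\mathcal{I}}\langle X_i,\Delta'\rangle^2$, where $\mathcal{I}=\{i:|a_i|\le\frac12,\ |b_i|\le\frac12\}$ is the index set on which $H''\equiv1$ along the whole segment. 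The point is that $\mathcal{I}$ keeps a definite fraction of the samples: at most $o$ indices are lost through $I_O$; the fraction of inliers with $|\xi_i|>\frac12\lambda_o\sqrt n$ is at most $\frac{2\sigma}{\lambda_o\sqrt n}\le\frac{1}{36L^4}$ by Markov and $\lambda_o\sqrt n\ge72L^4\sigma$; and the fraction with $|\langle X_i,\Delta'\rangle|>\frac12\lambda_o\sqrt n$ is tiny because $\langle X_i,\Delta'\rangle$ is $L$-subGaussian with scale $\|\mr{T}_\Sigma(\Delta')\|_{\mr{F}}=r_{mcs}\le\frac{1}{4\sqrt3L^2}$ --- each of these cardinalities being controlled uniformly over the cone. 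Transferring the empirical second moment to the population one, $\frac1n\sum_{i=1}^n\langle X_i,\Delta'\rangle^2\ge\frac12\|\mr{T}_\Sigma(\Delta')\|_{\mr{F}}^2$ on the cone, and removing the few indices outside $\mathcal{I}$ costs at most another $\frac14\|\mr{T}_\Sigma(\Delta')\|_{\mr{F}}^2$ via the same top-order-statistics estimate, so the loss change is at least $\langle\nabla L_H(B^*),\Delta'\rangle+\kappa_H\|\mr{T}_\Sigma(\Delta')\|_{\mr{F}}^2$ for a numerical $\kappa_H>0$. Hence $\mr{obj}_H(B^*+\Delta')-\mr{obj}_H(B^*)\ge\kappa_H r_{mcs}^2-\left(\lambda_* c_\kappa\sqrt{2r}+c\,r_{mcs}\right)r_{mcs}$, and since $\lambda_* c_\kappa\sqrt r\asymp r_{mcs}$ and $|\langle\nabla L_H(B^*),\Delta'\rangle|\lesssim r_{mcs}^2$ with ratios proportional to $c_{mcs}/c_{mcs}'$ and $1/c_{mcs}'$ by the definitions of $r_{\lambda_*}$, $\lambda_*$ and $r_{mcs}$ in \eqref{ine:ebmcs}, choosing $c_{mcs}'$ large enough relative to $c_{mcs}$ makes the right-hand side strictly positive. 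A union bound over the $O(1)$ exceptional events --- the gradient bound, the cardinality bound for $\mathcal{I}$, the restricted-eigenvalue transfer, each of probability at least $1-\delta$ --- yields probability at least $1-3\delta$, and the restriction $\delta<1/7$ is what keeps the bad fractions above small enough to leave $\kappa_H$ positive.

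The hard part will be everything that has to hold \emph{uniformly over the cone}: the generic-chaining arguments that upgrade the pointwise facts --- concentration of the clipped-score empirical mean, subGaussian tails of $\langle X_i,\Delta\rangle$, the empirical-to-population transfer of the second moment, and the sum of the $O(o)$ largest $\langle X_i,\Delta\rangle^2$ --- into bounds valid simultaneously for all $\Delta$, using only the $L$-subGaussian hypothesis rather than genuine subGaussianity. I expect these to be exactly the content of the key propositions and lemmas of Section \ref{sec:key}, so that the proof of Theorem \ref{t:cs:main} itself is the assembly described above, with the delicate bookkeeping being the matching of the chaining-complexity terms to the three summands of $r_{\lambda_*}$.
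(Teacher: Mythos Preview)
Your overall architecture — gradient control at $B^*$ split into a clean clipped-score part and an adversarial part, a cone condition, a restricted-strong-convexity lower bound for the Huber loss, and a localization on the sphere $\|\mr{T}_\Sigma(\Delta')\|_\mr{F}=r_{mcs}$ — matches the paper's exactly, and your identification of which complexity each piece contributes to $r_{\lambda_*}$ is correct.

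There is, however, one genuine gap in the way you derive the cone condition. You write that you want the pure spectral bound $\|\nabla L_H(B^*)\|_{\mr{op}}\le\frac{c_0-1}{c_0+1}\lambda_*$ and then invoke the standard Negahban--Wainwright argument. But your own analysis of the adversarial part, $2\sqrt{o}\bigl(\sup_{|S|=o}\sum_{i\in S}\langle X_i,\Delta\rangle^2\bigr)^{1/2}$, yields a bound proportional to $\|\mr{T}_\Sigma(\Delta)\|_\mr{F}$, not to $\|\Delta\|_*$; the same is true of the $\sqrt{\log(1/\delta)/n}$ piece of Lemma~\ref{l:suphx:main}. So what you actually have is a \emph{mixed} bound
\[
|\langle\nabla L_H(B^*),\Delta\rangle|\ \le\ C_1\|\Delta\|_*+C_2\|\mr{T}_\Sigma(\Delta)\|_\mr{F},
\]
and from this the inequality $\langle\nabla L_H(B^*),\Delta\rangle\le\lambda_*(\|\mr{P}_{B^*}(\Delta)\|_*-\|\mr{P}_{B^*}^\bot(\Delta)\|_*)$ does \emph{not} directly give $\|\mr{P}_{B^*}^\bot(\Delta)\|_*\le c_0\|\mr{P}_{B^*}(\Delta)\|_*$: the $C_2\|\mr{T}_\Sigma(\Delta)\|_\mr{F}$ term obstructs it. A naive operator-norm bound on the adversarial sum (triangle inequality over $i\in I_O$) would force $o\lesssim\sqrt{n}$ or worse, which is not the regime of the theorem.

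The paper resolves this with a short case-split (Proposition~\ref{p:coe-1-2-norm}): if $\|\Delta\|_*<c_\kappa\sqrt{r}\,\|\mr{T}_\Sigma(\Delta)\|_\mr{F}$ there is nothing to prove; otherwise $\|\mr{T}_\Sigma(\Delta)\|_\mr{F}\le\frac{1}{c_\kappa\sqrt{r}}\|\Delta\|_*$, so the $C_2$-term is absorbed into the $C_1$-term with effective constant $C_s=C_1+C_2/(c_\kappa\sqrt{r})$, and then the cone follows from $\lambda_*>C_s$ (this is exactly why $\lambda_*$ in the statement carries the factor $1/(c_\kappa\sqrt{r})$ on the non-spectral terms of $r_{\lambda_*}$). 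Once you insert this dichotomy, the rest of your sketch --- the boundary argument via convexity of $\mr{obj}_H$, the RSC via the index set $\mathcal I$, and the union bound over three events --- goes through and is equivalent to the paper's deterministic Theorem~\ref{t:det:main} plus Lemma~\ref{l:suphx:main}, Corollary~\ref{ac:|uMv|-cs:main}, and Proposition~\ref{p:sc1:main}.
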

\begin{remark}
	When $o=0$, in other words, there are no adversarial noises, the error bound \eqref{ine:ebmcs} gets
	\begin{align}
		O\left(c_\kappa\rho\sqrt{r\frac{d_1+d_2}{n}}+\frac{1+\sqrt{\log (1/\delta)}}{\sqrt{n}}\right).
	\end{align}
	\cite{NegWai2011Estimation} considered no adversarial case with the random noises drawn from a Gaussian distribution and obtained the error bound 
	\begin{align}
		C_{\delta}\left(c_\kappa\rho\sqrt{r\frac{d_1+d_2}{n}}\right),
	\end{align}
	where $C_{\delta}$ is some constant which depends on $\delta$.
	In our error bound, the term involving $\delta$ and the term involving $r$ is  separated and the assumption of random noise is weakened from a Gaussian distribution to a distribution with a finite absolute moment. 
\end{remark}

\begin{remark}
	As we mentioned in Introduction, \cite{Tho2020Outlier} also considered the estimation of $B^*$ from model \eqref{intro:model:M}. The main term of the error bound of \cite{Tho2020Outlier} is
	\begin{align}
		\label{ine:ebmcs2}
		  O\left(L\sigma' \rho \sqrt{r\frac{d_1+d_2}{n}} +L\sigma' \frac{1+\sqrt{\log(1/\delta)}}{\sqrt{n}}+L\sigma'^{2} \frac{o}{n}\log \frac{n}{o}\right),
	\end{align}
	where $\sigma'^2$ is the second moment of $\{\xi_i\}_{i=1}^n$.
	Our error bound is sharper from the perspective of convergence rate and we weaken the assumption on $\{\xi_i\}_{i=1}^n$ because \cite{Tho2020Outlier} requires that $\{\xi_i\}_{i=1}^n$ is drawn from a subGaussian distribution.
	On the other hand, our method requires the information $o$ for the tuning parameter $\lambda_*$, however, the estimator of  \cite{Tho2020Outlier} does not. This factor is important for the practical use.
	In addition, the dependence of constants of the result of \cite{Tho2020Outlier} is  better than ours. For example, when $\lambda_o$ is set to $72L^4 \sigma$, our error bounds depend on $L^5$, however, the one of \cite{Tho2020Outlier} depends on $L$. This point would also be important for the practical use.
\end{remark}

\subsection{Adversarial estimation of sparse linear regression coefficient (adversarial lasso)}
\label{subsec:aelrc}

Before presenting our results, we introduce the restricted eigenvalue condition for the covariance matrix of random vector (the RE condition), which was introduced by \cite{DalTho2019Outlier}.
As we refer to in section \ref{subsec:amcs} and the MRE condition is an extension of the RE condition, the MRE condition contains RE condition.
For completeness, we introduce the RE condition here.

\begin{definition}[RE Condition]
	\label{def:RE}
	The matrix $\Sigma$ is said to satisfy the restricted eigenvalue condition $\mr{RE}(s,c_0,\kappa)$ with a positive integer $s$ and positive values $c_0$ and $\kappa$, if
	\begin{align}
		\label{con:re-v-pre-vec}
		\kappa \|\vecv_J\|_2 \leq \|\Sigma^\frac{1}{2}\vecv\|_2
	\end{align}
	for any vector $\vecv \in \mbb{R}^{d}$ and for any set $J \subset \{1,\cdots,d\}$ with $\mr{Card}(J) \leq s$ such that
	\begin{align}
		\label{con:re-v-vec}
		\|\vecv_{J^c}\|_1\leq c_0\|\vecv_J\|_1.
	\end{align}
\end{definition}

For adversarial lasso, we use the following assumption.
\begin{assumption}
	\label{a:lasso}
	Assume that 
	\begin{itemize}
		\item[(i)] $\{\vecz_i\}_{i=1}^n$  is a sequence with independent random vectors drawn from an $L$-subGaussian distribution, where $\Sigma^\frac{1}{2} \vecz_i = \vecx_i$ and $\Sigma =\mbb{E} \vecx_i \vecx_i^\top$ and $\Sigma$ satisfies $\mr{RE}(r,c_0,\kappa)$.
		\item[(ii)] $\{\xi_i\}_{i=1}^n$ is a sequence with independent random variables from a distribution whose  absolute moment is bounded by $\sigma$,
		\item[(iii)] for $i=1,\cdots,n$, $\mbb{E}h\left(\frac{\xi_i}{\lambda_o\sqrt{n}}\right) \times \vecx_i$ is the zero vector.
	\end{itemize}
\end{assumption}

For a vector $\vecv$, let $\|\vecv\|_0$ be  the number of non-zero entries of $\vecv$.
Under Assumption \ref{a:lasso}, we have the following theorem.
\begin{theorem}
	\label{t:lasso:main}
	Suppose that Assumption \ref{a:lasso} holds. 
	Consider the optimization problem \eqref{obj2-intro-h}. 
	Assume that $\lambda_o\sqrt{n} \geq 72 L^4 \sigma$ and  $\lambda_* = c_{lasso} \times\lambda_o\sqrt{n}\times L\times r_{\lambda_*}$, where 
	\begin{align}
	r_{\lambda_*}= \rho\sqrt{\frac{\log (d/s)}{n}}+\frac{1}{c_\kappa \sqrt{s}}\frac{1+\sqrt{\log (1/\delta)}}{\sqrt{n}}+\frac{1}{c_\kappa \sqrt{s}}\frac{o}{n}\sqrt{\log \frac{n}{o}},
	\end{align}
	and  $c_{lasso}$ is some numerical constant, $c_\kappa = \frac{c_0+1}{\kappa}$, $s = \|\vecbeta^*\|_0$ and $\rho^2$ is the maximum diagnonal element of $\Sigma$. Let 
	\begin{align}
		\label{ine:eblasso}
		r_{lasso}  =c_{lasso}' \times\lambda_o\sqrt{n}\times L\times c_\kappa \sqrt{r}\times r_{\lambda_*}, 
	\end{align}
	where $c_{lasso}'$ is some numerical constant, and suppose $r_{lasso}\leq \frac{1}{4\sqrt{3} L^2}$.
	Then, the optimal solution $\hat{\vecbeta}$ satisfies $\|\Sigma^\frac{1}{2}(\hat{\vecbeta} -\vecbeta^*)\|_2 < r_{lasso}$
	with probability at least $1-3\delta$ for $0<\delta<1/7$.
\end{theorem}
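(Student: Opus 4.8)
The plan is a localized-convexity argument around $\vecbeta^*$. Write $\vecdelta=\hat\vecbeta-\vecbeta^*$, $J=\mr{supp}(\vecbeta^*)$ (so $|J|=s$), $u_i(\vecbeta)=(y_i-\langle\vecx_i,\vecbeta\rangle)/(\lambda_o\sqrt n)$, and $h=H'$, the clipped identity $h(t)=\max(-1,\min(1,t))$; thus $u_i(\vecbeta^*)=(\xi_i+\sqrt n\,\theta_i^*)/(\lambda_o\sqrt n)$ and $u_i(\vecbeta^*+\vecv)-u_i(\vecbeta^*)=-\langle\vecx_i,\vecv\rangle/(\lambda_o\sqrt n)$. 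Since $\mr{obj}_H$ is convex and $\hat\vecbeta$ minimizes it, it suffices to produce a high-probability event on which $\mr{obj}_H(\vecbeta^*+\vecv)>\mr{obj}_H(\vecbeta^*)$ for every $\vecv$ with $\|\Sigma^{1/2}\vecv\|_2=r_{lasso}$: the usual segment argument for convex functions then rules out $\|\Sigma^{1/2}\vecdelta\|_2\ge r_{lasso}$.

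First I would record a one-sided quadratic lower bound for the Huber loss: convexity of $H$ and $H''=\mathbf 1_{[-1,1]}$ give, for all $a,b$, $H(a)-H(b)\ge h(b)(a-b)+\tfrac14\min\!\bigl((a-b)^2,|a-b|\bigr)\mathbf 1\{|b|\le 1/2\}$. Taking $a=u_i(\vecbeta^*+\vecv)$, $b=u_i(\vecbeta^*)$, multiplying by $\lambda_o^2$, summing over $i$, and adding the penalty difference gives
\begin{align*}
\mr{obj}_H(\vecbeta^*+\vecv)-\mr{obj}_H(\vecbeta^*)\ \ge\ Q(\vecv)-L(\vecv)+\lambda_*\bigl(\|\vecbeta^*+\vecv\|_1-\|\vecbeta^*\|_1\bigr),
\end{align*}
where $Q(\vecv)=\tfrac14\sum_{i:\,|u_i(\vecbeta^*)|\le 1/2}\min\!\bigl(\tfrac{\langle\vecx_i,\vecv\rangle^2}{n},\tfrac{\lambda_o|\langle\vecx_i,\vecv\rangle|}{\sqrt n}\bigr)$ is the curvature term and $L(\vecv)=\tfrac{\lambda_o}{\sqrt n}\sum_{i=1}^n h(u_i(\vecbeta^*))\langle\vecx_i,\vecv\rangle$ the score term. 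I would split $L=L_{\mr{in}}+L_{\mr{out}}$ over $i\in I_I$ versus $i\in I_O$; here $|h|\le 1$ gives $|L_{\mr{out}}(\vecv)|\le\tfrac{\lambda_o}{\sqrt n}\max_{|S|=o}\sum_{i\in S}|\langle\vecx_i,\vecv\rangle|$, while Assumption~\ref{a:lasso}(iii) makes $L_{\mr{in}}$ a mean-zero multiplier process with multipliers obeying $|h|\le 1$ and $\mbb E h(u_i(\vecbeta^*))^2\le\mbb E|h(u_i(\vecbeta^*))|\le\sigma/(\lambda_o\sqrt n)$.

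The content is then three uniform estimates, which I expect to be the propositions/lemmas digested in Section~\ref{sec:key}. (i) \emph{Score control:} on the good event the small variance above yields (Bernstein) $\tfrac{\lambda_o}{\sqrt n}\|\sum_{i\in I_I}h(u_i(\vecbeta^*))\vecx_i\|_\infty\le\tfrac12\lambda_*$ for the chosen $\lambda_*$, and generic chaining over the lasso cone $\mc C=\{\vecu:\|\vecu_{J^c}\|_1\le c_0\|\vecu_J\|_1\}$ intersected with the $\Sigma^{1/2}$-sphere of radius $r_{lasso}$ — whose Gaussian complexity carries the $\sqrt{\log(d/s)}$ — controls $\sup_{\vecv\in\mc C,\ \|\Sigma^{1/2}\vecv\|_2=r_{lasso}}|L_{\mr{in}}(\vecv)|$ by a constant multiple of $\lambda_*\,c_\kappa\sqrt s\,r_{lasso}$ through the first two terms of $r_{\lambda_*}$ (the $1+\sqrt{\log(1/\delta)}$ being the Bernstein fluctuation). (ii) \emph{Adversarial score:} the key sorted-sum bound, obtained by generic chaining for $\vecv\mapsto\sum$ of the $o$ largest of $(|\langle\vecx_i,\vecv\rangle|)_{i\le n}$, gives $\max_{|S|=o}\sum_{i\in S}|\langle\vecx_i,\vecv\rangle|\lesssim o\sqrt{\log(n/o)}\,L\,\|\Sigma^{1/2}\vecv\|_2$ uniformly in $\vecv$; this replaces $\sqrt{\log n}$ by $\sqrt{\log(n/o)}$ and supplies the third term of $r_{\lambda_*}$. (iii) \emph{Restricted curvature:} uniformly over $\vecv\in\mc C$ with $\|\Sigma^{1/2}\vecv\|_2=r_{lasso}$, $Q(\vecv)\gtrsim r_{lasso}^2$, because the index $i$ is discarded only when $i\in I_O$ (at most $o$) or $|\xi_i|>\lambda_o\sqrt n/2$ (a fraction $\le 2\sigma/(\lambda_o\sqrt n)\le 1/(36L^4)$ by Markov and $\lambda_o\sqrt n\ge 72L^4\sigma$); the constraint $r_{lasso}\le\frac1{4\sqrt3 L^2}$ keeps $\langle\vecx_i,\vecv\rangle$, of $\psi_2$-norm $\lesssim L\|\Sigma^{1/2}\vecv\|_2=Lr_{lasso}$, in the quadratic branch $|\langle\vecx_i,\vecv\rangle|\le\lambda_o\sqrt n$ for all but a negligible fraction of $i$; and an $L$-subGaussian small-ball/generic-chaining bound $\tfrac1n\sum_i\langle\vecx_i,\vecv\rangle^2\gtrsim\|\Sigma^{1/2}\vecv\|_2^2$ on $\mc C$, with a uniform truncated-second-moment bound for the discarded indices, closes the estimate.

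To assemble: the basic inequality $Q(\vecdelta)-L(\vecdelta)+\lambda_*(\|\vecbeta^*+\vecdelta\|_1-\|\vecbeta^*\|_1)\le 0$ together with $Q\ge 0$, the $\ell_\infty$-bound on $L_{\mr{in}}$, and the sorted-sum bound on $L_{\mr{out}}$ (which after dividing by $\lambda_*$ leaves only a term of order $c_\kappa\sqrt s\,\|\Sigma^{1/2}\vecdelta\|_2/c_{lasso}$) shows, after a case split, that the analysis may be restricted to $\vecdelta\in\mc C$; then $\mr{RE}(r,c_0,\kappa)$ (applicable since $|J|=s\le r$) gives $\|\vecdelta\|_1\le(1+c_0)\sqrt r\,\|\vecdelta_J\|_2\le c_\kappa\sqrt r\,\|\Sigma^{1/2}\vecdelta\|_2$, the origin of the $c_\kappa\sqrt r$ factor in $r_{lasso}$. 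On the intersection of the (at most) three good events, of probability $\ge 1-3\delta$ for $0<\delta<1/7$, every $\vecv\in\mc C$ with $\|\Sigma^{1/2}\vecv\|_2=r_{lasso}$ satisfies, using $\|\vecbeta^*+\vecv\|_1-\|\vecbeta^*\|_1\ge-\|\vecv_J\|_1\ge-c_\kappa\sqrt r\,r_{lasso}$,
\begin{align*}
\mr{obj}_H(\vecbeta^*+\vecv)-\mr{obj}_H(\vecbeta^*)\ \ge\ Q(\vecv)-|L(\vecv)|-\lambda_*\,c_\kappa\sqrt r\,r_{lasso}\ \gtrsim\ r_{lasso}^2-\lambda_*\,c_\kappa\sqrt r\,r_{lasso};
\end{align*}
since $r_{lasso}=c_{lasso}'\,\lambda_o\sqrt n\,L\,c_\kappa\sqrt r\,r_{\lambda_*}\asymp\lambda_*\,c_\kappa\sqrt r$ the two terms are of the same order, so choosing the numerical constants $c_{lasso},c_{lasso}'$ so that the curvature constant beats the score constant makes the right-hand side positive, and the convexity reduction gives $\|\Sigma^{1/2}(\hat\vecbeta-\vecbeta^*)\|_2<r_{lasso}$. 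I expect steps (i)–(ii) to be the real obstacle: getting $\log(d/s)$ rather than $\log d$ forces control of the whole linear functional by the Gaussian complexity of the $\ell_1$-cone instead of the crude $\ell_\infty$–$\ell_1$ bound, and needs the variance reduction $\mbb E h^2\lesssim\sigma/(\lambda_o\sqrt n)$ to make chaining informative at the Huber scale, while getting $\sqrt{\log(n/o)}$ rather than $\sqrt{\log n}$ is delicate precisely because $I_O$ may depend on the data, so the sorted-top-$o$ functional must be controlled uniformly in both the direction $\vecv$ and the adversarial set $S$ — which is exactly where the $L$-subGaussian hypothesis and generic chaining (cf.\ the remark following the definition of $L$-subGaussian random vector) come in. Step (iii), the cone reduction, and the final constant-chasing are routine once (i)–(ii) are in place.
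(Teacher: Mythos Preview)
Your high-level strategy is the paper's: a localization argument around $\vecbeta^*$, with the same three probabilistic ingredients --- score control (Lemma~\ref{l:HBernstein}), an adversarial sorted-sum bound (Corollary~\ref{ac:|uMv|-la}), and restricted strong convexity of the Huber loss (Proposition~\ref{p:sc1-lasso}). The deterministic skeletons differ only cosmetically: you compare function values on a boundary sphere, while the paper works with first-order quantities via the interpolation $\Theta_\eta=\eta(\hat\vecbeta-\vecbeta^*)$ and the monotonicity inequality $\eta Q'(\eta)\le\eta Q'(1)$ (Proposition~\ref{p:starMRE} and Theorem~\ref{t:det:main}); both reduce to the same quadratic inequality.

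There is, however, one genuine slip in your cone reduction. You claim Bernstein with variance $\mbb{E}h^2\lesssim\sigma/(\lambda_o\sqrt n)$ gives
\[
\frac{\lambda_o}{\sqrt n}\Big\|\sum_{i\in I_I} h\!\left(\frac{\xi_i}{\lambda_o\sqrt n}\right)\vecx_i\Big\|_\infty\le\tfrac12\lambda_*.
\]
But the coordinatewise $\ell_\infty$ bound is of order $\lambda_o\sqrt n\cdot L\rho\sqrt{\log d/n}$ (the sub-Gaussian tail of $h\cdot x_{ij}$ is governed by $\|x_{ij}\|_{\psi_2}\le L\rho$, and variance reduction does not shrink the $\psi_2$-norm), whereas the theorem's $\lambda_*$ carries only $\sqrt{\log(d/s)/n}$. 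When $s$ is not tiny relative to $d$ (say $s=d/2$), the displayed inequality fails for any fixed numerical $c_{lasso}$. So the standard ``$\|\text{score}\|_\infty\le\lambda_*/2$'' route to the lasso cone does not go through at this tuning.

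The paper sidesteps this by never using the $\ell_\infty$ bound. The chaining bound of Lemma~\ref{l:HBernstein} already delivers a decomposition
\[
\Big|\tfrac{\lambda_o}{\sqrt n}\sum_i h\!\left(\tfrac{\xi_i}{\lambda_o\sqrt n}\right)\langle\vecx_i,\vecv\rangle\Big|\ \lesssim\ \lambda_o\sqrt n\,L\Big(\rho\sqrt{\tfrac{\log(d/s)}{n}}\,\|\vecv\|_1+\big(\sqrt{\tfrac{s\log(d/s)}{n}}+\sqrt{\tfrac{\log(1/\delta)}{n}}\big)\|\Sigma^{1/2}\vecv\|_2\Big),
\]
with $\log(d/s)$ attached to $\|\vecv\|_1$. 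The cone reduction is then a dichotomy (Proposition~\ref{p:coe-1-2-norm}): either $\|\vecv\|_1<c_\kappa\sqrt s\,\|\Sigma^{1/2}\vecv\|_2$ already, or else $\|\Sigma^{1/2}\vecv\|_2\le\|\vecv\|_1/(c_\kappa\sqrt s)$, so both the score and adversarial bounds collapse to $C_s\|\vecv\|_1$ with $C_s<\lambda_*$, and the usual decomposability argument gives the cone. Replacing your $\ell_\infty$ step by this dichotomy closes the gap; everything else in your outline is correct.
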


\begin{remark}
	When $o=0$, in other words, there are no adversarial noises, the error bound \eqref{ine:eblasso} gets
	\begin{align}
		O\left(c_\kappa\rho\sqrt{s\frac{\log (d/s)}{n}}+\frac{1+\sqrt{\log (1/\delta)}}{\sqrt{n}}\right).
	\end{align}
	 \cite{RasWaiYu2010Restricted} considered no adversarial case with the random noises drawn from a Gaussian distribution and obtained the error bound
	\begin{align}
		C_{\delta}\left(c_\kappa\rho\sqrt{s\frac{\log (d/s)}{n}}\right),
	\end{align}
	where $C_{\delta}$ is some constant which depends on $\delta$.
	In our error bound, the term involving $\delta$ is separated from the term involving $s$.
\end{remark}
\begin{remark}
	As we mentioned in Introduction, \cite{Tho2020Outlier} also considered the estimation of $\vecbeta^*$ from model \eqref{intro:model}. The main term of the error bound of \cite{Tho2020Outlier} is
	\begin{align}
		\label{ine:eblasso2}
		  O\left(L\sigma' \rho \sqrt{s\frac{\log (d/s)}{n}} +L\sigma' \frac{1+\sqrt{\log(1/\delta)}}{\sqrt{n}}+L\sigma'^{2} \frac{o}{n}\log \frac{n}{o}\right),
	\end{align}
	where $\sigma'^2$ is the second moment of $\{\xi_i\}_{i=1}^n$.
	As in the case of matrix compressed sensing, our error bound is sharper from the perspective of the convergence rate and we weaken the assumption on $\{\xi_i\}_{i=1}^n$ because \cite{Tho2020Outlier} requires that $\{\xi_i\}_{i=1}^n$ is drawn from a subGaussian distribution.
	On the other hand, our method requires the information $o$ for the tuning parameter $\lambda_*$, however, the estimator of  \cite{Tho2020Outlier} does not. This factor is important for practical use.
	In addition, the dependence of constants of the result of \cite{Tho2020Outlier} is better than ours. For example, when $\lambda_o$ is set to $72L^4 \sigma$, our error bounds depend on $L^5$, however, the one of \cite{Tho2020Outlier} depends on $L$. This point would also be important for practical use.
\end{remark}

\subsection{Adversarial matrix completion}
\label{sc:amc}
Before presenting the observation model, we introduce a sequence of `mask' matrix, $\{E_i\}_{i=1}^n$. Assume that $\{E_i\}_{i=1}^n$ lies in
\begin{align}
\left\{e_k(d_1) e_l(d_2) ^\top \,| \,1\leq k\leq d_1,\ 1\leq l\leq d_2\right\},
\end{align}
where $e_k(d_1)$ is the $d_1$-dimensional $k$-th unit vector and $e_l(d_2)$ is the $d_2$-dimensional $l$-th unit vector, i.e.,
\begin{align}
e_k(d_1) &= (\underbrace{0,\cdots0}_{k-1},1, \underbrace{0,\cdots,0}_{d_1-k})^\top \in \mbb{R}^{d_1},\nonumber \\
e_l(d_2) &= (\underbrace{0,\cdots0}_{l-1},1, \underbrace{0,\cdots,0}_{d_2-l})^\top \in \mbb{R}^{d_2}.
\end{align}
Let $E_{i_{kl}}$ be the $(k,l)$-component of $E_i$. 
Let $d_{mc} = \sqrt{d_1d_2}$. For matrix completion, we consider the following observation model:
\begin{align}
	\label{intro:model:MC}
	y_i = \langle X_i, B^*\rangle + \xi_i + \sqrt{n}\theta^*_i,\quad i=1,\cdots,n,
\end{align}
where $X_i = d_{mc}\varepsilon_i E_i$, where $\varepsilon_i \in \left\{-1,+1\right\}$  is a  random sign and $\left\{E_{i}\right\}_{i=1}^n$ is a sequence with independent random variables such that 
\begin{align}
\mbb{P}[E_{i_{kl}} =1] = 1/d_{mc}^2.
\end{align}

For a matrix $M$, let $\|M\|_\infty$  be the  element-wise $\ell^\infty$-norm.
We introduce the spikiness condition used in \cite{NegWai2012Restricted,Tho2020Outlier, FanWanZhu2021Shrinkage} and so on. 
For a nonzero matrix $M$, let 
\begin{align}
	\alpha (M) := d_{mc}\frac{\|M\|_\infty}{\|M\|_\mr{F}}.
\end{align}
The spikiness condition is given by the following definition.
\begin{definition}
	\label{a:spi}
The spikiness condition is given by $\|B^*\|_\mr{F} \leq 1$, or
\begin{align}
	\|B^*\|_\infty \leq \alpha(B^*)/d_{mc}.
\end{align} 
\end{definition}
\begin{remark}
	Roughly speaking, the spikiness condition requires the true matrix $B^*$ does not have overly large elements. 
\end{remark}
Using the spikiness condition,
we consider the optimal solution given by
\begin{align}
	\label{obj-mc}
	\hat{B} \in 
	\argmin_{\|B\|_\infty   \leq \alpha^*/d_{mc}} \mr{obj}_H (B)
\end{align}
where we abbreviate $\alpha(B^*)$ to $\alpha^*$.

We derive two results about adversarial matrix completion. The assumptions of $\{\xi_i\}_{i=1}^n$ used for each result are different. 
The first assumption is the following.
\begin{assumption}[Assumption for adversarial matrix completion when random noises are drawn from a heavy-tailed distribution]
	\label{a:mc1}
	We assume that 
	\begin{itemize}
	\item[(i)] Let $X_i = d_{mc}\varepsilon_i E_i$, where $\varepsilon_i \in \left\{-1,+1\right\}$  is a  random sign and $\left\{E_{i}\right\}_{i=1}^n$ is a sequence with independent random variables such that 
	\begin{align}
	\mbb{P}[E_{i_{kl}} =1] = 1/d_{mc}^2.
	\end{align}
	\item[(ii)] $\{\xi_i\}_{i=1}^n$ is a sequence with independent random variables from a distribution whose $\alpha$-th absolute moment is bounded by $\sigma^\alpha_{\xi,\alpha}$, with $\alpha\geq 2$.
	\item[(iii)] for $i=1,\cdots,n$, $\mbb{E}h\left(\frac{\xi_i}{\lambda_o\sqrt{n}}\right) \times X_i$ is the zero matrix.
	\end{itemize}
\end{assumption}

Under Assumption \ref{a:mc1}, we have the following Theorem.
\begin{theorem}
	\label{t:mc:main1}
	Suppose that Assumption \ref{a:mc1} holds.
	Consider the optimization problem \eqref{obj-mc}.
	Suppose 
	\begin{align}
		\lambda_o  \sqrt{n} \geq  2\sigma_{\xi,\alpha}\min\left\{\left(\frac{n}{o}\right)^\frac{1}{\alpha+1},\left(\frac{n}{rd_{mc}\log d_{mc}}\right)^\frac{1}{\alpha}\right\}
	\end{align}
	and $\lambda_* = c_{mc1}\times  \frac{1}{ \sqrt{r}}\times r_{\lambda_*}$, where
	\begin{align}
	r_{\lambda_*}= \sigma_\xi \sqrt{\frac{r d_{mc}\{\log d_{mc} + \log (1/\delta)\}}{n}}+\lambda_o \sqrt{r}\frac{d_{mc} (\log d_{mc} + \log (1/\delta))}{\sqrt{n}} +\sqrt{\lambda_o \sqrt{n}\frac{o}{n}},
		\end{align}
		and $c_{mc1}$ is some numerical constant. Let 
		\begin{align}
			\label{mc1-result1}
			&r_{mc1}  =c_{mc1}'\times\alpha^*\times \left(r_{\lambda_*}+\alpha^*\sqrt{\frac{r d_{mc} (\log d_{mc} + \log (1/\delta))}{n}} +\sqrt{r}\frac{d_{mc} \log d_{mc}}{n}+ \alpha^{*} \left(\frac{o}{n}\right)^\frac{\alpha}{2(1+\alpha)}\right), 
		\end{align}
		where $c_{mc1}'$ is a numerical constant. 
		Then, the optimal solution $\hat{B}$ satisfies $\|\hat{B} -B^*\|_\mr{F} < r_{mc1} $
		with probability at least $1-2\delta$.
\end{theorem}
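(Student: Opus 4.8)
The plan is to run the standard analysis of a nuclear-norm-penalized convex $M$-estimator, specialized to the Huber loss and the one-sparse matrix-completion design, treating the adversarial spikes separately through the $\ell_\infty$ constraint. Write $\Delta=\hat B-B^*$, let $h=H'$ be the function appearing in Assumption~\ref{a:mc1}(iii), and put $\mc L(B)=\lambda_o^2\sum_{i=1}^nH\!\left(\frac{y_i-\langle X_i,B\rangle}{\lambda_o\sqrt n}\right)$, so that $\hat B$ minimizes $\mc L(B)+\lambda_*\|B\|_*$ over $\{\|B\|_\infty\le\alpha^*/d_{mc}\}$, a set containing $B^*$ by the spikiness condition (Definition~\ref{a:spi}). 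Convexity of $\mc L$ and optimality of $\hat B$ give the basic inequality
\begin{align}
\mc D(\hat B,B^*):=\mc L(\hat B)-\mc L(B^*)-\langle\nabla\mc L(B^*),\Delta\rangle\le-\langle\nabla\mc L(B^*),\Delta\rangle+\lambda_*\big(\|B^*\|_*-\|\hat B\|_*\big),
\end{align}
where $\nabla\mc L(B^*)=-\tfrac{\lambda_o}{\sqrt n}\sum_ih(v_i)X_i$ with $v_i=\tfrac{\xi_i+\sqrt n\,\theta^*_i}{\lambda_o\sqrt n}$. Decomposing $\Delta$ relative to $V_l(B^*),V_r(B^*)$ and using $\|\hat B\|_*\ge\|B^*\|_*+\|\mr{P}^\bot_{B^*}(\Delta)\|_*-\|\mr{P}_{B^*}(\Delta)\|_*$ together with $\mr{rank}(\mr{P}_{B^*}(\Delta))\le2r$ (hence $\|\mr{P}_{B^*}(\Delta)\|_*\le\sqrt{2r}\,\|\Delta\|_{\mr F}$), the right-hand side reduces to controlling $\langle\nabla\mc L(B^*),\Delta\rangle$.

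First I would bound the score by splitting the sum into the inlier set $I_I$ and the outlier set $I_O$. On $I_I$ the summand $h(v_i)X_i=h(\xi_i/(\lambda_o\sqrt n))X_i$ is mean zero (Assumption~\ref{a:mc1}(iii)), has operator norm $\le d_{mc}$, and has a second moment controllable from $|h(t)|\le\min(|t|,1)$, the $\alpha$-th-moment bound on $\xi_i$, and the large Huber scale $\lambda_o\sqrt n$; matrix Bernstein then gives, on an event of probability $\ge1-\delta$, $\big\|\tfrac{\lambda_o}{\sqrt n}\sum_{i\in I_I}h(v_i)X_i\big\|_{\mr{op}}\lesssim\sigma_\xi\sqrt{\tfrac{d_{mc}(\log d_{mc}+\log(1/\delta))}{n}}+\lambda_o\tfrac{d_{mc}(\log d_{mc}+\log(1/\delta))}{\sqrt n}$, which is $\lesssim\lambda_*$, so pairing with $\|\Delta\|_*$ this part of the score is $\lesssim\lambda_*\sqrt{2r}\,\|\Delta\|_{\mr F}$ on the cone. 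On $I_O$ I would not estimate an operator norm (the adversary may stack all $o$ spikes in one row); instead, since $X_i=d_{mc}\varepsilon_iE_i$ and $\|h\|_\infty\le1$, $\big|\tfrac{\lambda_o}{\sqrt n}\sum_{i\in I_O}h(v_i)\langle X_i,\Delta\rangle\big|\le\tfrac{\lambda_o d_{mc}}{\sqrt n}\sum_{i\in I_O}|\Delta_{k_il_i}|$, which I bound both by Cauchy--Schwarz over the $\le o$ affected entries ($\le\tfrac{\lambda_o d_{mc}\sqrt o}{\sqrt n}\|\Delta\|_{\mr F}$) and, via the spikiness constraint $\|\Delta\|_\infty\le2\alpha^*/d_{mc}$, by $\tfrac{2\lambda_o o\,\alpha^*}{\sqrt n}$; interpolating between these two bounds and using the prescribed lower bound on $\lambda_o\sqrt n$ is what, after the final quadratic solve, produces the $\alpha^*\sqrt{\lambda_o\sqrt n\,o/n}$ and $\alpha^{*2}(o/n)^{\alpha/(2(1+\alpha))}$ contributions to $r_{mc1}$. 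With $\lambda_*$ chosen to dominate the inlier score, $\mc D(\hat B,B^*)\ge0$ forces the cone inclusion $\|\mr{P}^\bot_{B^*}(\Delta)\|_*\le c_0\|\mr{P}_{B^*}(\Delta)\|_*$ up to the adversarial slack, which I dispatch by treating separately the regimes ``$\|\Delta\|_{\mr F}$ small'' and ``$\|\Delta\|_{\mr F}$ large''.

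The complementary ingredient is a restricted strong convexity (RSC) lower bound on $\mc D(\hat B,B^*)=\lambda_o^2\sum_i\big[H(u_i)-H(v_i)-h(v_i)(u_i-v_i)\big]$, where $u_i-v_i=-\langle X_i,\Delta\rangle/(\lambda_o\sqrt n)$. Since $H$ is quadratic on $[-1,1]$, on $G_i:=\{|v_i|\le\tfrac12\}\cap\{|\langle X_i,\Delta\rangle|\le\tfrac12\lambda_o\sqrt n\}$ each summand is $\ge\tfrac12(u_i-v_i)^2$; by the spikiness constraint $|\langle X_i,\Delta\rangle|=d_{mc}|\Delta_{k_il_i}|\le2\alpha^*$, so the second event is automatic and $G_i$ fails only on outliers or on inliers whose noise is clipped (a fraction bounded by $o/n$, and by $\sigma_{\xi,\alpha}^\alpha/(\lambda_o\sqrt n)^\alpha$ via Markov). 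Hence $\mc D(\hat B,B^*)\gtrsim\tfrac1n\sum_i\langle X_i,\Delta\rangle^2\mathbf{1}_{G_i}$, and since $\mbb E\langle X_1,\Delta\rangle^2=\|\Delta\|_{\mr F}^2$, a Negahban--Wainwright-type peeling argument over the cone intersected with the spikiness ball --- with $\|\Delta\|_\infty\le2\alpha^*/d_{mc}$ controlling the empirical-process sup-norm --- yields, on an event of probability $\ge1-\delta$, $\tfrac1n\sum_i\langle X_i,\Delta\rangle^2\mathbf{1}_{G_i}\ge c\|\Delta\|_{\mr F}^2$ minus a deviation of order $\alpha^{*2}\sqrt{rd_{mc}\log d_{mc}/n}\,\|\Delta\|_{\mr F}+\sqrt r\,d_{mc}\log d_{mc}/n\,\|\Delta\|_{\mr F}$ and adversarial deficits of order $\alpha^{*2}(o/n+(\lambda_o\sqrt n)^{-\alpha}\sigma_{\xi,\alpha}^\alpha)$. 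Combining this with the upper bound yields a quadratic inequality $c\|\Delta\|_{\mr F}^2\le A\,\|\Delta\|_{\mr F}+B$ with $A$ of order $(\lambda_*+\text{inlier score})\sqrt r+\alpha^{*2}\sqrt{rd_{mc}\log d_{mc}/n}+\ldots$ and $B$ collecting the adversarial and clipping deficits; solving it, and substituting $\lambda_*=c_{mc1}r_{\lambda_*}/\sqrt r$ and the admissible range of $\lambda_o\sqrt n$, reproduces $\|\hat B-B^*\|_{\mr F}<r_{mc1}$, with total failure probability $\le2\delta$.

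I expect the genuinely delicate points to be: (a) the RSC bound for the Huber loss under the random one-sparse design --- the loss is only \emph{locally} strongly convex, so one must show that discarding the clipped and contaminated coordinates costs only $O(\alpha^{*2}o/n)$-type deficits while the remaining empirical quadratic form still dominates $c\|\Delta\|_{\mr F}^2$ uniformly over the restricted set; this is where the spikiness constraint is indispensable and where the $\alpha^*$-dependence of $r_{mc1}$ originates; and (b) choosing the interpolation point between the spikiness bound and the Cauchy--Schwarz bound on the adversarial part of the score so that the resulting exponent of $o/n$ is exactly $\alpha/(2(1+\alpha))$ --- a mis-balanced split here is precisely what would degrade the rate, which is why the hypothesis on $\lambda_o\sqrt n$ is a minimum of two quantities tuned to $o$ and to $rd_{mc}\log d_{mc}$.
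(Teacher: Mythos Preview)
Your high-level skeleton (basic convexity inequality, score split into $I_I$ and $I_O$, matrix Bernstein for the inlier score, Huber-RSC lower bound, quadratic solve) matches the paper's, but two of the load-bearing steps are not correct as stated, and both are repaired in the paper by a device you do not use: a dichotomy on the \emph{spikiness} of the error $\Delta$ (not on $\|\Delta\|_{\mr F}$). The paper first checks whether $\alpha(\Delta)=d_{mc}\|\Delta\|_\infty/\|\Delta\|_{\mr F}\ge 1/(12r_0)$; if so, the constraint $\|\Delta\|_\infty\le 2\alpha^*/d_{mc}$ immediately gives $\|\Delta\|_{\mr F}\le 24\alpha^* r_0$ and the proof ends (this is precisely the source of the outer $\alpha^*$ in $r_{mc1}$). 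Otherwise $\alpha(\Delta)\le 1/(12r_0)$, and the paper then works with the rescaled $\Theta_\eta=\eta\Delta$ on the shell $\|\Theta_\eta\|_{\mr F}=r_0$, so that $d_{mc}\|\Theta_\eta\|_\infty\le 1/12$. This is what drives both remaining ingredients.

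Concretely, the two gaps in your argument are these. First, for the RSC you assert that the event $\{|\langle X_i,\Delta\rangle|\le\tfrac12\lambda_o\sqrt n\}$ is ``automatic'' because $|\langle X_i,\Delta\rangle|\le 2\alpha^*$; but that requires $\lambda_o\sqrt n\ge 4\alpha^*$, which is nowhere assumed. In the paper's low-spikiness regime one has instead $|\langle X_i,\Theta_\eta\rangle|\le d_{mc}\|\Theta_\eta\|_\infty\le 1/12$, so the Huber loss is genuinely quadratic there regardless of $\lambda_o$ and $\alpha^*$. Second, your Cauchy--Schwarz bound $\sum_{i\in I_O}|\Delta_{k_il_i}|\le\sqrt o\,\|\Delta\|_{\mr F}$ presumes the $o$ sampled entries are distinct; the adversary chooses $I_O$ after seeing $\{E_i\}$ and can put all $o$ outliers on indices that sample the same entry, so only $\sum_{i\in I_O}|\Delta_{k_il_i}|\le o\,\|\Delta\|_\infty$ is available. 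The paper does not interpolate: in the low-spikiness regime it uses $d_{mc}\|\Theta_\eta\|_\infty\le \|\Theta_\eta\|_{\mr F}/(12r_0)$ to get $\big|\tfrac{\lambda_o}{\sqrt n}\sum_{i\in I_O}u_i\langle X_i,\Theta_\eta\rangle\big|\le \tfrac{1}{6}\sqrt{\lambda_o\sqrt n\cdot o/n}\,\|\Theta_\eta\|_{\mr F}$ directly (this is Lemma~\ref{al:|uMv|-mc:main} with $c_m=1/(12r_0)$ and $r_0\ge\sqrt{\lambda_o\sqrt n\cdot o/n}$), which produces the $\sqrt{\lambda_o\sqrt n\,o/n}$ piece of $r_{\lambda_*}$; the $(o/n)^{\alpha/(2(1+\alpha))}$ term arises not from the score but from the RSC deficit, via Markov on the $\alpha$-th moment of $\xi_i$ combined with the prescribed lower bound on $\lambda_o\sqrt n$.
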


\begin{remark}
	When $o=0$, in other words, there are no adversarial noises, the error bound \eqref{mc1-result1} gets
	\begin{align}
		O\left( \sqrt{\frac{r d_{mc} \{\log d_{mc} + \log (1/\delta)\}}{n}}\right),
	\end{align}
	and this coincides with the bound in \cite{NegWai2012Restricted} from the perspective of convergence rate, which considered the case of no adversarial noise and Gaussian random noises.
	Our result is an extension of \cite{NegWai2012Restricted} because we consider adversarial noises and distribution with a finite $\alpha$-th absolute moment with $\alpha \geq 2$, which includes a Gaussian distribution. 
\end{remark}

\begin{remark}
	We make a detailed consideration of the following term in \eqref{mc1-result1} 
	\begin{align}
		\label{ine:mc:term1}
		\lambda_o \sqrt{r}\frac{d_{mc} (\log d_{mc} + \log (1/\delta))}{\sqrt{n}} ,
	\end{align}
	when $\lambda_o\sqrt{n}$ is equal to the lower bound $2\sigma_{\xi,\alpha}\min\left\{\left(\frac{n}{o}\right)^\frac{1}{\alpha+1},\left(\frac{n}{rd_{mc} \log d_{mc}}\right)^\frac{1}{\alpha}\right\}$.

	For example, when $\alpha=2$, which is the lower bound, we have
	\begin{align}
		2 \sigma_{\xi,2}\sqrt{\frac{rd_{mc} (\log d_{mc} + \log (1/\delta))}{n}}\times \frac{\sqrt{d_{mc} \{\log d_{mc} + \log (1/\delta)\}}}{n^\frac{1}{6} o^\frac{1}{3}}
	\end{align}
	when $\min\left\{\left(\frac{n}{o}\right)^\frac{1}{3},\left(\frac{n}{rd_{mc} \log d_{mc}}\right)^\frac{1}{2}\right\} = \left(\frac{n}{o}\right)^\frac{1}{3}$.
	On the other hand, when $\alpha \to \infty$, \eqref{ine:mc:term1} limits to
	larger
	\begin{align}
		2\sigma_{\xi,\alpha}\sqrt{ \frac{r d_{mc} (\log d_{mc} + \log (1/\delta))}{n}}.
	\end{align}
	Consequently, when $\alpha$ can be assumed to be large, the result would be acceptable in the view of the convergence rate.
\end{remark}
\begin{remark}
	We make a detailed consideration of the  term $ \alpha^* \sqrt{\lambda_o \sqrt{n}\frac{o}{n}}$ in \eqref{mc1-result1}. Assume $\alpha=2$ and  $\lambda_o \sqrt{n} = 2\sigma_{\xi,\alpha}\left(\frac{n}{o}\right)^\frac{1}{\alpha+1}$. We have
	\begin{align}
		\alpha^* \sqrt{2\sigma_{\xi,\alpha}}\left(\frac{o}{n}\right)^\frac{1}{3}.
	\end{align}
	On the other hand, when $\alpha \to \infty$ and we choose $\lambda_o \sqrt{n} =2\sigma_{\xi,\alpha}\left(\frac{n}{o}\right)^\frac{1}{\alpha+1}$, the term $\alpha^* \sqrt{\lambda_o \sqrt{n}\frac{o}{n}}$ approaches to
	\begin{align}
		\alpha^* \sqrt{2\sigma_{\xi,\alpha}}\left(\frac{o}{n}\right)^\frac{1}{2}.
	\end{align}
\end{remark}

Before presenting the second assumption, we introduce a subWeibull distribution \cite{KucCha2018Moving, VlaGirNguArb2020sub}.
\begin{definition}[subWeibull random variable of order $\alpha$]
	Define $\sigma_{x,\psi_\alpha}$ for the random variable $x$ with $\mbb{E}x=0$ as
	\begin{align*}
		\sigma_{x,\psi_\alpha}:=	\inf\left[ \eta>0\,:\, \mbb{E}\psi_\alpha(x)\leq 1\right] < \infty,
	\end{align*}	
	where $\psi_\alpha(x) = \exp \left(\frac{|x|^\alpha}{\eta^\alpha}\right)-1$.
	The random variable $x$ with $\mbb{E}x=0$ is said to be a subWeibull random variable of order $\alpha$ if 
	\begin{align}
		\mbb{P}(|x|\geq t)\leq 2\exp \left(-\frac{t^\alpha}{\sigma_{x,\psi_\alpha}^\alpha}\right).
	\end{align}
\end{definition}
\begin{remark}
	We see that a subWeibull random variable of order $2$ is a subGaussian random variable and a subWeibull random variable of order $1$ is a subexponential random variable. 
\end{remark}
The second assumption is the following.
\begin{assumption}[Assumption for adversarial matrix completion when random noise drawn from a subWeibull distribution]
	\label{a:mc2}
	We assume  (i) and (iii) in Assumption \ref{a:mc1} and
	\begin{itemize}
	\item[(ii)] $\{\xi_i\}_{i=1}^n$ is a sequence with independent random variables from a subWeibull random variables of order $\alpha$ with $\alpha \leq 2$. In addition, we denote the second moment of $\xi_i$ as $\sigma^2_{\xi}$ for $i=1,\cdots,n$.
	\end{itemize}
\end{assumption}

Under Assumption \ref{a:mc2}, we have the following Theorem
\begin{theorem}
	\label{t:mc:main2}
	Suppose that Assumption \ref{a:mc2} holds.
	Consider the optimization problem \eqref{obj-mc}.
	Suppose
	\begin{align}
		\lambda_o  \sqrt{n} \geq 2 \sigma_{\xi ,\psi_\alpha} \min\left\{\log^\frac{1}{\alpha} \frac{n}{o}, \log^\frac{1}{\alpha} \frac{n}{rd_{mc}\log d_{mc}}\right\}
	\end{align} 
	and $\lambda_* = c_{mc2}\times  \frac{1}{ \sqrt{r}}\times r_{\lambda_*}$, where
	 \begin{align}
		r_{\lambda_*}=  \sigma_\xi\sqrt{\frac{r d_{mc} \{\log d_{mc} + \log (1/\delta)\}}{n}}+\lambda_o\frac{\sqrt{r} d_{mc} (\log d_{mc} + \log (1/\delta))}{\sqrt{n}}+\sqrt{\lambda_o\sqrt{n}\frac{o}{n} },
	 \end{align}
	 where $c_{mc2}$ is some numerical constant. 
	 Let
	 \begin{align}
		\label{mc2-result1}
		 &r_{mc2}  =c_{mc2}'\times\alpha^* \times \left(r_{\lambda_*}+ \alpha^* \sqrt{\frac{r d_{mc} (\log d_{mc} + \log (1/\delta))}{n}}+\sqrt{r}\frac{d_{mc}\log d_{mc}}{n}+\alpha^{*}\sqrt{\frac{o}{n}}\right), 
	 \end{align}
	 where $c_{mc2}'$ is some numerical constant.
	 Then, the optimal solution $\hat{B}$ satisfies $\|\hat{B} -B^*\|_\mr{F} < r_0$
	 with probability at least $1-2\delta$.
\end{theorem}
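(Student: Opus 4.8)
The plan is to follow the deterministic-reduction-plus-concentration scheme already used for Theorem~\ref{t:mc:main1}, replacing the moment-based control of the random noise by subWeibull concentration. Write $\hat\Delta = \hat B - B^*$ and let $\mc{L}_H(B) = \lambda_o^2\sum_{i=1}^n H\bigl((y_i-\langle X_i,B\rangle)/(\lambda_o\sqrt n)\bigr)$ be the Huber part of the objective, so that $\nabla\mc{L}_H(B^*) = -\tfrac{\lambda_o}{\sqrt n}\sum_{i=1}^n h\bigl((\xi_i+\sqrt n\theta^*_i)/(\lambda_o\sqrt n)\bigr)X_i$. Optimality of $\hat B$ over $\{\|B\|_\infty\le\alpha^*/d_{mc}\}$ (which contains $B^*$ under the spikiness condition) together with convexity of $\mc{L}_H$ and of $\|\cdot\|_*$ gives the basic inequality
\[
\mc{L}_H(\hat B)-\mc{L}_H(B^*)-\langle\nabla\mc{L}_H(B^*),\hat\Delta\rangle \;\le\; -\langle\nabla\mc{L}_H(B^*),\hat\Delta\rangle + \lambda_*\bigl(\|B^*\|_*-\|\hat B\|_*\bigr);
\]
into its left side I will insert a restricted-strong-convexity lower bound, and into its right side upper bounds on the score term.

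For the left side I would show $\mc{L}_H(\hat B)-\mc{L}_H(B^*)-\langle\nabla\mc{L}_H(B^*),\hat\Delta\rangle \ge \kappa_{\mathrm{rsc}}\|\hat\Delta\|_{\mr{F}}^2 - \mathrm{tol}$ on the constraint set. Since $H$ is quadratic on $[-1,1]$ and linear outside, this needs two ingredients: (a) that for a constant fraction of the inlier indices the residuals $(\xi_i+\langle X_i, B^*-\hat B\rangle)/(\lambda_o\sqrt n)$ lie in the quadratic region, which is where the lower bound on $\lambda_o\sqrt n$ enters --- the subWeibull tail $\mbb{P}(|\xi_i|\ge t)\le 2\exp(-t^\alpha/\sigma_{\xi,\psi_\alpha}^\alpha)$ ensures only few inlier residuals escape once $\lambda_o\sqrt n \gtrsim \sigma_{\xi,\psi_\alpha}\log^{1/\alpha}(n/(r d_{mc}\log d_{mc}))$; and (b) a uniform lower bound $\tfrac1n\sum_i\langle X_i,\Delta\rangle^2 \gtrsim \|\Delta\|_{\mr{F}}^2 - \mathrm{tol}$ over the relevant cone intersected with the spikiness ball, via the peeling/contraction argument of \cite{NegWai2012Restricted}; the deficit there scales with $\alpha(\Delta)^2$, which is the source of the $(\alpha^*)^2$ factors in $r_{mc2}$. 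The outlier indices affect the loss difference only through the $1$-Lipschitzness of $H$ and the spikiness bound $|\langle X_i,\hat\Delta\rangle|\le 2\alpha^*$, hence cost at most $2\lambda_o\alpha^* o/\sqrt n$, which after being absorbed against $\kappa_{\mathrm{rsc}}\|\hat\Delta\|_{\mr{F}}^2$ produces the $(\alpha^*)^2 o/n$ and $\alpha^*\sqrt{\lambda_o\sqrt n\,o/n}$ contributions.

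For the right side I would split $\nabla\mc{L}_H(B^*)$ into inlier and outlier parts. Assumption~\ref{a:mc2}(iii) makes the inlier summands centred; their operator norms are uniformly $\le d_{mc}$ since $|h|\le1$, and their matrix variance is governed by $\mbb{E}[h(\xi_i/(\lambda_o\sqrt n))^2]\le \sigma_\xi^2/(\lambda_o^2 n)$, so matrix Bernstein bounds $\|\tfrac{\lambda_o}{\sqrt n}\sum_{i\in I_I}h(\cdot)X_i\|_{\mathrm{op}}$ by a constant times $\sigma_\xi\sqrt{d_{mc}(\log d_{mc}+\log(1/\delta))/n} + \lambda_o d_{mc}(\log d_{mc}+\log(1/\delta))/\sqrt n$. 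The penalty $\lambda_* = c_{mc2}r^{-1/2}r_{\lambda_*}$ is calibrated precisely so that it dominates twice this quantity; the standard nuclear-norm decomposition then yields the dichotomy that either $\hat\Delta$ obeys a cone inequality $\|\mr{P}^\bot_{B^*}(\hat\Delta)\|_*\le 3\|\mr{P}_{B^*}(\hat\Delta)\|_*$, so $\|\hat\Delta\|_*\le 4\sqrt r\,\|\hat\Delta\|_{\mr{F}}$, or $\|\hat\Delta\|_{\mr{F}}$ is already below the claimed bound. In the first case $|\langle\nabla\mc{L}_H(B^*),\hat\Delta\rangle|\lesssim \lambda_*\sqrt r\,\|\hat\Delta\|_{\mr{F}} + \lambda_o\alpha^* o/\sqrt n$.

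Assembling the pieces, the basic inequality becomes a scalar quadratic inequality $\kappa_{\mathrm{rsc}}\|\hat\Delta\|_{\mr{F}}^2 \le C\sqrt r\,\lambda_*\|\hat\Delta\|_{\mr{F}} + \mathrm{tol}$, and solving it gives $\|\hat\Delta\|_{\mr{F}}<r_{mc2}$; the failure probability $2\delta$ is a union bound over the event of the quadratic-form/RSC proposition and the event of the matrix-Bernstein bound for the inlier score. The hardest step is the restricted-strong-convexity estimate for the Huber loss under the matrix-completion design: one must simultaneously handle the non-smoothness of $h$, the strong non-isotropy of $X_i=d_{mc}\varepsilon_i E_i$ (so that only the spikiness-constrained quadratic form concentrates), and subWeibull --- rather than bounded or sub-Gaussian --- noise, which requires truncating $\xi_i$ at level $\asymp\lambda_o\sqrt n$ and a careful accounting of the truncation residual; it is this accounting that fixes the stated lower bound on $\lambda_o\sqrt n$.
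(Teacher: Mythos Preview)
Your proposal is correct and follows the same route as the paper: matrix Bernstein for the inlier score (Lemma~\ref{l:mcspec:main}), restricted strong convexity of the Huber loss under subWeibull noise (Corollary~\ref{c:mcsc2:main}), spikiness-based outlier control (Lemma~\ref{al:|uMv|-mc:main}), all fed into a deterministic master inequality (Theorem~\ref{t:det:main}), exactly as in the proof of Theorem~\ref{t:mc:main1}. The one refinement you are missing is that the paper opens with an explicit dichotomy on $\alpha(\Theta_\eta)$ versus $1/(12r_0)$: in the high-spikiness branch the constraint $\|\hat B\|_\infty\le\alpha^*/d_{mc}$ together with $\|B^*\|_\infty\le\alpha^*/d_{mc}$ immediately gives $\|\hat B-B^*\|_{\mr F}\le 24\alpha^* r_0$, while in the low-spikiness branch the inequality $\|\Theta_\eta\|_\infty\le(12r_0 d_{mc})^{-1}\|\Theta_\eta\|_{\mr F}$ becomes available, so Lemma~\ref{al:|uMv|-mc:main} produces an outlier contribution that is \emph{linear} in $\|\Theta_\eta\|_{\mr F}$ with coefficient $\asymp\sqrt{\lambda_o\sqrt n\,o/n}$, rather than the constant $2\lambda_o\alpha^* o/\sqrt n$ you obtain from $|\langle X_i,\hat\Delta\rangle|\le 2\alpha^*$; this linear-versus-constant distinction is exactly what gives the stated $\sqrt{\lambda_o\sqrt n\,o/n}$ term in $r_{\lambda_*}$ rather than a version inflated by~$\sqrt{\alpha^*}$.
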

\begin{remark}
	We make detailed consideration of the term condition in  \eqref{mc2-result1}.
	\begin{align}
		\lambda_o\frac{\sqrt{r} d_{mc} (\log d_{mc} + \log (1/\delta))}{\sqrt{n}}
	\end{align}
	when  $\lambda_o \sqrt{n} = 2 \sigma_{\xi ,\psi_\alpha} \min\left\{\log^\frac{1}{\alpha} \frac{n}{o}, \log^\frac{1}{\alpha} \frac{n}{rd_{mc}\log d_{mc}}\right\}$.

	For example, when the random noises drawn from subGaussian distribution ($\alpha=2$), we have
	\begin{align}
		\sqrt{\frac{rd_{mc} (\log d_{mc} + \log (1/\delta))}{n}} \times \sqrt{\frac{d_{mc} (\log d_{mc} + \log (1/\delta))}{n}}  \sqrt{\log \frac{n}{o}}
	\end{align}
	or 
	\begin{align}
		\sqrt{\frac{rd_{mc} (\log d_{mc} + \log (1/\delta))}{n}} \times \sqrt{\frac{(\log d_{mc} + \log (1/\delta))}{n}}  \sqrt{\log \frac{n}{rd_{mc} \log d_{mc}}}
	\end{align}
	and these term become small when $\sqrt{\frac{d_{mc} (\log d_{mc} + \log (1/\delta))}{n}}  \sqrt{\log \frac{n}{o}}\leq 1$ and this condition would be not so strong.
\end{remark}
\begin{remark}
	\label{r:mcsubW}
	We make detailed consideration of the following  term $\alpha^*\sqrt{\lambda_o\sqrt{n}\frac{o}{n}}$ in \eqref{mc2-result1}. Assume  $\alpha=2$ and  $\lambda_o\sqrt{n} = 2 \sigma_{\xi ,\psi_2} \sqrt{\log \frac{n}{o}}$. We have
	\begin{align}
		\label{outs:mc2}
		2\alpha^* \sigma_{\xi,\psi_2} \sqrt{\frac{o}{n} \sqrt{\log \frac{n}{o}}}.
	\end{align}
\end{remark}

\begin{remark}
	\cite{Tho2020Outlier} also considered the estimation of $B^*$ from model \eqref{intro:model:M} under the condition of matrix completion. The main term of error bound of \cite{Tho2020Outlier} is
	\begin{align}
		\label{ine:mc2}
	&	\sqrt{\|\hat{B}-B^*\|_2^2+\|\hat{\vectheta}-\vectheta^*\|_2^2} =  O\left((a^*+\sigma_{\xi,\psi_2}) \sqrt{r\frac{d_{mc}(\log d_{mc}+\log(1/\delta))}{n}} +(a^*+\sigma_\xi) \sqrt{\frac{o}{n}\log \frac{n}{o}}\right).
	\end{align}
	We note that \cite{Tho2020Outlier} considered non-uniform sampling (the mask matrix is slightly different), however, \eqref{ine:mc2} is the result of assuming uniform sampling from (i) in Assumption \ref{a:mc2}.
	Our error bound is sharper from the perspective of  the convergence rate when $\lambda_o$ is chosen to be equal $2 \sigma_{\xi ,\psi_2} \sqrt{\log \frac{n}{o}}$.
	We also weaken the assumption on $\{\xi_i\}_{i=1}^n$ because \cite{Tho2020Outlier} requires that $\{\xi_i\}_{i=1}^n$ is drawn from a subGaussian distribution.
	As in the case of matrix compressed sensing, on the other hand, our method requires the information $o$ for the tuning parameters, however, the estimator of  \cite{Tho2020Outlier} does not. The dependence of constants of the result of \cite{Tho2020Outlier} is different from ours. These points would be important for practical use.
\end{remark}

\section{Related works}
\label{sec:re}

\subsection{Recent development of robust estimation}
One of the recent lines of research on robust estimation in the presence of outliers was triggered by the epoch-making study of \cite{CheGaoRen2018robust}, which considered robust estimation of mean and various types of covariance matrices under Huber's contamination.
Since then, \cite{CheGaoRen2018robust} has been followed by \cite{DiaKanKanLiMoiSte2019Robust}, \cite{LaiRaoVem2016Agnostic} and many other papers.
These papers dealt with  robust mean estimation or covariance matrix estimation \cite{DiaKanKanLiMoiSte2017Being,KotSteSte2018Robust,DiaKamKanLiMoiSte2018Robustly,CheDiaGe2019High,CheYesFlaBar2019Fast,CheDiaGeWoo2019Faster,DonHopLi2019Quantum,LeiLuhVenZha2020Fast,Hop2020Mean,PraBalRav2020Robust,DepLec2019Robust,LugMen2021Robust,DalMin2020All}, robust PCA \cite{BalDuLiSin2017Computationally, DiaKanKarPriSte2019Outlier}, robust regression \cite{DiaKonSte2019Efficient,BakPra2021Robust,LiuSheLiCar2020High,Gao2020Robust} and so on. These studies are mainly interested in deriving sharp error bounds,  deriving learning limits for each problem and reduction of computational complexity.

\subsection{Adversarial matrix compressed sensing and adversarial lasso}
\label{sec:amcsal}
In Section \ref{sec:amcsal}, we refer, topic by topic, the relevant papers that are not referred to in Sections 1 and 2.

\subsubsection{Sparse coefficient  estimation under existence of outliers}
\cite{CheCarMan2013Robust} and \cite{LiuSheLiCar2020High} considered linear regression under contamination on  both $\{y_i\}_{i=1}^n$ and  $\{x_i\}_{i=1}^n$  by adversarial noise.
\cite{CheCarMan2013Robust} provided pioneering research, however, their error bound is not sharp. 
\cite{LiuSheLiCar2020High} derived a relatively sharper error bound compared to \cite{CheCarMan2013Robust}; however, their method requires the information about the $\ell_2$ norm of the length true coefficient vector $\vecbeta^*$, which is not required for out method.
\cite{Gao2020Robust} considered various regression problems with  $\{y_i\}_{i=1}^n$  and  $\left\{\vecx_i\right\}_{i=1}^n$  sampled from normal distributions with Huber's contamination. \cite{Gao2020Robust} derived the learning limit and error bound which coincides to the limit up to a constant factor.
\cite{Gao2020Robust} applied `Tukey's half-space depth' and consumed exponential computational complexity.

\subsubsection{Sparse estimation under heavy-tailed distribution}
\cite{SunZhoFan2020Adaptive} dealt with linear regression problem when the random noises are drawn from a heavy-tailed distribution.
\cite{FanWanZhu2021Shrinkage} considered regression problems including matrix compressed sensing, lasso, matrix completion and reduced-rank regression with heavy-tailed  covariates and random noises, proposing a new `shrinkage' estimator.
\cite{Chi2020Erm} considered linear regression problem with heavy-tailed random noises contaminated by some kinds of outliers.

\subsubsection{SubGaussian estimator}
As we refer to in Section \ref{intro}, $\delta$, probability of success, appears in our error bounds like $\sqrt{\log (1/\delta)/n}$.
Estimators who have error bounds like this are called `subGaussian estimators'. For example, 
\cite{DevLerLugOli2016Sub}, \cite{Min2018Sub}, \cite{LugMen2019Sub}, \cite{Chi2020Erm}, \cite{CheYesFlaBar2019Fast}, \cite{LeiLuhVenZha2020Fast}, \cite{Gao2020Robust}, \cite{LugMen2021Robust}
constructed sub-Gaussian estimators which have some kinds of robustness for outliers or heavy-tailed distributions.

\subsubsection{Dependency between $\{\vecx_i\}_{i=1}^n$ and $\{\xi_i\}_{i=1}^n$}
Our methods and methods of \cite{Tho2020Outlier} do not require the independence of $\{\vecx_i\}_{i=1}^n$ and $\{\xi_i\}_{i=1}^n$. Instead of the independence, our method requires $\mbb{E}h\left(\frac{\xi_i}{\lambda_o\sqrt{n}}\right) \vecx_i = 0$, and methods of \cite{Tho2020Outlier} requires $\mbb{E}\xi_i\vecx_i = 0$. 
Previous works dealt with robust linear regression \cite{DiaKonSte2019Efficient,BakPra2021Robust,LiuSheLiCar2020High,Gao2020Robust}  under the independence of covariates and random noises.

\subsection{Adversarial matrix completion}
Matrix completion was considered in \cite{NegWai2012Restricted} and \cite{KolLouTsy2011Nuclear}. Since then, and numerous works (\cite{RohTsy2011Estimation},  \cite{Klo2014Noisy}, \cite{CaiZhou2016Matrix} and so on) have been conducted.
The adversarial matrix completion was considered by \cite{Tho2020Outlier} with subGaussian noises. The error bound about adversarial noise term of the result of \cite{Tho2020Outlier} is of the order $\sqrt{\frac{o}{n} \log \frac{n}{o}}$, however, our result is slightly sharper as we stated in Remark \ref{r:mcsubW}. However, as in the case of adversarial lasso, the method of \cite{Tho2020Outlier} does not require the information $o$ for the tuning parameter $\lambda_*$, although our method does.

Our method  also works  when the random noises are sampled from a heavy-tailed distribution,  and we find out that our error bound about the error term varies according to the thickness of the error distribution. In the case of adversarial matrix compressed sensing or adversarial lasso, this phenomenon does not be confirmed.
\cite{ElsVan2018Robust, FanWanZhu2021Shrinkage} considered non-adversarial heavy-tailed noises.
Their error bound is $O(\sqrt{rd_{mc} \{\log d_{mc}+\log (1/\delta)\}/n})$ and requires no additional condition such that $\lambda_o\sqrt{d_{mc} \{\log d_{mc} + \log (1/\delta)\}}\leq 1$, which is required in our method. Ensuring robustness against both heavy-tailed noise and adversarial noise with  weaker additional conditions is left as a future study.

On the other hand, although we consider the matrix completion as a partial problem of the trace regression in the present paper, there is another formulation of matrix completion, which is based on the Bernoulli model (\cite{CanRec2009Exact}, \cite{Cha2015Matrix}, \cite{Klo2015matrix}, \cite{CheXuCarSan2015Matrix}, \cite{JaiNet2015Fast}, \cite{CheGupJai2017Nearly}) and so on.
Matrix completion based on the Bernoulli model is out of scopes of the present paper, however, we give a brief introduction.
In matrix completion based on the Bernoulli model, each entry of $B^*+N$, where $N$ is a matrix of random noise, is observed independently of the other entries with probability $n/(d_1d_2)$.
In other words, assume $S_{ij}$ are i.i.d. Bernoulli random variables of parameter $p$ which is independent of $N$, and we can observe $Y$ with
\begin{align}
	Y_{ij} = S_{ij}(B^*_{ij} + N_{ij})\,\quad 1\leq i \leq d_1,\,1\leq j\leq d_2.
\end{align}
As pointed out by \cite{CarKloLofNik2018Adaptive}, the major difference between matrix completion as a trace regression and matrix completion based on the Bernoulli model is that matrix completion as a trace regression admit multiple sampling of each entry, however,  in the case of matrix completion based on the Bernoulli model, each entry is sampled at most once.

\subsection{Other topics}
There are other topics that the present paper does not deal with; e.g.  reduced-rank regression (multi-task regression) \cite{KimXin2012Tree,VelRei2013Multivariate,SheChe2017Robust}.
Both in theory and in applications, it is important to deepen the robustness aspect of these topics. It will be a future subject.

\section{Main theorem, key propositions, lemmas and corollaries}
\label{sec:key}
We introduce the main theorem, key propositions, lemmas and corollaries.
The adversarial lasso is a special case of adversarial matrix compressed sensing because, like the argument of Section 2.2 of \cite{FanWanZhu2021Shrinkage}, linear regression is a special case of trace regression, where $B^*$ and $X_i,\,i=1,\cdots,n$ are diagonal matrices. We note that for diagonal matrix $M \in \mbb{R}^{d \times d}$, we see that  $\|X_i\|_\mr{op} = \|\mr{diag}(X_i)\|_\infty$ and $\|X_i\|_* = \|\mr{diag}(X_i)\|_1$.
Therefore, in Section \ref{sec:key}, we discuss the main theorem, adversarial matrix compressed sensing and adversarial matrix completion.
The value of the numerical constant $C$ shall be allowed to change from line to line. 

\subsection{Main theorem}
We introduce our main theorem in an informal style.
The precise statements is seen in Theorem \ref{t:det:main}.
\begin{theorem}
	\label{maintheorem}
	Consider the optimization problem \eqref{obj2-intro-m}.
	Suppose that $\{\xi_i,X_i\}_{i=1}^n$ and $\lambda_*,\lambda_o$ satisfies the following inequalities:
	\begin{align}
		\label{ine:det:main1}
		&\left| \frac{\lambda_o}{\sqrt{n}}\sum_{i=1}^n h\left(\frac{\xi_i}{\lambda_o \sqrt{n}} \right) \langle  X_i, M \rangle \right| \leq a_{\mr{F}} r_{a,\mr{F}} \left\| \mr{T}_\Sigma (M)\right\|_\mr{F} + a_*r_{a,*}\|M\|_*,\\
		\label{ine:det:main2}
		&\left| \sum_{i=1}^n \frac{\lambda_o}{\sqrt{n}}u_i \langle X_i, M\rangle\right|
		\leq b_{\mr{F}} r_{b,\mr{F}}\left\| \mr{T}_\Sigma (M)\right\|_\mr{F} + b_*r_{b,*}\|M\|_*,\\
		\label{ine:det:main3}
		&c_1 \| \mr{T}_\Sigma (M)\|_\mr{F}^2-c_2r_{c,\mr{F}}\|\mr{T}_\Sigma (M)\|_{\mr{F}} -c_3 r_{c}\leq \lambda_o^2\sum_{i=1}^n \left\{	-h \left(\frac{\xi_i-\langle X_i,M\rangle}{\lambda_o\sqrt{n}} \right)+h  \left(\frac{\xi_i}{\lambda_o\sqrt{n}}  \right)\right\} \frac{\langle X_i,M\rangle}{\lambda_o \sqrt{n}}
	\end{align}
	where $c_1>0, r_{a,\mr{F}},r_{a,*},r_{b,\mr{F}},r_{b,*},a_\mr{F},a_*,b_\mr{F},b_*,c_2,c_2, r_{\mr{F},c},r_c \geq 0$ are some numbers and  $\vecu = \left(u_1,\cdots,u_n\right)^\top$ is some $o$-sparse vector  such that $\|\vecu\|_\infty\leq c$ with some numerical constant  $c$ and $M \in \mbb{R}^{d_1 \times d_2}$ is some matrix such that $\|M\|_{\mr{F}} = r_0$ for some number $r_0$ satisfying \eqref{ine:det:main5}.
	Suppose that $\lambda_*$ satisfy 
	\begin{align}
		\label{ine:det:main4}
		\lambda_*-C_s>0,
	\end{align} 
	where
	\begin{align}
		C_s =\frac{a_\mr{F}r_{a,\mr{F}}+\sqrt{2}b_\mr{F}r_{b,\mr{F}}}{c_\kappa \sqrt{r}} + (a_*r_{a,*}+\sqrt{2}b_*r_{b,*}).
	\end{align}
	Then, for the number $r_0$ such that 
	\begin{align}
			\label{ine:det:main5}
		\frac{c_2r_{c,\mr{F}} +C_{\lambda_*} +\sqrt{c_1c_3r_c}}{c_1}  < r_0,
	\end{align}
	 where
	\begin{align}
		C_{\lambda_*} = (a_\mr{F}r_{a,\mr{F}}+\sqrt{2}b_\mr{F}r_{b,\mr{F}}) + (a_*r_{a,*}+\sqrt{2}b_*r_{b,*})c_\kappa \sqrt{r} +\lambda_*  c_\kappa \sqrt{r},
	\end{align}
	the optimal solution $\hat{B}$ satisfies
	\begin{align}
	\label{ine:det:main6}
		\|\mr{T}_\Sigma(\hat{B} -B^*)\|_\mr{F} & \leq r_0.
	\end{align}
\end{theorem}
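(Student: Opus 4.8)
The statement is deterministic --- all probabilistic content has been offloaded into \eqref{ine:det:main1}--\eqref{ine:det:main3} --- so what remains is a convex-analytic argument of the classical lasso/nuclear-norm type. Since $H$ is convex and $\|\cdot\|_*$ is a norm, $B\mapsto\mr{obj}_H(B)$ is convex, and I would first reduce the claim to showing $\mr{obj}_H(B)>\mr{obj}_H(B^*)$ for every $B$ with $\|\mr{T}_\Sigma(B-B^*)\|_\mr{F}=r_0$ (reading the radius in the $\mr{T}_\Sigma$-metric, consistent with the conclusion): if $\hat B$ were a minimizer with $\|\mr{T}_\Sigma(\hat B-B^*)\|_\mr{F}>r_0$, then taking $\tilde B$ on the segment $[B^*,\hat B]$ with $\|\mr{T}_\Sigma(\tilde B-B^*)\|_\mr{F}=r_0$, convexity and $\mr{obj}_H(\hat B)\le\mr{obj}_H(B^*)$ would force $\mr{obj}_H(\tilde B)\le\mr{obj}_H(B^*)$, a contradiction.

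So fix such a $B$ and put $M=B-B^*$. Using $y_i-\langle X_i,B^*\rangle=\xi_i+\sqrt n\,\theta^*_i$, I would split $\mr{obj}_H(B)-\mr{obj}_H(B^*)=A(M)+\lambda_*(\|B^*+M\|_*-\|B^*\|_*)$, where $A(M)=\lambda_o^2\sum_{i=1}^n[H(\tfrac{\xi_i+\sqrt n\theta^*_i-\langle X_i,M\rangle}{\lambda_o\sqrt n})-H(\tfrac{\xi_i+\sqrt n\theta^*_i}{\lambda_o\sqrt n})]$. A one-dimensional convexity/curvature identity for $H$ (one of the key lemmas) peels off from each Huber term a linear part and a curvature part, giving
\[
A(M)\ \ge\ -\frac{\lambda_o}{\sqrt n}\sum_{i=1}^n h\!\left(\tfrac{\xi_i+\sqrt n\theta^*_i}{\lambda_o\sqrt n}\right)\langle X_i,M\rangle\ +\ (\text{curvature}),
\]
where the curvature sum is, up to a universal constant, the right-hand side of \eqref{ine:det:main3}, hence $\ge c_1\|\mr{T}_\Sigma(M)\|_\mr{F}^2-c_2 r_{c,\mr{F}}\|\mr{T}_\Sigma(M)\|_\mr{F}-c_3 r_c$. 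I would split the first sum into inliers and the $o$ contaminated indices: the inlier part is exactly $-\frac{\lambda_o}{\sqrt n}\sum_i h(\tfrac{\xi_i}{\lambda_o\sqrt n})\langle X_i,M\rangle$, bounded via \eqref{ine:det:main1}; the rest equals $-\frac{\lambda_o}{\sqrt n}\sum_i u_i\langle X_i,M\rangle$ with $u_i=h(\tfrac{\xi_i+\sqrt n\theta^*_i}{\lambda_o\sqrt n})-h(\tfrac{\xi_i}{\lambda_o\sqrt n})$ on contaminated indices and $0$ otherwise --- an $o$-sparse vector with $\|\vecu\|_\infty\le 2$ since $|h|\le1$ --- bounded via \eqref{ine:det:main2}.

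For the penalty, decomposability of $\|\cdot\|_*$ at $B^*$ gives $\|B^*+M\|_*-\|B^*\|_*\ge\|\mr{P}^\bot_{B^*}(M)\|_*-\|\mr{P}_{B^*}(M)\|_*$ and $\|M\|_*=\|\mr{P}_{B^*}(M)\|_*+\|\mr{P}^\bot_{B^*}(M)\|_*$; after substitution the coefficient of $\|\mr{P}^\bot_{B^*}(M)\|_*$ is $\lambda_*-(a_*r_{a,*}+\sqrt{2}\,b_*r_{b,*})>0$ by \eqref{ine:det:main4}. A case split then closes it: if $M$ violates the cone inequality $\|\mr{P}^\bot_{B^*}(M)\|_*\le c_0\|\mr{P}_{B^*}(M)\|_*$, the positive penalty term together with the curvature already yields $\mr{obj}_H(B)>\mr{obj}_H(B^*)$; if $M$ satisfies it, Definition~\ref{def:MRE} gives $\|\mr{P}_{B^*}(M)\|_*\le(\sqrt r/\kappa)\|\mr{T}_\Sigma(M)\|_\mr{F}$ and hence $\|M\|_*\le c_\kappa\sqrt r\,\|\mr{T}_\Sigma(M)\|_\mr{F}$, turning every remaining $*$-norm term into a $\|\mr{T}_\Sigma(M)\|_\mr{F}$-term with total coefficient $c_2 r_{c,\mr{F}}+C_{\lambda_*}$. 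On the sphere this leaves $\mr{obj}_H(B)-\mr{obj}_H(B^*)\ge c_1 r_0^2-(c_2 r_{c,\mr{F}}+C_{\lambda_*})r_0-c_3 r_c$, which is $>0$ once $r_0$ exceeds the larger root; by $\sqrt{x^2+y}\le x+\sqrt y$ the hypothesis $r_0>(c_2 r_{c,\mr{F}}+C_{\lambda_*}+\sqrt{c_1 c_3 r_c})/c_1$, i.e. \eqref{ine:det:main5}, suffices. This contradicts the reduction and proves the theorem.

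The delicate points are (a) the exact one-dimensional curvature identity for the Huber loss that matches $A(M)$ to the right-hand side of \eqref{ine:det:main3} --- this is where the clipping of $H$ and the calibration of $\lambda_o\sqrt n$ are used --- and (b) the bookkeeping in the last step: keeping the $\|\mr{P}^\bot_{B^*}(M)\|_*$-coefficient nonnegative, dispatching both the in-cone and out-of-cone cases, and matching constants to the stated $C_s$ and $C_{\lambda_*}$ (the $\sqrt{2}$ factors trace back to $\vecu$). The rest is routine.
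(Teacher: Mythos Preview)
Your overall strategy---reduce to the sphere via convexity of $\mr{obj}_H$, separate a linear piece and a curvature piece, split the linear piece over inliers/outliers, then invoke the MRE condition---is the right shape and parallels the paper closely. But the step you flag as ``delicate point~(a)'' is a genuine gap, not a routine calculation.

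When you linearize $A(M)$ at $a_i$, the leftover curvature is the Bregman divergence
\[
\sum_i\Bigl[H(a_i-t_i)-H(a_i)+h(a_i)t_i\Bigr],\qquad a_i=\tfrac{\xi_i+\sqrt n\,\theta_i^*}{\lambda_o\sqrt n},\quad t_i=\tfrac{\langle X_i,M\rangle}{\lambda_o\sqrt n}.
\]
Assumption~\eqref{ine:det:main3}, however, lower-bounds the \emph{symmetrized} expression $\sum_i[h(\cdot)-h(\cdot-t_i)]t_i$. Since $D_H(a-t,a)+D_H(a,a-t)=[h(a)-h(a-t)]t$ and both Bregman terms are nonnegative, each is \emph{at most} the symmetrized one---the wrong direction for your purpose. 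There is no universal constant the other way: for $a$ large and $t=a$ one has $D_H(0,a)=\tfrac12$ while $[h(a)-h(0)]t=a$. So you cannot invoke \eqref{ine:det:main3} for your curvature term. (A secondary point: your curvature still carries the contaminated $a_i$, so even with the right inequality you would need a second $o$-sparse correction, analogous to your $\vecu$, before \eqref{ine:det:main3} applies.)

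The paper avoids this by never comparing objective values. It works with the derivative $h$ throughout: convexity gives that $\eta\mapsto\sum_i\{-h(r_{B^*+\eta\hat\Theta,i})+h(r_{B^*,i})\}\langle X_i,\hat\Theta\rangle$ is nondecreasing, and first-order optimality at $\hat B$ cancels the value at $\eta=1$ against the penalty subgradient. The left side of the resulting inequality at the chosen $\eta$ is \emph{exactly} the expression in \eqref{ine:det:main3} (with contaminated residuals), and a further decomposition into a clean sum plus an $o$-sparse vector $\vecw'$ then makes \eqref{ine:det:main2} and \eqref{ine:det:main3} directly applicable. The cone condition is obtained first (Proposition~\ref{p:starMRE}) using only nonnegativity of this expression, so no out-of-cone case analysis is needed.
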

In the remaining of Section \ref{sec:key}, we introduce some inequalities to prove  \eqref{ine:det:main1}-\eqref{ine:det:main3}
are satisfied with high probability and with appropriate value of $c_1, r_{a,\mr{F}},r_{a,*},r_{b,\mr{F}}, $ $r_{b,*},a_\mr{F},a_*,b_\mr{F},b_*,c_2,c_2, r_{\mr{F},c},r_c$ under the Assumptions \ref{a:mcs}, \ref{a:lasso}, \ref{a:mc1} or \ref{a:mc2}.

\subsection{Adversarial matrix compressed sensing}
\label{sec:amcs}
In Section \ref{sec:amcs}, suppose that Assumption \ref{a:mcs} holds.
\begin{lemma}
	\label{l:suphx:main}
	For $0<\delta<1/7$ and for any matrix $M \in \mbb{R}^{d_1\times d_2}$, with probability at least $1-\delta$, we have 
	\begin{align}
		\label{ine:main:h1}
	\left|\frac{1}{n} \sum_{i=1}^n  h\left(\frac{\xi_i}{\lambda_o \sqrt{n}}\right) \langle X_i,M\rangle  \right|\leq C L  \left\{\rho\sqrt{\frac{d_1+d_2}{n}}\|M\|_{*}+\sqrt{\frac{\log(1/\delta)}{n}}\|\mr{T}_\Sigma(M)\|_{\mr{F}}\right\}.
	\end{align}
\end{lemma}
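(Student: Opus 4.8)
The plan is to view the quantity on the left of \eqref{ine:main:h1} as a multiplier (empirical) process in $M$ and bound its supremum uniformly. Write $w_i := h\!\left(\tfrac{\xi_i}{\lambda_o\sqrt n}\right)$; since $h = H'$ is the Huber score we have $|w_i|\le 1$ pointwise, and by Assumption \ref{a:mcs}(iii) we have $\mathbb{E}[w_iX_i]=0$, so $Q(M):=\tfrac1n\sum_i w_i\langle X_i,M\rangle = \big\langle \tfrac1n\sum_i w_iX_i,\;M\big\rangle$ is a centered process. Using $\mathrm{vec}(X_i)=\Sigma^{1/2}\mathrm{vec}(Z_i)$ I would rewrite $\langle X_i,M\rangle=\langle Z_i,\mathrm{T}_\Sigma(M)\rangle$; since $Z_i$ is $L$-subGaussian with $\mathbb{E}\mathrm{vec}(Z_i)\mathrm{vec}(Z_i)^\top$ acting as the identity on $\mathrm{range}(\Sigma)$, the variable $\langle X_i,M\rangle$ is centered with $\mathbb{E}\langle X_i,M\rangle^2=\|\mathrm{T}_\Sigma(M)\|_\mathrm{F}^2$ and $\|\langle X_i,M\rangle\|_{\psi_2}\le L\|\mathrm{T}_\Sigma(M)\|_\mathrm{F}$; multiplying by the bounded $w_i$ only shrinks the $\psi_2$-norm, so the multiplier summand $w_i\langle X_i,M\rangle$ is $L\|\mathrm{T}_\Sigma(M)\|_\mathrm{F}$-subGaussian. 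Both sides of \eqref{ine:main:h1} are $1$-homogeneous in $M$, and $Q(M)=0$ whenever $\mathrm{T}_\Sigma(M)=0$; hence it suffices to establish the claimed bound uniformly over $\{M:\|M\|_*\le1\}$.

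\textbf{Peeling over $\|\mathrm{T}_\Sigma(M)\|_\mathrm{F}$ plus a chaining bound.}
The two-norm shape of the right-hand side (complexity attached to $\|M\|_*$, noise level $\sqrt{\log(1/\delta)/n}$ attached to $\|\mathrm{T}_\Sigma(M)\|_\mathrm{F}$) is handled by a dyadic peeling argument. For integers $j$ set $S_j:=\{M:\|M\|_*\le1,\ 2^{j-1}<\|\mathrm{T}_\Sigma(M)\|_\mathrm{F}\le 2^j\}$; only $O(\log n)$ shells are nonempty (together with a base shell of very small $\|\mathrm{T}_\Sigma(M)\|_\mathrm{F}$, which is controlled directly since there $|Q(M)|\le\|\tfrac1n\sum w_iX_i\|_{\mathrm{op}}\|M\|_*$ already suffices). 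On each $S_j$ I would invoke the high-probability bound for suprema of multiplier processes of $L$-subGaussian type (the generic-chaining estimate recorded among the auxiliary results of Section \ref{sec:key}): with probability at least $1-\delta'$,
\begin{align}
\sup_{M\in S_j}|Q(M)|\ \le\ \frac{CL}{\sqrt n}\Big(\gamma_2\big(S_j,\ \|\mathrm{T}_\Sigma(\cdot-\cdot)\|_\mathrm{F}\big)\ +\ \big(\sup_{M\in S_j}\|\mathrm{T}_\Sigma(M)\|_\mathrm{F}\big)\sqrt{\log(1/\delta')}\Big).
\end{align}
Bounding $\gamma_2(S_j,\cdot)\le\gamma_2(\{\|M\|_*\le1\},\|\mathrm{T}_\Sigma(\cdot-\cdot)\|_\mathrm{F})$ and $\sup_{M\in S_j}\|\mathrm{T}_\Sigma(M)\|_\mathrm{F}\le 2^j\le 2\|\mathrm{T}_\Sigma(M)\|_\mathrm{F}$ for every $M\in S_j$, then choosing $\delta'=\delta/(\#\text{shells})$ and taking a union bound (the logarithmic factor from the number of shells is absorbed into the numerical constant), I obtain, uniformly over $\|M\|_*\le1$ and hence over all $M$,
\begin{align}
|Q(M)|\ \le\ CL\Big(\tfrac{1}{\sqrt n}\,\gamma_2\big(\{\|\cdot\|_*\le1\},\|\mathrm{T}_\Sigma(\cdot)\|_\mathrm{F}\big)\,\|M\|_*\ +\ \sqrt{\tfrac{\log(1/\delta)}{n}}\,\|\mathrm{T}_\Sigma(M)\|_\mathrm{F}\Big).
\end{align}

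\textbf{The geometric estimate.}
It remains to show $\gamma_2\big(\{M:\|M\|_*\le1\},\ \|\mathrm{T}_\Sigma(\cdot)\|_\mathrm{F}\big)\le C\rho\sqrt{d_1+d_2}$. By Talagrand's majorizing-measure theorem this $\gamma_2$ functional is, up to a universal constant, $\mathbb{E}\sup_{\|M\|_*\le1}\langle G,\mathrm{T}_\Sigma(M)\rangle$ for a standard Gaussian matrix $G$; by self-adjointness of $\mathrm{T}_\Sigma$ and trace duality this equals $\mathbb{E}\|\mathrm{T}_\Sigma(G)\|_{\mathrm{op}}$, the expected operator norm of the (correlated) Gaussian ensemble with $\mathrm{vec}$-covariance $\Sigma$, and a standard bound on this expectation produces the factor $\rho\sqrt{d_1+d_2}$, with $\rho^2$ the maximal diagonal entry of $\Sigma$. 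Substituting this into the previous display yields \eqref{ine:main:h1}.

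\textbf{Expected main obstacle.}
The delicate point is precisely the geometric estimate of the last paragraph: controlling the Gaussian complexity of the nuclear-norm ball measured in the $\Sigma$-distorted Frobenius metric by $\rho\sqrt{d_1+d_2}$ — i.e.\ verifying that it is the maximal diagonal of $\Sigma$, and not a cruder quantity such as $\|\Sigma\|_{\mathrm{op}}$, that governs it — is the heart of the matter and must rely on the structural assumptions in play; this is the role of the operator-norm / chaining lemmas for $L$-subGaussian matrix ensembles collected among the key propositions. A secondary but necessary ingredient is the clean high-probability (not merely in-expectation) supremum bound for the multiplier process with bounded multipliers $w_i$ and $L$-subGaussian $X_i$, together with the bookkeeping in the peeling step that correctly separates the $\|M\|_*$-scaled complexity term from the $\|\mathrm{T}_\Sigma(M)\|_\mathrm{F}$-scaled deviation term.
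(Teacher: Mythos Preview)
Your proposal is correct and follows essentially the same route as the paper's proof. Both arguments (i) observe that the bounded multiplier $w_i=h(\xi_i/(\lambda_o\sqrt n))$ makes $w_i\langle X_i,M\rangle$ an $L$-subGaussian process in $M$ with increments controlled by $\|\mr{T}_\Sigma(\cdot)\|_\mr{F}$, (ii) apply a generic-chaining / subGaussian-supremum bound (the paper cites Exercise~8.6.5 of Vershynin), (iii) identify the complexity term with $\mbb{E}\|\mr{T}_\Sigma(G)\|_{\mr{op}}\le C\rho\sqrt{d_1+d_2}$ via trace duality and Lemma~H.1 of Negahban--Wainwright, and (iv) finish with a peeling device. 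The only cosmetic difference is the direction of peeling: the paper fixes $\|\mr{T}_\Sigma(M)\|_\mr{F}=1$ and peels over the level $r_*$ of $\|M\|_*$ (invoking Lemma~5 of Dalalyan--Thompson), whereas you normalize $\|M\|_*\le 1$ and peel over $\|\mr{T}_\Sigma(M)\|_\mr{F}$. One small correction: your claim that ``only $O(\log n)$ shells are nonempty'' is not literally right (the range of $\|\mr{T}_\Sigma(M)\|_\mr{F}$ on the nuclear ball depends on $\Sigma$, not $n$), but this is exactly what a proper peeling lemma handles by allocating $\delta_j$ geometrically rather than uniformly; replacing your union-bound sketch with the Dalalyan--Thompson peeling device removes the issue and also explains the restriction $0<\delta<1/7$ in the statement.
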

\begin{remark}
	\cite{Tho2020Outlier} derived an upper bound of 
	\begin{align}
		\left|\frac{1}{n} \sum_{i=1}^n \xi_i\langle X_i,M\rangle  \right|.
	\end{align}
	The L.H.S. of \eqref{ine:main:h1} is an extended one because 
	$\{\xi_i\}_{i=1}^n$ is `wrapped' by the differential of the Huber loss function.
	Lemma \ref{ine:main:h1} enables us to deal with a heavy-tailed $\{\xi_i\}_{i=1}^n$. 
\end{remark}
\begin{corollary}[Corollary of Proposition 4 of \cite{Tho2020Outlier}]
	\label{ac:|uMv|-cs:main}
	We have for any $o$-sparse vector $\vecu \in \mbb{R}^n$ and any matrix $M \in \mbb{R}^{d_1 \times d_2}$,
	\begin{align}
	\left| \sum_{i=1}^n u_i \frac{1}{\sqrt{n}} \langle X_i, M \rangle\right| & \leq CL\left(\frac{1+\sqrt{\log (1/\delta)}}{\sqrt{n}}\left\| \mr{T}_\Sigma (M)\right\|_\mr{F} + \rho \sqrt{\frac{d_1+d_2}{n}}\|M\|_* +   \sqrt{\frac{o}{n} \log \frac{n}{o}}\| \mr{T}_\Sigma (M)\|_\mr{F}\right) \|\vecu\|_2
	\end{align}
	with probability at least $1-\delta$.
\end{corollary}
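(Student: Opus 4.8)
The plan is to first strip off $\vecu$ by Cauchy--Schwarz and then bound a single bilinear empirical process by generic chaining. Since $\vecu$ is $o$-sparse with some support $S$, one has
\[
\left|\sum_{i=1}^n u_i\,\tfrac{1}{\sqrt n}\langle X_i,M\rangle\right|\le\frac{\|\vecu\|_2}{\sqrt n}\Big(\sum_{i\in S}\langle X_i,M\rangle^2\Big)^{1/2}\le\frac{\|\vecu\|_2}{\sqrt n}\,\sqrt{Q(M)},\qquad Q(M):=\max_{|T|=o}\sum_{i\in T}\langle X_i,M\rangle^2,
\]
so it suffices to prove that, with probability at least $1-\delta$, $\sqrt{Q(M)}\le CL\big((1+\sqrt{\log(1/\delta)}+\sqrt{o\log(n/o)})\,\|\mr{T}_\Sigma(M)\|_\mr{F}+\rho\sqrt{d_1+d_2}\,\|M\|_*\big)$ holds simultaneously for all $M$. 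I would then observe that $\sqrt{Q(M)}=\sup\{\sum_i w_i\langle X_i,M\rangle:\ \vecw\ o\text{-sparse},\ \|\vecw\|_2\le1\}$, so the quantity to control is the supremum of the bilinear form $(\vecw,M)\mapsto\tfrac1{\sqrt n}\sum_i w_i\langle X_i,M\rangle$ over the product of the $o$-sparse Euclidean ball and a set $\{\|\mr{T}_\Sigma(M)\|_\mr{F}\le r_{\mr F},\ \|M\|_*\le r_*\}$ --- which is exactly the process bounded in Proposition~4 of \cite{Tho2020Outlier}. Writing $\langle X_i,M\rangle=\langle\mr{vec}(Z_i),\Sigma^{1/2}\mr{vec}(M)\rangle=\langle Z_i,\mr{T}_\Sigma(M)\rangle$ and using Assumption~\ref{a:mcs}(i), each $\langle X_i,M\rangle$ is centered, satisfies $\|\langle X_i,M\rangle\|_{\psi_2}\le L\|\mr{T}_\Sigma(M)\|_\mr{F}$, and has $\mbb{E}\langle X_i,M\rangle^2=\|\mr{T}_\Sigma(M)\|_\mr{F}^2$.

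The core of the argument is the chaining bound. Because the $Z_i$ are $L$-subGaussian, this bilinear form is a subGaussian process whose increments are dominated, on the ball in question, by $\tfrac{L}{\sqrt n}\big(\|\mr{T}_\Sigma(M-M')\|_\mr{F}+r_{\mr F}\|\vecw-\vecw'\|_2\big)$, so by the majorizing-measure bound of \cite{Tal2014Upper} its expected supremum is controlled by the associated $\gamma_2$-functional, which splits (by subadditivity over this sum of metrics) into an $M$-part and a $\vecw$-part. I expect the $\gamma_2$-functional of the nuclear ball $\{\|M\|_*\le r_*\}$ in the $\mr{T}_\Sigma$-metric to be of order $\rho\sqrt{d_1+d_2}\,r_*$ (the Gaussian width of the nuclear ball, the $\rho$ coming from the largest diagonal entry of $\Sigma$), producing the $\rho\sqrt{(d_1+d_2)/n}\,\|M\|_*$ term; the $\gamma_2$-functional of the $o$-sparse Euclidean ball to be of order $\sqrt{o\log(en/o)}$ (an $\varepsilon$-net union-bounded over the $\binom{n}{o}$ supports), producing the $\sqrt{(o/n)\log(n/o)}\,\|\mr{T}_\Sigma(M)\|_\mr{F}$ term; and evaluation at a single point to contribute $\tfrac1{\sqrt n}\|\mr{T}_\Sigma(M)\|_\mr{F}$. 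Talagrand's concentration inequality for suprema of subGaussian empirical processes then converts the expectation bound into a $1-\delta$ bound, the weak variance $\tfrac1n\|\mr{T}_\Sigma(M)\|_\mr{F}^2$ yielding the deviation term $\sqrt{\log(1/\delta)/n}\,\|\mr{T}_\Sigma(M)\|_\mr{F}$. Finally a routine peeling argument over dyadic values of $r_{\mr F}$ and $r_*$ would remove the restriction to a fixed ball and give the bound uniformly over all $M$, the extra logarithmic factors being absorbed into $C$.

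The hard part is this chaining step: one must handle the product geometry of the $o$-sparse Euclidean ball and the nuclear-norm ball at once, keep each contribution attached to the correct norm of $M$ (the $o$-sparse part multiplying $\|\mr{T}_\Sigma(M)\|_\mr{F}$, the low-rank part multiplying $\|M\|_*$), and use the full $L$-subGaussian hypothesis rather than coordinate-wise subGaussianity so that Talagrand's machinery applies; the gain over \cite{NguTra2012Robust,DalTho2019Outlier} is exactly the passage from $\sqrt o$ (resp.\ $o/n$) to the $\sqrt{o\log(n/o)}$ scaling, which is the $\gamma_2$-functional of the $o$-sparse ball. Since all of this is the content of Proposition~4 of \cite{Tho2020Outlier}, in practice I would invoke that proposition directly for the matrix bilinear form and supply only the Cauchy--Schwarz reduction and the peeling argument above.
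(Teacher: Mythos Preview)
Your approach matches the paper's: both ultimately invoke Proposition~4 of \cite{Tho2020Outlier} for the bilinear form $(\vecu,M)\mapsto\tfrac{1}{\sqrt n}\sum_i u_i\langle X_i,M\rangle$. The paper's proof is shorter, though: it applies Proposition~4 directly (that proposition already holds uniformly in $\vecu$ and $M$, so neither your Cauchy--Schwarz step---which, since the bound is homogeneous in $\vecu$, is just a restatement---nor your peeling step is needed), and then specializes the two Gaussian-width terms appearing there via $\mbb{E}\|Z'\|_{\mr{op}}\lesssim\rho\sqrt{d_1+d_2}$ (Lemma~H.1 of \cite{NegWai2011Estimation}) for the nuclear-ball contribution and Remark~4 of \cite{DalTho2019Outlier} for the $o$-sparse contribution, which gives the $\sqrt{(o/n)\log(n/o)}$ factor directly.
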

\begin{remark}
	Corollary \ref{ac:|uMv|-cs:main} is  derived by combining Proposition 4 of \cite{Tho2020Outlier} and Remark 4 of \cite{DalTho2019Outlier}. Proposition 4 of \cite{Tho2020Outlier} also plays an important role in both \cite{Tho2020Outlier} and our analysis, however the roles in \cite{Tho2020Outlier} and our analysis are different because we analyse \eqref{obj2-intro-m} and \cite{Tho2020Outlier} analyses \eqref{model:obj2-intro-m}.
\end{remark}

\begin{proposition}
	\label{p:sc1:main}
	Let  
	\begin{align}
	\mc{R}_{mcs} = \left\{ \Theta \in \mbb{R}^{d_1 \times d_2}\, |\,  \|\Theta\|_*\leq c_\kappa \|\mr{T}_\Sigma (\Theta) \|_\mr{F},\, \|\mr{T}_\Sigma (\Theta) \|_\mr{F} =r_{mcs} \right\},
	\end{align}
	where $r_{mcs}$ is a positive number such that $r_{mcs} \leq \frac{1}{4\sqrt{3} L^2}$.
	Assume that $\lambda_o  \sqrt{n} \geq 72L^4\sigma$.
	Then, with probability at least $1-\delta$, we have
	\begin{align}
		& \inf_{\Theta \in \mc{R}_{mcs}}\left[\sum_{i=1}^n \lambda_o^2 \left\{-h\left(\frac{\xi_i + \langle  X_i, \Theta \rangle}{\lambda_o\sqrt{n}}\right)+h \left(\frac{\xi_i}{\lambda_o\sqrt{n}}\right) \right\}\frac{\langle   X_i,\Theta\rangle }{\lambda_o \sqrt{n}}\right]\nonumber \\
		&\geq \frac{1}{3}\|\mr{T}_\Sigma(\Theta)\|_{\mr{F}}^2-C\left(L\rho c_\kappa \sqrt{r\frac{d_1+d_2}{n}}+\sqrt{8\frac{\log(1/\delta)}{n}} \right)\|\mr{T}_\Sigma (\Theta)\|_{\mr{F}} -5\frac{\log(1/\delta)}{n}.
\end{align} 
\end{proposition}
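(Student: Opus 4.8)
The plan is to recognize the expression minimized over $\mc{R}_{mcs}$ as a symmetrized Bregman increment of the Huber loss along $\Theta$ at the clean residuals $\{\xi_i\}$ --- which is nonnegative since $h=H'$ is nondecreasing --- and to prove a restricted-strong-convexity (RSC) lower bound for it. First I would make a deterministic pointwise reduction to a truncated empirical quadratic form; then bound the untruncated form below by RSC for the $L$-subGaussian matrix design; then control the two remainder terms created by the truncation, one by Corollary~\ref{ac:|uMv|-cs:main} and one by the gap in the constants; and finally calibrate the numerical constants so that the accumulated losses still leave $\tfrac13\|\mr{T}_\Sigma(\Theta)\|_\mr{F}^2$ in front.

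For the pointwise step I would use that $h(s)=s$ for $|s|\le 1$ and $h'=\mathbbm{1}_{[-1,1]}$ a.e., which yields the elementary inequality $\{h(a)-h(a-m)\}\,m\ge \mathbbm{1}\{|a|\le\tfrac12\}\,\min(|m|,\tfrac12)^2$ for all $a,m\in\mbb{R}$. Applying this to the $i$-th summand with $a=\xi_i/(\lambda_o\sqrt{n})$ and $m=\langle X_i,\Theta\rangle/(\lambda_o\sqrt{n})$ and multiplying by $\lambda_o^2$, the summand is at least $n^{-1}\mathbbm{1}\{|\xi_i|\le\tau\}\min(|\langle X_i,\Theta\rangle|,\tau)^2$ with $\tau:=\lambda_o\sqrt{n}/2$. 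Bounding $\mathbbm{1}\{|\xi_i|\le\tau\}\ge 1-\mathbbm{1}\{|\xi_i|>\tau\}$ and $\min(|x|,\tau)^2\ge x^2-x^2\mathbbm{1}\{|x|>\tau\}$, the expression is at least $(A)-(B)-(C)$, where $(A)=n^{-1}\sum_i\langle X_i,\Theta\rangle^2$, $(B)=n^{-1}\sum_i\langle X_i,\Theta\rangle^2\mathbbm{1}\{|\langle X_i,\Theta\rangle|>\tau\}$ and $(C)=n^{-1}\sum_i\langle X_i,\Theta\rangle^2\mathbbm{1}\{|\xi_i|>\tau\}$.

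For $(A)$, an $L$-subGaussian matrix quadratic-process bound obtained by generic chaining (the device flagged in the remark following the definition of an $L$-subGaussian vector) gives, on an event of probability at least $1-\delta$ and uniformly over the cone $\{\|\Theta\|_*\le c_\kappa\|\mr{T}_\Sigma(\Theta)\|_\mr{F}\}$, that $(A)\ge c_A\|\mr{T}_\Sigma(\Theta)\|_\mr{F}^2-C\big(L\rho c_\kappa\sqrt{r(d_1+d_2)/n}+\sqrt{\log(1/\delta)/n}\big)\|\mr{T}_\Sigma(\Theta)\|_\mr{F}-C\log(1/\delta)/n$ for an absolute constant $c_A\ge\tfrac12$; the cone constraint, through the MRE condition, is what converts a $\|\Theta\|_*$ in the chaining bound into a multiple of $c_\kappa\sqrt{r}\,\|\mr{T}_\Sigma(\Theta)\|_\mr{F}$. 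For $(C)$, since $\mbb{E}|\xi_i|\le\sigma$ and $\tau\ge 36L^4\sigma$, Markov's inequality and a binomial tail bound give $\bar o:=\#\{i:|\xi_i|>\tau\}\le n/(36L^4)+C\log(1/\delta)$ with probability at least $1-\delta$; on that event I would apply Corollary~\ref{ac:|uMv|-cs:main} with $M=\Theta$ and the $\bar o$-sparse vector $\vecu=(\langle X_i,\Theta\rangle\mathbbm{1}\{|\xi_i|>\tau\})_{i=1}^n$, for which $\sum_i u_i\langle X_i,\Theta\rangle/\sqrt{n}=\|\vecu\|_2^2/\sqrt{n}$, so that $(C)=\|\vecu\|_2^2/n$ is at most $CL^2$ times the square of the bracket in that corollary; on $\mc{R}_{mcs}$ (where $\|\Theta\|_*\le c_\kappa r_{mcs}$, $\|\mr{T}_\Sigma(\Theta)\|_\mr{F}=r_{mcs}\le\tfrac1{4\sqrt 3 L^2}$, and $\bar o/n\le\tfrac1{36L^4}+C\log(1/\delta)/n$) this produces a controlled fraction of $\|\mr{T}_\Sigma(\Theta)\|_\mr{F}^2$ together with the linear term $L\rho c_\kappa\sqrt{r(d_1+d_2)/n}\,\|\mr{T}_\Sigma(\Theta)\|_\mr{F}$, the term $\sqrt{\log(1/\delta)/n}\,\|\mr{T}_\Sigma(\Theta)\|_\mr{F}$, and the additive $\log(1/\delta)/n$. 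For $(B)$, since $\|\langle X_i,\Theta\rangle\|_{\psi_2}\le L\|\mr{T}_\Sigma(\Theta)\|_\mr{F}\le\tfrac1{4\sqrt 3 L}$ while $\tau\ge 36L^4\sigma$ is incomparably larger, $\langle X_i,\Theta\rangle^2\mathbbm{1}\{|\langle X_i,\Theta\rangle|>\tau\}$ has a super-exponentially small mean, and a crude net over the cone (or a Bernstein bound after $|\langle X_i,\Theta\rangle|\le\|X_i\|_\mr{op}\|\Theta\|_*$) absorbs $(B)$ into the $\log(1/\delta)/n$ term. Collecting $(A)-(B)-(C)$ and choosing the constants --- in particular the thresholds $72L^4\sigma$ and $\tfrac1{4\sqrt 3 L^2}$, and rescaling $\delta$ by an absolute factor absorbed into the logarithmic terms --- so that the fractions of $\|\mr{T}_\Sigma(\Theta)\|_\mr{F}^2$ subtracted from $c_A$ in $(A)$ sum to at most $c_A-\tfrac13$ gives the claimed bound on $\mc{R}_{mcs}$.

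The hard part will be the uniform RSC estimate for $(A)$ --- obtaining the $\rho\sqrt{(d_1+d_2)/n}$ complexity rate with the correct dependence on $L$, uniformly over the nuclear-norm restriction cone --- which is precisely where the $L$-subGaussian generic-chaining machinery is needed; everything else (the pointwise convexity inequality, the counting of the $\xi$-heavy indices, and the arithmetic that makes the numerical fractions add up) is routine once that bound and Corollary~\ref{ac:|uMv|-cs:main} are in hand.
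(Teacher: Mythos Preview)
Your decomposition into $(A)-(B)-(C)$ is natural, but the treatment of $(B)$ has a genuine gap. After your pointwise inequality you land on the truncation level $\tau=\lambda_o\sqrt n/2$ for \emph{both} the $\xi$-indicator and the $\langle X_i,\Theta\rangle$-truncation, and you then argue that $\{|\langle X_i,\Theta\rangle|>\tau\}$ is super-exponentially rare because ``$\|\langle X_i,\Theta\rangle\|_{\psi_2}\le 1/(4\sqrt3 L)$ while $\tau\ge 36L^4\sigma$ is incomparably larger.'' But $\sigma$ bounds the first absolute moment of the \emph{noise} $\xi_i$ and carries no information about the scale of $\langle X_i,\Theta\rangle$; the hypothesis $\lambda_o\sqrt n\ge 72L^4\sigma$ allows $\tau$ to be arbitrarily small (take $\sigma\to 0$ with $\lambda_o\sqrt n=72L^4\sigma$), in which case $(B)$ can be as large as $(A)$ and your lower bound collapses to something of order $\tau^2\ll r_{mcs}^2$. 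The paper avoids this by truncating $\langle X_i,\Theta\rangle$ at the \emph{fixed} level $1/2$: its indicator event is $\{|v_i|\le 1/(2\lambda_o\sqrt n)\}=\{|\langle X_i,\Theta\rangle|\le 1/2\}$, and the auxiliary function $\varphi$ has its knee there --- a level that is large relative to $Lr_{mcs}\le 1/(4\sqrt3 L)$ precisely because of the hypothesis $r_{mcs}\le 1/(4\sqrt3 L^2)$.

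A second, softer issue concerns $(C)$. Even with the right truncation, bounding $(C)$ via Corollary~\ref{ac:|uMv|-cs:main} with the self-referential choice $u_i=\langle X_i,\Theta\rangle\mathbbm{1}\{|\xi_i|>\tau\}$ yields a term of size $CL^2\cdot(\bar o/n)\log(n/\bar o)\cdot\|\mr{T}_\Sigma(\Theta)\|_\mr{F}^2$ with $\bar o/n\lesssim 1/(36L^4)$, i.e.\ an \emph{uncontrolled constant multiple} of the quadratic; whether $1/3$ survives then depends on the unspecified absolute constant $C$ in that corollary, which is not yours to tune. The paper's route sidesteps both problems by bounding the $\xi$-tail and $X$-tail contributions at the level of \emph{expectation} (H\"older plus Markov on fourth moments, producing explicit constants matched to the thresholds $72L^4\sigma$ and $1/(4\sqrt3 L^2)$), and pushing all the empirical-process work into the \emph{deviation} $\Delta=\sup_{\Theta\in\mc R_{mcs}}|\sum f_i(\Theta)-\mbb E\sum f_i(\Theta)|$ via Talagrand's inequality, symmetrization and the contraction principle --- so the stochastic part contributes only the linear and additive terms, never a fraction of $\|\mr{T}_\Sigma(\Theta)\|_\mr{F}^2$.
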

\begin{remark}
	Proposition \ref{p:sc1:main} implies restricted strong convexity of the Huber loss.
	Essentially, the same techniques found as in  \cite{CheZho2020Robust} are used to prove Proposition \ref{p:sc1:main}.
\end{remark}
\subsection{Adversarial matrix completion}
\begin{lemma}
	\label{l:mcspec:main}
	Suppose that Assumption \ref{a:mc1} or Assumption \ref{a:mc2} holds.
 For $\delta>0$, with probability at least $1-\delta$, we have 
	\begin{align}
		\left\| \frac{\lambda_o}{\sqrt{n}} \sum_{i=1}^n  h\left(\frac{\xi_i}{\lambda_o\sqrt{n}}\right) X_i\right\|_\mr{op} 
		& \leq C \left(\sigma_{\xi}\sqrt{\frac{d_{mc}(\log d_{mc}+\log(1/\delta))}{n}}+\lambda_o\frac{d_{mc}(\log d_{mc}+\log(1/\delta))}{\sqrt{n}}\right).
	\end{align}
\end{lemma}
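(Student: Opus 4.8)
The plan is to recognize $S:=\frac{\lambda_o}{\sqrt n}\sum_{i=1}^n h\!\left(\frac{\xi_i}{\lambda_o\sqrt n}\right)X_i$ as a sum of independent, centered random matrices, each of which is supported on a single entry, and then to apply a standard rectangular matrix Bernstein inequality. First I would set $W_i:=\frac{\lambda_o}{\sqrt n}h\!\left(\frac{\xi_i}{\lambda_o\sqrt n}\right)X_i=\frac{\lambda_o d_{mc}}{\sqrt n}\,\varepsilon_i\,h\!\left(\frac{\xi_i}{\lambda_o\sqrt n}\right)e_{k_i}(d_1)e_{l_i}(d_2)^\top$, so that $S=\sum_{i=1}^n W_i$. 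By part (iii) of Assumption \ref{a:mc1} (resp.\ Assumption \ref{a:mc2}) each $W_i$ is centered, and the $W_i$ are independent across $i$. Since $|h(t)|\le\min\{1,|t|\}$ for all $t$ (the Huber score is the clipped identity), we obtain the deterministic operator-norm bound $\|W_i\|_{\mr{op}}\le\frac{\lambda_o}{\sqrt n}\|X_i\|_{\mr{op}}=\frac{\lambda_o d_{mc}}{\sqrt n}=:R$.

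Next I would estimate the matrix variance statistic. Because $X_iX_i^\top=d_{mc}^2\,e_{k_i}(d_1)e_{k_i}(d_1)^\top$, one has $W_iW_i^\top=\frac{\lambda_o^2 d_{mc}^2}{n}h\!\left(\frac{\xi_i}{\lambda_o\sqrt n}\right)^2 e_{k_i}(d_1)e_{k_i}(d_1)^\top$. Using $h(t)^2\le t^2$, the independence of the sampling pattern from $\{\xi_i\}_{i=1}^n$, the uniform marginal $\mbb{P}(k_i=k)=1/d_1$, and $\mbb{E}\xi_i^2\le\sigma_\xi^2$ — which is finite under Assumption \ref{a:mc1} by Lyapunov's inequality since $\alpha\ge2$, and is the stated second moment under Assumption \ref{a:mc2} — I would obtain $\big\|\sum_i\mbb{E}[W_iW_i^\top]\big\|_{\mr{op}}\le\frac{d_{mc}^2\sigma_\xi^2}{n\,d_1}=\frac{d_2\sigma_\xi^2}{n}$, and symmetrically $\big\|\sum_i\mbb{E}[W_i^\top W_i]\big\|_{\mr{op}}\le\frac{d_1\sigma_\xi^2}{n}$. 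Hence the matrix variance statistic is at most $\frac{\max\{d_1,d_2\}\sigma_\xi^2}{n}$, which matches $v:=C\frac{d_{mc}\sigma_\xi^2}{n}$ up to the ratio $\max\{d_1,d_2\}/d_{mc}$, absorbed into the numerical constant as is customary in this literature.

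Finally I would invoke the rectangular matrix Bernstein inequality: for $t\ge0$,
$\mbb{P}(\|S\|_{\mr{op}}\ge t)\le(d_1+d_2)\exp\!\left(\frac{-t^2/2}{v+Rt/3}\right)$.
Setting the right-hand side equal to $\delta$ and solving for $t$ yields $\|S\|_{\mr{op}}\le C\big(\sqrt{v\log\tfrac{d_1+d_2}{\delta}}+R\log\tfrac{d_1+d_2}{\delta}\big)$, and since $\log(d_1+d_2)\le C\log d_{mc}$, substituting the values of $v$ and $R$ gives
$\|S\|_{\mr{op}}\le C\big(\sigma_\xi\sqrt{\tfrac{d_{mc}(\log d_{mc}+\log(1/\delta))}{n}}+\lambda_o\tfrac{d_{mc}(\log d_{mc}+\log(1/\delta))}{\sqrt n}\big)$, which is the asserted bound.

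The only genuinely delicate point is the variance computation in the second step: one must extract the factor $d_{mc}^{-1}$ produced by the uniform single-entry sampling through the truncation $h$, and one must check that the heavy-tailed hypothesis (finite $\alpha$-th moment, $\alpha\ge2$) of Assumption \ref{a:mc1} and the subWeibull hypothesis ($\alpha\le2$) of Assumption \ref{a:mc2} both collapse to a single second-moment control $\sigma_\xi^2$ — which is exactly why one lemma covers both regimes. Everything else is a routine application of a standard concentration inequality, and it is precisely the deterministic bound on $\|W_i\|_{\mr{op}}$ (rather than merely a moment bound) that makes the Bernstein route, instead of a heavier matrix Rosenthal-type inequality, the natural tool here.
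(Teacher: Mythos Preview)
Your proposal is correct and follows essentially the same route as the paper: define the centered, bounded summands $W_i$ (the paper calls them $U_i$), bound $\|W_i\|_{\mr{op}}\le \lambda_o d_{mc}/\sqrt{n}$ via $|h|\le 1$, control the matrix variance through $h(t)^2\le t^2$ and the uniform sampling, and then invoke a rectangular matrix Bernstein inequality (the paper cites Lemma~7 of \cite{NegWai2012Restricted}) before inverting the tail. Your variance computation is in fact written more carefully than the paper's---you explicitly track the factor $\max\{d_1,d_2\}$ where the paper silently replaces it by $d_{mc}$---but the argument is the same.
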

\begin{remark}
	\cite{Tho2020Outlier} derived an upper bound of  
	\begin{align}
		\left|\frac{1}{n} \sum_{i=1}^n \xi_i\langle X_i,M\rangle  \right|.
	\end{align}
	The L.H.S. of \eqref{ine:main:h1} is an extended one because 
	$\{\xi_i\}_{i=1}^n$ is `wrapped' by the differential of the Huber loss function.
	Lemma \ref{ine:main:h1} enables us to deal with a heavy-tailed $\{\xi_i\}_{i=1}^n$. 
\end{remark}

\begin{lemma}
	\label{al:|uMv|-mc:main}
	Suppose that Assumption \ref{a:mc1} or Assumption \ref{a:mc2} holds.
	Assume that for any $M\in \mbb{R}^{d_1\times d_2}$,
	\begin{align}
		\label{cond:mccommon}
		\|M\|_\infty \leq c_m \frac{1}{d_{mc}}\|M\|_{\mr{F}}.
	\end{align} 
	for some number $c_m$.
	Then, for any $o$-sparse vector $\vecu \in \mbb{R}^n$ such that $\|\vecu\|_2 \leq 2 \sqrt{o}$,
	we have
	\begin{align}
		\label{mch}
	\left| \sum_{i=1}^n \frac{\lambda_o}{\sqrt{n}}u_i \langle X_i, M\rangle\right|  &\leq c_m \lambda_o\sqrt{n}\frac{o}{n} \|M\|_\mr{F}.
	\end{align}
\end{lemma}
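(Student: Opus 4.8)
The plan is to exploit the very special structure of the matrix-completion design and reduce the claim to an elementary, purely deterministic estimate: no concentration inequality is needed here, since the randomness of $E_i$ and $\varepsilon_i$ has already been absorbed into the other lemmas, and one checks at the end that nothing beyond part (i) of Assumption \ref{a:mc1} or Assumption \ref{a:mc2} is used.

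First I would unpack $\langle X_i, M\rangle$. Since $X_i = d_{mc}\varepsilon_i E_i$ with $E_i = e_{k_i}(d_1)\,e_{l_i}(d_2)^\top$ a single-entry indicator matrix, we have $\langle X_i, M\rangle = d_{mc}\varepsilon_i M_{k_i l_i}$, so that, for every realization,
\[
	|\langle X_i, M\rangle| \;=\; d_{mc}\,|M_{k_i l_i}| \;\le\; d_{mc}\,\|M\|_\infty .
\]
Feeding in the hypothesis \eqref{cond:mccommon}, this becomes the uniform bound $|\langle X_i,M\rangle| \le c_m \|M\|_{\mr{F}}$ for all $i=1,\dots,n$.

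Next I would apply the triangle inequality followed by H\"older's inequality and use the sparsity of $\vecu$:
\[
	\left| \sum_{i=1}^n \frac{\lambda_o}{\sqrt{n}} u_i \langle X_i, M\rangle\right|
	\;\le\; \frac{\lambda_o}{\sqrt{n}}\,\Big(\max_{1\le i\le n}|\langle X_i,M\rangle|\Big)\,\|\vecu\|_1
	\;\le\; \frac{\lambda_o}{\sqrt{n}}\, c_m \|M\|_{\mr{F}}\, \|\vecu\|_1 .
\]
Because $\vecu$ is supported on at most $o$ coordinates, Cauchy--Schwarz gives $\|\vecu\|_1 \le \sqrt{o}\,\|\vecu\|_2 \le 2o$, using the assumed bound $\|\vecu\|_2 \le 2\sqrt{o}$. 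Combining the two displays, the left-hand side is at most $2 c_m \lambda_o\sqrt{n}\,\tfrac{o}{n}\|M\|_{\mr{F}}$, which is the asserted inequality after absorbing the harmless factor $2$ into the constant $c_m$ (equivalently, the statement holds verbatim with $c_m$ replaced by $2c_m$).

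The \emph{main obstacle} is essentially that there is none: the only points requiring care are (a) that the spikiness-type bound \eqref{cond:mccommon} is precisely what converts the factor-$d_{mc}$ blow-up in $\langle X_i, M\rangle$ into a clean multiple of $\|M\|_{\mr{F}}$, and (b) that it is the $o$-\emph{sparsity} of $\vecu$, not merely the size of $\|\vecu\|_2$, that lets us pass from the $\ell_2$ norm to the $\ell_1$ norm. I would also remark explicitly that, apart from the design structure in part (i), neither Assumption \ref{a:mc1} nor Assumption \ref{a:mc2} enters, so the lemma is valid uniformly across both noise regimes.
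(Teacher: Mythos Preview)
Your argument is correct and is essentially the paper's own proof: both bound $|\langle X_i,M\rangle|\le d_{mc}\|M\|_\infty\le c_m\|M\|_{\mr F}$ from the single-entry structure of $X_i$, pair this with $\|\vecu\|_1\le\sqrt{o}\,\|\vecu\|_2\le 2o$, and multiply by $\lambda_o/\sqrt{n}$. The paper merely dresses this up via the vectorization $\vecu^\top\tilde{X}\,\mr{vec}(M)$, and indeed its appendix version of the lemma carries the explicit constant $2c_m$ rather than $c_m$, exactly as you noted.
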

\begin{remark}
	Lemma \ref{al:|uMv|-mc:main} corresponds to Corollary \ref{ac:|uMv|-cs:main} in adversarial matrix compressed sensing
	In the proof of Theorem \ref{t:mc:main1} and  Theorem \ref{t:mc:main2},
	different strategies are used depending on the magnitude of the spikiness of $\hat{B}-B^*$
	like \cite{NegWai2012Restricted} and \cite{FanWanZhu2021Shrinkage}.
	Assumption \eqref{mch} is derived when the spikiness of $\hat{B}-B^*$ is sufficiently small.
\end{remark}

\begin{corollary}
	\label{c:mcsc1:main}
	Suppose that Assumption \ref{a:mc1} holds.
	Let  
	\begin{align}
	\mc{R}_{mc} = \left\{ \Theta\in \mbb{R}^{d_1 \times d_2}\, |\, \|\Theta\|_* \leq C \sqrt{r} \|\Theta\|_{\mr{F}},\, \|\Theta\|_\mr{F} =r_{mc}\right\},
	\end{align}
	where $r_{mc}$ is some number.
	Suppose 
	\begin{align}
		\label{ine:mcs4:main}
		\|\Theta\|_\infty &\leq \frac{1}{12r_{mc}} \frac{1}{d_{mc}}\|\Theta\|_{\mr{F}},\\
		\label{ine:constraint:main}
		\|\Theta\|_\infty &\leq 2\frac{\alpha^*}{d_{mc}},
	\end{align}
	and $\lambda_o  \sqrt{n} \geq 2\sigma_{\xi,\alpha}\min\left\{\left(\frac{n}{o}\right)^\frac{1}{\alpha+1},\left(\frac{n}{rd_{mc}\log  d_{mc}}\right)^\frac{1}{\alpha}\right\}$.
	Then, with probability at least $1-\delta$, we have
	\begin{align}
	&\inf_{\Theta \in \mc{R}_{mc}}\left[\lambda_o^2\sum_{i=1}^n  \left\{-h\left(\frac{\xi_i - \langle  X_i, \Theta\rangle}{\lambda_o\sqrt{n}}\right)+h \left(\frac{\xi_i}{\lambda_o\sqrt{n}}\right) \right\}\frac{\langle   X_i,\Theta\rangle}{\lambda_o\sqrt{n}} \right] \nonumber\\
	&\geq \frac{1}{3}\|\Theta\|_\mr{F}^2 -C \alpha^* \left(\sqrt{r\frac{d_{mc}\{\log d_{mc}+\log(1/\delta)\}}{n}}+\sqrt{r}\frac{d_{mc}\log  d_{mc}}{n}+ \left(\frac{o}{n}\right)^\frac{\alpha}{2(1+\alpha)}	\right)\|\Theta\|_\mr{F}-C\frac{\log(1/\delta)}{n}.
	\end{align} 
\end{corollary}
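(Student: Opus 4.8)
The plan is to turn the left-hand side into a truncated quadratic form and then apply the restricted-strong-convexity machinery for uniformly sampled matrix completion, in the spirit of \cite{NegWai2012Restricted}. Write $\tau:=\lambda_o\sqrt{n}$ and $w_i:=\langle X_i,\Theta\rangle$. The starting point is the exact identity
\begin{align*}
	&\lambda_o^2\sum_{i=1}^n\left\{-h\!\left(\frac{\xi_i-w_i}{\tau}\right)+h\!\left(\frac{\xi_i}{\tau}\right)\right\}\frac{w_i}{\tau}\\
	&\qquad=\frac{1}{n}\sum_{i=1}^n w_i^2\int_0^1\mathbbm{1}\!\left(|\xi_i-tw_i|\le\tau\right)\,dt,
\end{align*}
obtained by writing the $i$-th summand as $\phi_i'(1)-\phi_i'(0)=\int_0^1\phi_i''(t)\,dt$ with $\phi_i(t):=\lambda_o^2H\bigl((\xi_i-tw_i)/\tau\bigr)$, using $H'=h$ (so that the relevant second derivative is $\mathbbm{1}_{[-1,1]}$), the Lipschitzness of $h$, and $\lambda_o^2/\tau^2=1/n$. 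In particular the left-hand side is nonnegative, and it remains to lower bound the truncated quadratic form on the right.

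First I would use the spikiness hypothesis \eqref{ine:mcs4:main}: on $\mc{R}_{mc}$ one has $\|\Theta\|_\mr{F}=r_{mc}$, hence $|w_i|\le d_{mc}\|\Theta\|_\infty\le\tfrac{1}{12}$, so $|\xi_i-tw_i|\le|\xi_i|+\tfrac{1}{12}$ for every $t\in[0,1]$; since the lower bound imposed on $\lambda_o\sqrt{n}$ makes $\tau$ at least an absolute constant ($\tau\ge1$ in the only non-degenerate regime $n\gtrsim r\,d_{mc}\log d_{mc}$), the integrand is bounded below by the $t$-free quantity $\mathbbm{1}(|\xi_i|\le\tau/2)$, so the quantity of interest is at least $\tfrac1n\sum_i w_i^2-\tfrac1n\sum_i w_i^2\,\mathbbm{1}(|\xi_i|>\tau/2)$. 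For the first sum I would invoke the restricted strong (and weak) convexity of the matrix-completion quadratic form: on an event of probability at least $1-\delta/2$, uniformly over $\Theta$ with $\|\Theta\|_*\le C\sqrt{r}\|\Theta\|_\mr{F}$ one has $\tfrac1n\sum w_i^2\ge\tfrac12\|\Theta\|_\mr{F}^2-C\alpha(\Theta)\sqrt{r}\|\Theta\|_\mr{F}^2\sqrt{d_{mc}\log d_{mc}/n}-C\alpha(\Theta)^2\|\Theta\|_\mr{F}^2\,r\,d_{mc}\log d_{mc}/n$ together with the matching upper bound $\tfrac1n\sum w_i^2\le\tfrac32\|\Theta\|_\mr{F}^2+\cdots$. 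The crucial bookkeeping step is then to remove the spikiness factors via the constraint \eqref{ine:constraint:main}: since $\|\Theta\|_\infty\le2\alpha^*/d_{mc}$ we get $\alpha(\Theta)\|\Theta\|_\mr{F}\le2\alpha^*$, and substituting this converts the deviation terms into exactly $C\alpha^*\sqrt{r\,d_{mc}(\log d_{mc}+\log(1/\delta))/n}\,\|\Theta\|_\mr{F}$ and $C\sqrt{r}\,d_{mc}\log d_{mc}\,\|\Theta\|_\mr{F}/n$ (the $\log(1/\delta)$ being the price of the uniform-in-$\Theta$ statement).

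For the second sum $\tfrac1n\sum w_i^2\mathbbm{1}(|\xi_i|>\tau/2)$, I would use the deterministic bound $w_i^2\le(2\alpha^*)^2$ and a Chernoff/Bernstein estimate $\tfrac1n\sum\mathbbm{1}(|\xi_i|>\tau/2)\le2\,\mathbb{P}(|\xi_i|>\tau/2)+C\log(1/\delta)/n$, valid with probability at least $1-\delta/2$. Assumption \ref{a:mc1}(ii) gives $\mathbb{P}(|\xi_i|>\tau/2)\le(2\sigma_{\xi,\alpha}/\tau)^\alpha$, and the hypothesis $\lambda_o\sqrt{n}\ge2\sigma_{\xi,\alpha}\min\{(n/o)^{1/(\alpha+1)},(n/rd_{mc}\log d_{mc})^{1/\alpha}\}$ makes this at most $(o/n)^{\alpha/(\alpha+1)}+r\,d_{mc}\log d_{mc}/n$; a Cauchy--Schwarz split (also using the upper bound $\tfrac1n\sum w_i^4\le(2\alpha^*)^2\cdot\tfrac32\|\Theta\|_\mr{F}^2$) then turns this into a contribution of order $\alpha^*\|\Theta\|_\mr{F}\bigl((o/n)^{\alpha/(2(1+\alpha))}+\sqrt{r\,d_{mc}\log d_{mc}/n}\bigr)+C\log(1/\delta)/n$. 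Collecting the three pieces and absorbing a small fixed multiple of $\|\Theta\|_\mr{F}^2$ into the linear error terms leaves at least $\tfrac13\|\Theta\|_\mr{F}^2$, which is the claim; the overall failure probability is the union bound over the two events above.

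The hard part is making the uniform-in-$\Theta$ control precise: one needs restricted strong \emph{and} weak convexity of the matrix-completion quadratic form over the low-rank cone intersected with the spikiness ball $\{\|\Theta\|_\infty\le2\alpha^*/d_{mc}\}$, with deviation terms that — after the substitution $\alpha(\Theta)\|\Theta\|_\mr{F}\le2\alpha^*$ — reproduce the stated error terms down to the exponent $\alpha/(2(1+\alpha))$ of $o/n$ and the exact placement of $\alpha^*$. This requires a careful peeling/localization argument along the lines of \cite{NegWai2012Restricted}. One must also track the possible dependence between $\{\xi_i\}_{i=1}^n$ and $\{X_i\}_{i=1}^n$, but this turns out to be harmless here, because every step coupling the two sequences passes through the deterministic bound $|w_i|\le d_{mc}\|\Theta\|_\infty$, which decouples them.
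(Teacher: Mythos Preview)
Your argument is essentially correct and reaches the same conclusion, but the organization differs from the paper's. The paper lower-bounds the Huber increment by the truncated quadratic $\sum_i \varphi(v_i)\psi(u_i)$ (with $\varphi,\psi$ as in Proposition~\ref{p:sc1}), then splits into an expectation part and a supremum $\Delta=\sup_{\Theta\in\mc{R}_{mc}}|\sum f_i-\mbb E\sum f_i|$; the expectation is handled by H{\"o}lder/Markov using the two spikiness bounds \eqref{ine:mcs4:main}--\eqref{ine:constraint:main} and the $\alpha$-moment of $\xi$, while $\Delta$ is controlled by Massart's inequality, symmetrization, contraction, and Lemma~6 of \cite{NegWai2012Restricted}. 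You instead use the integral identity to reduce to $\tfrac1n\sum w_i^2-\tfrac1n\sum w_i^2\mathbbm 1(|\xi_i|>\tau/2)$, then invoke the \cite{NegWai2012Restricted} RSC/RWC for the first sum as a black box and a separate Bernstein bound on $\tfrac1n\sum\mathbbm 1(|\xi_i|>\tau/2)$ for the second. The advantage of your route is modularity: the noise truncation and the design concentration are fully decoupled, which makes the dependence structure transparent. The cost is that the ``black box'' RSC you invoke with a $\log(1/\delta)$ deviation term is not quite off-the-shelf---to get it you must redo essentially the same Massart/contraction argument the paper carries out for the $f_i$ process, so the two proofs end up doing the same work in a different place. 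Two minor points: your step ``$\tau\ge1$'' is not implied by the hypotheses when $\sigma_{\xi,\alpha}$ is small (you only need $\tau\ge\tfrac16$, and the paper's $\varphi,\psi$ argument has the same implicit requirement); and the second-order RSC deviation $\alpha(\Theta)^2\|\Theta\|_{\mr F}^2\,r\,d_{mc}\log d_{mc}/n$ becomes a constant $(\alpha^*)^2 r\,d_{mc}\log d_{mc}/n$ after your substitution rather than the linear-in-$\|\Theta\|_{\mr F}$ term in the statement---this is harmless for the downstream use in Theorem~\ref{t:det:main} but does not literally match the displayed bound.
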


\begin{corollary}
	\label{c:mcsc2:main}
	Suppose that Assumption \ref{a:mc2} holds.
	Let  
	\begin{align}
	\mc{R}_{mc} = \left\{ L\in \mbb{R}^{d_1 \times d_2}\, |\, \|\Theta\|_* \leq C \sqrt{r} \|\Theta\|_{\mr{F}},\, \|\Theta\|_\mr{F} =r_{mc}\right\}.
	\end{align}
	Suppose 
	\begin{align}
		\label{ine:mcs3:main}
		\|\Theta\|_\infty &\leq \frac{1}{12r_{mc}} \frac{1}{d_{mc}}\|\Theta\|_{\mr{F}},\\
		\label{ine:constraint2:main}
		\|\Theta\|_\infty &\leq 2\frac{\alpha^*}{d_{mc}},
	\end{align}
	and $\lambda_o  \sqrt{n} \geq 2 \sigma_{\xi ,\psi_\alpha} \min\left\{\log^\frac{1}{\alpha} \frac{n}{o}, \log^\frac{1}{\alpha} \frac{n}{rd_{mc}\log d_{mc}}\right\}$.
	Then, with probability at least $1-\delta$, we have
	\begin{align}
	&\inf_{\Theta \in \mc{R}_{mc}}\left[\lambda_o^2\sum_{i=1}^n  \left\{-h\left(\frac{\xi_i - \langle  X_i, \Theta\rangle}{\lambda_o\sqrt{n}}\right)+h \left(\frac{\xi_i}{\lambda_o\sqrt{n}}\right) \right\}\frac{\langle   X_i,\Theta\rangle}{\lambda_o\sqrt{n}} \right] \nonumber\\
	&\geq \frac{2}{3}\|\Theta\|_\mr{F}^2 -C\alpha^{*}\left(\sqrt{r\frac{d_{mc}\{\log d_{mc}+\log(1/\delta)\}}{n}}+\sqrt{r}\frac{d_{mc}\log  d_{mc}}{n}+\sqrt{\frac{o}{n}}\right)\|\Theta\|_\mr{F}-C\frac{\log(1/\delta)}{n}.
	\end{align} 
\end{corollary}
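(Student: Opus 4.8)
The plan is to mimic the restricted-strong-convexity argument behind Proposition \ref{p:sc1:main} and Corollary \ref{c:mcsc1:main}, the only genuinely new ingredient being the subWeibull tail of the noise. Abbreviate $w_i = w_i(\Theta) = \langle X_i, \Theta\rangle$ and $\phi_i(v) = \lambda_o^2 H\!\left(\tfrac{\xi_i - v}{\lambda_o\sqrt n}\right)$, so that $\phi_i$ is convex with $\phi_i''(v) = \tfrac1n\mathbbm{1}(|\xi_i - v| \le \lambda_o\sqrt n)$ and the $i$-th summand on the left-hand side equals $w_i\bigl(\phi_i'(w_i) - \phi_i'(0)\bigr) = w_i\int_0^{w_i}\phi_i''(v)\,dv$. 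The first step is the pointwise bound: by convexity this is $\ge 0$ for every $i$, and on the event $\{|\xi_i| \le \lambda_o\sqrt n/2\}$ it equals $w_i^2/n$, because the spikiness constraint \eqref{ine:mcs3:main} together with $\|\Theta\|_\mr{F} = r_{mc}$ forces $|w_i| \le d_{mc}\|\Theta\|_\infty \le 1/12$, so the Huber argument stays in the quadratic regime throughout the segment from $0$ to $w_i$. Summing, for every $\Theta \in \mc{R}_{mc}$,
\begin{align*}
\lambda_o^2\sum_{i=1}^n\Bigl\{-h\bigl(\tfrac{\xi_i - \langle X_i,\Theta\rangle}{\lambda_o\sqrt n}\bigr) + h\bigl(\tfrac{\xi_i}{\lambda_o\sqrt n}\bigr)\Bigr\}\tfrac{\langle X_i,\Theta\rangle}{\lambda_o\sqrt n} \;\ge\; \frac1n\sum_{i=1}^n w_i^2 \;-\; \frac1n\sum_{i=1}^n w_i^2\,\mathbbm{1}\bigl(|\xi_i| > \lambda_o\sqrt n/2\bigr).
\end{align*}

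It then remains to bound the two sums uniformly over $\mc{R}_{mc}$. For the first, $\mbb{E}\bigl[\tfrac1n\sum_i w_i^2\bigr] = \|\Theta\|_\mr{F}^2$ by the uniform sampling of $E_i$, and I would establish the restricted-isometry-type lower bound $\tfrac1n\sum_i w_i^2 \ge \|\Theta\|_\mr{F}^2 - C\alpha^*\bigl(\sqrt{rd_{mc}(\log d_{mc} + \log(1/\delta))/n} + \sqrt r\,d_{mc}\log d_{mc}/n\bigr)\|\Theta\|_\mr{F} - C\log(1/\delta)/n$ uniformly over $\mc{R}_{mc}$, via symmetrization, the Ledoux--Talagrand contraction inequality for the map $w\mapsto w^2$ (Lipschitz with constant $\le C\alpha^*$ on the range of the $w_i$ permitted by \eqref{ine:constraint2:main} and \eqref{ine:mcs3:main}), the bound $\sup_{\mc{R}_{mc}}\|\Theta\|_* \le C\sqrt r\,r_{mc}$, and the operator-norm estimate $\bigl\|\tfrac1n\sum_i\epsilon_iX_i\bigr\|_\mr{op} \lesssim \sqrt{d_{mc}(\log d_{mc} + \log(1/\delta))/n} + d_{mc}(\log d_{mc} + \log(1/\delta))/n$ for independent Rademacher $\epsilon_i$ (the Rademacher analogue of Lemma \ref{l:mcspec:main}); the additive $C\log(1/\delta)/n$ comes from the Bernstein term in Talagrand's inequality, using that each $w_i^2 \le (1/12)^2$ is a numerical constant. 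For the second sum, Cauchy--Schwarz gives
\begin{align*}
\mbb{E}\Bigl[\tfrac1n\sum_{i=1}^n w_i^2\,\mathbbm{1}(|\xi_i| > \lambda_o\sqrt n/2)\Bigr] \le d_{mc}\|\Theta\|_\infty\cdot\tfrac1n\sum_{i=1}^n\bigl(\mbb{E}w_i^2\bigr)^{1/2}\bigl(\mbb{P}(|\xi_i| > \lambda_o\sqrt n/2)\bigr)^{1/2} \le 2\alpha^*\|\Theta\|_\mr{F}\,\bigl(\mbb{P}(|\xi_i| > \lambda_o\sqrt n/2)\bigr)^{1/2},
\end{align*}
and the subWeibull assumption with the hypothesis on $\lambda_o\sqrt n$ yields $\mbb{P}(|\xi_i| > \lambda_o\sqrt n/2) \le 2\exp\bigl(-(\lambda_o\sqrt n/2)^\alpha/\sigma_{\xi,\psi_\alpha}^\alpha\bigr) \le 2o/n$, so this expectation is $\le C\alpha^*\sqrt{o/n}\,\|\Theta\|_\mr{F}$ — this is precisely where the $\sqrt{o/n}$ term enters — while its fluctuation over $\mc{R}_{mc}$ is controlled by the very same contraction/matrix-Bernstein estimate and absorbed into the terms already listed. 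Collecting the bounds on an event of probability at least $1-\delta$ (after a harmless rescaling of $\delta$ to absorb the handful of union bounds) and weakening $\|\Theta\|_\mr{F}^2$ to $\tfrac23\|\Theta\|_\mr{F}^2$ yields the claim.

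The step I expect to be the main obstacle is the uniform-in-$\Theta$ control of the truncated-quadratic empirical process $\Theta \mapsto \tfrac1n\sum_i w_i(\Theta)^2\mathbbm{1}(|\xi_i| > \lambda_o\sqrt n/2)$: the indicator is not Lipschitz in $\xi_i$, so the $\xi$-randomness has to be handled before one can contract in $\Theta$ — either by conditioning on $\{\xi_i\}$ (so that each summand becomes a genuine Lipschitz function of the single coordinate $\Theta_{k_il_i}$, after which Rademacher contraction applies per realization), or by first controlling the cardinality of $\{i : |\xi_i| > \lambda_o\sqrt n/2\}$ and running the empirical-process bound on that random subset. One must also keep in mind that $\|X_i\|_\mr{op} = d_{mc}$ is not $O(1)$, which is what produces the second, $d_{mc}\log d_{mc}/n$ term in the matrix Bernstein bound; everything else is a routine transcription of the matrix-completion restricted-strong-convexity calculation, the only point where the present case departs from Corollary \ref{c:mcsc1:main} being the exponential (rather than polynomial) bound on $\mbb{P}(|\xi_i| > \lambda_o\sqrt n/2)$.
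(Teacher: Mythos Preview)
Your proposal is correct and follows essentially the same route as the paper: lower-bound the Huber increment by a truncated quadratic, compute the expectation loss from the noise truncation via the subWeibull tail (this is the one place the argument differs from Corollary~\ref{c:mcsc1:main}), and control the fluctuation by Talagrand plus symmetrization, contraction, and the matrix-Bernstein operator-norm bound. The paper packages the truncated quadratic as a single process $f_i(\Theta)=\varphi(v_i)\psi(u_i)$ and applies Talagrand once, whereas you split $w_i^2\mathbbm 1(|\xi_i|\le\cdot)$ into $w_i^2$ minus the bad-$\xi$ piece and concentrate each separately; either bookkeeping works. One remark: your ``main obstacle'' is not one --- the indicator $\mathbbm 1(|\xi_i|>\lambda_o\sqrt n/2)$ does not depend on $\Theta$, so after conditioning on $\{\xi_i\}$ (exactly as the paper does when it strips $\psi(u_i)$ by contraction) each summand is already a Lipschitz function of $w_i(\Theta)$ and the standard contraction step goes through with no extra work.
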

\begin{remark}
	In adversarial matrix completion, $\{\mr{vec}(X_i)\}_{i=1}^n$ is not a sequence of $L$-subGaussian random vector and we need to prove Corollaries \ref{c:mcsc1:main} and \ref{c:mcsc2:main} in a different way from Proposition \ref{p:sc1:main}.
	However, \eqref{ine:mcs4:main} and \eqref{ine:constraint:main} or \eqref{ine:mcs3:main} and \eqref{ine:constraint2:main} fulfill the role of $L$-subGaussianness in the proof of Corollaries \ref{c:mcsc1:main} and \ref{c:mcsc2:main}.
	Like Lemma \ref{al:|uMv|-mc:main}, \eqref{ine:mcs4:main}, \eqref{ine:mcs3:main} are derived when the spikiness of $\hat{B}-B^*$ is sufficiently small.
	Assumptions \eqref{ine:constraint:main} and \eqref{ine:constraint2:main} are derived from the constraint of the optimization problem \eqref{obj-mc}.
\end{remark}

\bibliographystyle{plain}
\bibliography{ARLRME} 

\appendix
\begin{center}\textbf{Appendix} \end{center}
\section{Structure of the appendix}
In this appendix, we give the proofs of Theorems \ref{t:cs:main}-\ref{t:mc:main2}.
In Section \ref{s:deterministic_arguments}, we give deterministic arguments without the randomness of covariates and random noises.
In Sections \ref{sec:inesforcs}, \ref{sec:inesforla} and \ref{sec:inesformc}, we introduce and prove 
some properties about covariates and random noises. In these sections, we assume the randomness of covariates and random noises.
In Section \ref{asec:maincs}, we prove Theorem \ref{t:cs:main} using the results of Sections \ref{s:deterministic_arguments} and \ref{sec:inesforcs}.
In Section \ref{s:laap}, we prove Theorem \ref{t:lasso:main} using the results of Sections \ref{s:deterministic_arguments} and \ref{sec:inesforla}.
In Sections \ref{s:mc1} and \ref{s:mc2}, we prove Theorems \ref{t:mc:main1} and \ref{t:mc:main2} using the results of Sections \ref{s:deterministic_arguments} and \ref{sec:inesformc}.
The value of the numerical constant $C$ shall be allowed to change from line to line. 

\section{Deterministic arguments}
\label{s:deterministic_arguments}
In Section \ref{s:deterministic_arguments}, we give deterministic arguments without the randomness of covariates and random noises.
In Section \ref{sec:main}, we state our main theorem in the deterministic forms.
In Section \ref{r12} and \ref{sec:mainproposition} are prepared for Sections \ref{sec:main}.
First, we introduce a basic lemma related to convexity.
\begin{lemma}
	\label{ap:ine:conv}
	For a differentiable function $f(x)$, we denote its derivative $f'(x)$. 
	For any differentiable and convex function $f(x)$, we have
	\begin{align}
		(f'(a) -f'(b))(a-b) \geq 0.
	\end{align}
\end{lemma}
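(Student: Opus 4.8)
The plan is to derive the inequality from the first-order characterization of convexity: a differentiable convex function lies above each of its tangent lines. I would invoke this tangent-line inequality twice, once anchored at $a$ and once anchored at $b$, and then add the two resulting inequalities.

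First I would record the tangent-line inequality
\begin{align}
f(b) \geq f(a) + f'(a)(b-a).
\end{align}
If one prefers not to quote this as known, it follows directly from the definition of convexity: for $t \in (0,1)$ we have $f(a + t(b-a)) \leq (1-t)f(a) + t f(b)$, which rearranges to $\frac{f(a+t(b-a)) - f(a)}{t} \leq f(b) - f(a)$, and letting $t \downarrow 0$ the left-hand side tends to $f'(a)(b-a)$. Next I would apply the same bound with the roles of $a$ and $b$ exchanged,
\begin{align}
f(a) \geq f(b) + f'(b)(a-b),
\end{align}
and add this to the previous display. The terms $f(a) + f(b)$ cancel on both sides, leaving $0 \geq f'(a)(b-a) + f'(b)(a-b) = -\bigl(f'(a) - f'(b)\bigr)(a-b)$, which is exactly the asserted inequality.

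There is essentially no obstacle here; the only point needing a little care is to establish the tangent-line inequality without circular reasoning, and the one-sided limit of difference quotients above does this cleanly. I expect this lemma to be used downstream to sign cross terms of the form $\bigl(h(u) - h(v)\bigr)(u - v) \geq 0$ coming from the convexity of the Huber loss (whose derivative is $h$), which is the algebraic engine behind the restricted strong convexity estimates in Proposition \ref{p:sc1:main} and Corollaries \ref{c:mcsc1:main} and \ref{c:mcsc2:main}.
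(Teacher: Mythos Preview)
Your proof is correct and follows essentially the same approach as the paper: apply the first-order convexity inequality twice (once anchored at $a$, once at $b$) and add. The paper's version is slightly terser, simply citing ``the definition of convexity'' for the two tangent-line inequalities without the limiting argument you supply, but the logic is identical.
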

\begin{proof}
	From the definition of the convexity, we have
	\begin{align}
		f(a)-f(b) \geq f'(b)(a-b)\text{ and }f(b)-f(a) \geq f'(a)(b-a).
	\end{align}
From the inequalities above, we have
\begin{align}
	0 \geq f'(b)(a-b) + f'(a)(b-a) = (f'(b)-f'(a))(a-b)\Rightarrow 0 \leq (f'(a)-f'(b))(a-b).
\end{align}
\end{proof}

\subsection{Matrix restricted eigenvalue condition}
\label{r12}

When the matrix $\Sigma$ satisfies $\mr{MRE}(r,c_0,\kappa)$, the following lemma is obtained.
\begin{lemma} 
	\label{l:rev}
	Suppose that $\Sigma$ satisfies $\mr{MRE}(r,c_0,\kappa)$. Then, we have
	\begin{align}
		\label{i:re-norm-1}
		\|M\|_* &\leq c_\kappa \sqrt{r}\| \mr{T}_\Sigma (M)\|_\mr{F},
	\end{align}
	where $c_{\kappa}:=\frac{c_0+1}{\kappa}$, for any $M \in \mbb{R}^{d_1 \times d_2}$ satisfying \eqref{con:re-v} for any $E \in \mbb{R}^{d_1 \times d_2}$ with $\mr{rank}(E) \leq r$.
\end{lemma}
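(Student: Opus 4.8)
The plan is to carry over to the matrix setting the standard ``cone condition $\Rightarrow$ compatibility'' argument used for the lasso (the vector lemma hidden in Definition \ref{def:RE}). The one genuinely structural point is that $\mr{P}_E(M)$ is \emph{low rank}, so I would isolate this first. Fix $M$ and an $E$ with $\mr{rank}(E)\le r$ witnessing \eqref{con:re-v}. Writing $I_{d_1}=\mr{Proj}_{V_l(E)}+\mr{Proj}_{V_l^\bot(E)}$ and $I_{d_2}=\mr{Proj}_{V_r(E)}+\mr{Proj}_{V_r^\bot(E)}$ and expanding $M$ on both sides, one sees that $\mr{P}^\bot_E(M)=\mr{Proj}_{V_l^\bot(E)}M\mr{Proj}_{V_r^\bot(E)}$ is exactly the ``doubly orthogonal'' block, so that
\begin{align}
\mr{P}_E(M)=M-\mr{P}^\bot_E(M)=\mr{Proj}_{V_l(E)}M+\mr{Proj}_{V_l^\bot(E)}M\,\mr{Proj}_{V_r(E)}.
\end{align}
The first summand has column space inside $V_l(E)$ and the second has row space inside $V_r(E)$, each of dimension $\mr{rank}(E)\le r$, so $\mr{P}_E(M)$ has rank at most $2r$.

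From here the argument is three short inequalities. First, by the triangle inequality for $\|\cdot\|_*$ together with the cone hypothesis \eqref{con:re-v},
\begin{align}
\|M\|_*\le\|\mr{P}_E(M)\|_*+\|\mr{P}^\bot_E(M)\|_*\le(1+c_0)\|\mr{P}_E(M)\|_*.
\end{align}
Second, since $\mr{P}_E(M)$ has rank at most $2r$, Cauchy--Schwarz on its singular values gives $\|\mr{P}_E(M)\|_*\le\sqrt{2r}\,\|\mr{P}_E(M)\|_\mr{F}$ (the harmless factor $\sqrt2$, coming from $\mr{rank}\le 2r$ rather than $\le r$, being absorbed into the constant, in line with the $\sqrt r$ of the statement). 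Third, the hypothesis $\Sigma\in\mr{MRE}(r,c_0,\kappa)$ is applicable precisely because $M$ and $E$ satisfy \eqref{con:re-v}, and it yields $\kappa\,\|\mr{P}_E(M)\|_\mr{F}\le\|\mr{T}_\Sigma(M)\|_\mr{F}$. Chaining the three gives
\begin{align}
\|M\|_*\le\frac{1+c_0}{\kappa}\sqrt{r}\,\|\mr{T}_\Sigma(M)\|_\mr{F}=c_\kappa\sqrt{r}\,\|\mr{T}_\Sigma(M)\|_\mr{F},
\end{align}
which is \eqref{i:re-norm-1}.

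The steps themselves are all routine; the only place needing any care — and what I would regard as the crux — is the structural claim that $\mr{P}_E(M)$ has rank $O(r)$, i.e.\ the identification of $\mr{P}^\bot_E(M)$ with the doubly-orthogonal block via the two resolutions of the identity, since this is exactly what converts nuclear-norm control into a Frobenius-norm statement on which the MRE condition can bite. Everything after that is bookkeeping with the triangle inequality, the rank-to-Frobenius comparison, and one application of $\mr{MRE}(r,c_0,\kappa)$.
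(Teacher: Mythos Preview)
Your argument is essentially the same three–step chain as the paper's: triangle inequality plus the cone condition \eqref{con:re-v}, then the rank-based comparison $\|\cdot\|_*\le\sqrt{\mathrm{rank}}\,\|\cdot\|_\mr{F}$, then one application of $\mr{MRE}(r,c_0,\kappa)$.

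The only difference is the rank step. The paper simply asserts $\|\mr{P}_E(M)\|_*\le\sqrt{r}\,\|\mr{P}_E(M)\|_\mr{F}$ ``since $\|A\|_*\le\sqrt{r}\,\|A\|_\mr{F}$ for $\mathrm{rank}(A)\le r$'', whereas you carefully decompose $\mr{P}_E(M)$ and obtain $\mathrm{rank}\,\mr{P}_E(M)\le 2r$, hence a $\sqrt{2r}$ factor. Your rank analysis is in fact the correct one: $\mr{P}_E(M)$ has rank up to $2r$ in general (e.g.\ for $r=1$, $E=e_1e_1^\top$ and $M=\begin{pmatrix}0&1\\1&0\end{pmatrix}$ one gets $\mr{P}_E(M)=M$, rank $2$). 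So the paper's justification at that step is slightly loose, and your honest bound is $\sqrt{2}\,c_\kappa\sqrt{r}$ rather than the stated $c_\kappa\sqrt{r}$. You cannot really ``absorb'' the $\sqrt{2}$ since $c_\kappa=(c_0+1)/\kappa$ is an explicit constant; but this is a cosmetic discrepancy in the lemma's constant, not a defect in your reasoning, and it propagates harmlessly through the rest of the paper where only the order $c_\kappa\sqrt{r}$ matters.
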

\begin{proof}
	We have
\begin{align}
	\|M\|_* &= \|\mr{P}_{E} (M)\|_*+\|\mr{P}_{E}^\bot (M)\|_* \nonumber\\
	&\stackrel{(a)}{\leq} (c_0+1)\|\mr{P}_{E} (M)\|_*\nonumber\\
	&\stackrel{(b)}{\leq}(c_0+1) \sqrt{r}\|\mr{P}_E (M)\|_\mr{F} \nonumber\\
	&\stackrel{(c)}{\leq} \frac{c_0+1}{\kappa} \sqrt{r}\|\mr{T}_\Sigma(M)\|_\mr{F}
\end{align}
where (a) follows from~\eqref{con:re-v}, (b) follows from the fact that $\|A\|_* \leq \sqrt{r} \|A\|_\mr{F}$
for a matrix such that $\mr{rank}(A) \leq r$ and (c) follows from~\eqref{con:re-v-pre}.

\end{proof}

\subsection{Relation between $\|\hat{B}-B^*\|_*$ and $\|\mr{T}_\Sigma (\hat{B}-B^*)\|_{\mr{F}}$ under MRE condition}
\label{sec:mainproposition}
In Section~\ref{sec:mainproposition}, we show the main proposition to obtain the main theorem. This proposition enables us to treat the adversarial matrix compressed sensing and adversarial matrix completion in a unified approach. 
Let 
\begin{align}
  h(t) =	\frac{d}{dt} H(t) =   \begin{cases}
  	t\quad &|t| \leq 1\\
  	\mr{sgn}(t)\quad &|t|  >1
  \end{cases}.
\end{align}
For a matrix $M$, let $\|M\|_{\rm op}$ be the operator norm. 
\begin{proposition}
	\label{p:starMRE}
	Let $\Theta_\eta = \eta(\hat{B}-B^*)$.
	Consider the optimization problem \eqref{obj2-intro-m}.
 Assume that $\Sigma$ satisfies $\mr{MRE}(r,c_0,\kappa)$ and assume that for any $o$-sparse vector $u = \left(u_1,\cdots,u_n\right)^\top$ such that $\|\vecu\|_\infty\leq c$, where $c$ is some numerical constant,
	\begin{align}
		\label{ine:det:xis0}
		&\left| \frac{\lambda_o}{\sqrt{n}}\sum_{i=1}^n h\left(\frac{\xi_i}{\lambda_o \sqrt{n}} \right) \langle  X_i, \Theta_\eta \rangle \right| \leq a_{\mr{F}} r_{a,\mr{F}} \left\| \mr{T}_\Sigma (\Theta_\eta)\right\|_\mr{F} + a_*r_{a,*}\|\Theta_\eta\|_*,\\
		\label{ine:det:upper0}
		&\left| \sum_{i=1}^n \frac{\lambda_o}{\sqrt{n}}u_i \langle X_i, \Theta_\eta\rangle\right|
		\leq b_{\mr{F}} r_{b,\mr{F}}\left\| \mr{T}_\Sigma (\Theta_\eta)\right\|_\mr{F} + b_*r_{b,*}\|\Theta_\eta\|_*,
	\end{align}
	where $r_{a,\mr{F}},r_{a,*},r_{b,\mr{F}},r_{b,*},a_\mr{F},a_*,b_\mr{F},b_* \geq 0$ are some numbers. Suppose that $\lambda_*$ satisfy 
	\begin{align}
		\label{ine:det:par}
		\lambda_*-C_s>0,
	\end{align} 
	where
	\begin{align}
		C_s =\frac{a_\mr{F}r_{a,\mr{F}}+\sqrt{2}b_\mr{F}r_{b,\mr{F}}}{c_\kappa \sqrt{r}} + (a_*r_{a,*}+\sqrt{2}b_*r_{b,*}).
	\end{align}
Then, we have
	\begin{align}
		\| \mr{P}_{B^*}^\bot(\Theta_\eta)\|_*  \leq \frac{\lambda_* + C_s}{\lambda_* - C_s  }\| \mr{P}_{B^*}(\Theta_\eta)\|_* .
	\end{align}
\end{proposition}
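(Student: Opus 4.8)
The plan is to run the classical ``basic inequality'' argument for penalized convex $M$-estimators, with the Huber data-fit term $g(B):=\lambda_o^2\sum_{i=1}^n H\big(\frac{y_i-\langle X_i,B\rangle}{\lambda_o\sqrt{n}}\big)$ in place of the quadratic loss and the nuclear norm in place of $\ell_1$. Since $g$ is convex and differentiable and $\hat{B}$ minimizes $\mr{obj}_H=g+\lambda_*\|\cdot\|_*$, for every $\eta\in[0,1]$ the convex combination $B^*+\Theta_\eta=(1-\eta)B^*+\eta\hat{B}$ satisfies $\mr{obj}_H(B^*+\Theta_\eta)\le\mr{obj}_H(B^*)$; combining this with the first-order convexity inequality $g(B^*+\Theta_\eta)-g(B^*)\ge\langle\nabla g(B^*),\Theta_\eta\rangle$ gives the basic inequality
\[
\lambda_*\big(\|B^*+\Theta_\eta\|_*-\|B^*\|_*\big)\ \le\ -\langle\nabla g(B^*),\Theta_\eta\rangle .
\]

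First I would treat the right-hand side. A direct computation gives $\nabla g(B^*)=-\frac{\lambda_o}{\sqrt{n}}\sum_{i=1}^n h\big(\frac{\xi_i+\sqrt{n}\theta_i^*}{\lambda_o\sqrt{n}}\big)X_i$. Because $|h|\le1$ everywhere and $\theta_i^*=0$ for $i\in I_I$, the vector $u_i:=h\big(\frac{\xi_i+\sqrt{n}\theta_i^*}{\lambda_o\sqrt{n}}\big)-h\big(\frac{\xi_i}{\lambda_o\sqrt{n}}\big)$ is $o$-sparse with $\|\vecu\|_\infty\le2$, hence an admissible choice in \eqref{ine:det:upper0}. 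Writing $-\langle\nabla g(B^*),\Theta_\eta\rangle=\frac{\lambda_o}{\sqrt{n}}\sum_i h\big(\frac{\xi_i}{\lambda_o\sqrt{n}}\big)\langle X_i,\Theta_\eta\rangle+\frac{\lambda_o}{\sqrt{n}}\sum_i u_i\langle X_i,\Theta_\eta\rangle$ and applying \eqref{ine:det:xis0} to the first sum and \eqref{ine:det:upper0} to the second bounds it by $(a_\mr{F}r_{a,\mr{F}}+b_\mr{F}r_{b,\mr{F}})\|\mr{T}_\Sigma(\Theta_\eta)\|_\mr{F}+(a_*r_{a,*}+b_*r_{b,*})\|\Theta_\eta\|_*$. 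For the left-hand side I would use decomposability of the nuclear norm: $\mr{P}_{B^*}^\bot(\Theta_\eta)$ has row and column spaces orthogonal to those of $B^*$, so $\|B^*+\mr{P}_{B^*}^\bot(\Theta_\eta)\|_*=\|B^*\|_*+\|\mr{P}_{B^*}^\bot(\Theta_\eta)\|_*$, and a triangle inequality gives $\|B^*+\Theta_\eta\|_*-\|B^*\|_*\ge\|\mr{P}_{B^*}^\bot(\Theta_\eta)\|_*-\|\mr{P}_{B^*}(\Theta_\eta)\|_*$. Combining,
\[
\lambda_*\big(\|\mr{P}_{B^*}^\bot(\Theta_\eta)\|_*-\|\mr{P}_{B^*}(\Theta_\eta)\|_*\big)\ \le\ (a_\mr{F}r_{a,\mr{F}}+b_\mr{F}r_{b,\mr{F}})\|\mr{T}_\Sigma(\Theta_\eta)\|_\mr{F}+(a_*r_{a,*}+b_*r_{b,*})\|\Theta_\eta\|_* .
\]

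The remaining task, and the step I expect to be the main obstacle, is to trade the Frobenius term for a multiple of $\|\Theta_\eta\|_*$ so that, after substituting $\|\Theta_\eta\|_*=\|\mr{P}_{B^*}(\Theta_\eta)\|_*+\|\mr{P}_{B^*}^\bot(\Theta_\eta)\|_*$, the inequality becomes $(\lambda_*-C_s)\|\mr{P}_{B^*}^\bot(\Theta_\eta)\|_*\le(\lambda_*+C_s)\|\mr{P}_{B^*}(\Theta_\eta)\|_*$, whereupon \eqref{ine:det:par} licenses the division that yields the claim. I would distinguish two regimes. If $\|\mr{T}_\Sigma(\Theta_\eta)\|_\mr{F}\le\frac{1}{c_\kappa\sqrt{r}}\|\Theta_\eta\|_*$, substitute this directly; the coefficients then collapse into $\frac{a_\mr{F}r_{a,\mr{F}}+b_\mr{F}r_{b,\mr{F}}}{c_\kappa\sqrt{r}}+a_*r_{a,*}+b_*r_{b,*}\le C_s$, and since $t\mapsto\frac{\lambda_*+t}{\lambda_*-t}$ is increasing on $(0,\lambda_*)$, replacing this constant by the larger $C_s$ (whose $\sqrt{2}$ factors leave room for the extra bookkeeping incurred when the $\mr{P}_{B^*}$ and $\mr{P}_{B^*}^\bot$ pieces of the Frobenius term are recombined) only weakens the bound. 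In the complementary regime $\|\Theta_\eta\|_*<c_\kappa\sqrt{r}\|\mr{T}_\Sigma(\Theta_\eta)\|_\mr{F}$ I would instead invoke $\mr{MRE}(r,c_0,\kappa)$ through Lemma \ref{l:rev}; the useful structural observation is that $\mr{P}_{B^*}(\Theta_\eta)$ automatically satisfies the cone constraint \eqref{con:re-v} with $E=B^*$ (because $\mr{P}_{B^*}^\bot\mr{P}_{B^*}=0$), so Lemma \ref{l:rev} applies to $\mr{P}_{B^*}(\Theta_\eta)$ and the surviving Frobenius contribution can be controlled in terms of $\|\mr{P}_{B^*}(\Theta_\eta)\|_*$, again closing the cone inequality. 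Tracking the numerical constants carefully through these two cases is the only genuinely delicate point; everything else is routine optimality/convexity bookkeeping.
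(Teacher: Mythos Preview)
Your route to the basic inequality is cleaner than the paper's but lands in the same place: the paper reaches
\[
0\le \frac{\lambda_o}{\sqrt{n}}\sum_{i} h\!\left(\frac{\xi_i}{\lambda_o\sqrt{n}}\right)\langle X_i,\Theta_\eta\rangle
+\frac{\lambda_o}{\sqrt{n}}\sum_{i} w_i\langle X_i,\Theta_\eta\rangle
+\eta\lambda_*\big(\|B^*\|_*-\|\hat{B}\|_*\big)
\]
via a monotonicity trick (Lemma F.2 of Fan--Liu--Sun--Zhang together with the subdifferential optimality condition), whereas your direct use of $\mr{obj}_H(B^*+\Theta_\eta)\le\mr{obj}_H(B^*)$ plus first-order convexity of the Huber part gets to the equivalent inequality more quickly. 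The splitting into the clean-noise piece and an $o$-sparse $\vecu$ with $\|\vecu\|_\infty\le2$, followed by \eqref{ine:det:xis0}, \eqref{ine:det:upper0}, and nuclear-norm decomposability, is exactly what the paper does.

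Your first regime is precisely the paper's argument. In fact the paper inserts, \emph{in the proof} (not in the statement), the extra hypothesis $\|\mr{T}_\Sigma(\Theta_\eta)\|_\mr{F}\le\frac{1}{c_\kappa\sqrt{r}}\|\Theta_\eta\|_*$, after which everything collapses to $(\lambda_*-C_s)\|\mr{P}_{B^*}^\bot(\Theta_\eta)\|_*\le(\lambda_*+C_s)\|\mr{P}_{B^*}(\Theta_\eta)\|_*$.

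Your second regime, however, does not go through. Lemma \ref{l:rev} delivers $\|M\|_*\le c_\kappa\sqrt{r}\,\|\mr{T}_\Sigma(M)\|_\mr{F}$, i.e.\ a \emph{lower} bound on $\|\mr{T}_\Sigma(M)\|_\mr{F}$; what you need is an \emph{upper} bound on $\|\mr{T}_\Sigma(\Theta_\eta)\|_\mr{F}$ so that the Frobenius term can be absorbed into the nuclear norms. Applying the lemma to $M=\mr{P}_{B^*}(\Theta_\eta)$ only yields $\|\mr{P}_{B^*}(\Theta_\eta)\|_*\le c_\kappa\sqrt{r}\,\|\mr{T}_\Sigma(\mr{P}_{B^*}(\Theta_\eta))\|_\mr{F}$, which neither controls $\|\mr{T}_\Sigma(\Theta_\eta)\|_\mr{F}$ nor points the right way. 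There is no way to ``close the cone inequality'' here with these tools.

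This gap is harmless for the paper's program, though: the paper does not prove (or need) the cone conclusion in your second regime. Proposition \ref{p:coe-1-2-norm} handles that case trivially --- if $\|\Theta_\eta\|_*<c_\kappa\sqrt{r}\,\|\mr{T}_\Sigma(\Theta_\eta)\|_\mr{F}$ then the target inequality $\|\Theta_\eta\|_*\le c_\kappa\sqrt{r}\,\|\mr{T}_\Sigma(\Theta_\eta)\|_\mr{F}$ is immediate --- and invokes Proposition \ref{p:starMRE} only in the complementary case. In other words, the proposition as stated is a bit imprecise; both the paper's proof and your first-regime argument really establish it only under the additional hypothesis $\|\mr{T}_\Sigma(\Theta_\eta)\|_\mr{F}\le\frac{1}{c_\kappa\sqrt{r}}\|\Theta_\eta\|_*$, and that is all that is ever used downstream. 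Drop the second-regime argument and note the hidden hypothesis, and your proof is complete and, if anything, more transparent than the paper's.
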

\begin{proof}
Let  $\hat{\Theta} = \hat{B}-B^*$ and 
\begin{align}
 Q'(\eta) =\frac{\lambda_o}{\sqrt{n}}\sum_{i=1}^n \{-h (r_{B_\eta,i})+h(r_{B^*,i}) \}\langle X_i, \hat{B}-B^*\rangle,
 \end{align}
where
\begin{align}
	r_{M,i} =\frac{y_i-\langle X_i, M\rangle}{\lambda_o \sqrt{n}}.
\end{align}
From the proof of Lemma F.2. of \cite{FanLiuSunZha2018Lamm}, we have $\eta Q'(\eta) \leq \eta Q'(1)$ and this means
\begin{align}
\label{ine:det:1}
\sum_{i=1}^n\frac{\lambda_o}{\sqrt{n}}\left\{-h (r_{B^*+\Theta_\eta,i})+h(r_{B^*,i})\right\}\langle X_i,\Theta_\eta\rangle  &\leq \sum_{i=1}^n\frac{\lambda_o}{\sqrt{n}}\eta \left\{-h (r_{B^*+\Theta_\eta,i}) +h(r_{B^*,i})\right\}\langle X_i,\hat{\Theta}\rangle.
\end{align}
Let $\partial^* M$ be the sub-differential of $\|M\|_*$.
Adding $\eta \lambda_*\left(\|\hat{B}\|_*-\|B^*\|_*\right) $ to both sides of \eqref{ine:det:1}, we have

\begin{align}
\label{ine:det:2}
&\sum_{i=1}^n\frac{\lambda_o}{\sqrt{n}}\left\{-h (r_{B^*+\Theta_\eta,i}) +h(r_{B^*,i})\right\}\langle X_i, \Theta_\eta\rangle +\eta \lambda_*\left(\|\hat{B}\|_*-\|B^*\|_*\right)\nonumber \\
 &\quad\quad\leq \sum_{i=1}^n\frac{\lambda_o}{\sqrt{n}}\eta \left\{-h (r_{B^*+\Theta_\eta,i}) +h(r_{B^*,i})\right\}\langle X_i, \hat{\Theta}\rangle+\eta \lambda_*\left(\|\hat{B}\|_*-\|B^*\|_*\right) \nonumber\\
 & \quad\quad\stackrel{(a)}{\leq}  \sum_{i=1}^n\frac{\lambda_o}{\sqrt{n}}\eta \left\{-h (r_{B^*+\Theta_\eta,i}) +h(r_{B^*,i})\right\}\langle X_i, \hat{\Theta}\rangle+\eta \lambda_*\langle \partial^* \hat{B},\hat{\Theta}\rangle \nonumber \\
 &\quad\quad= \sum_{i=1}^n\frac{\lambda_o}{\sqrt{n}}\eta h(r_{B^*,i})\langle X_i, \hat{\Theta}\rangle+\eta \left\{-\sum_{i=1}^n\frac{\lambda_o}{\sqrt{n}} h (r_{B^*+\Theta_\eta,i}) \langle X_i,\hat{\Theta}\rangle+ \lambda_*\langle \partial^* \hat{B},\hat{\Theta}\rangle\right\}\nonumber\\
  &\quad\quad\stackrel{(b)}{=} \sum_{i=1}^n\frac{\lambda_o}{\sqrt{n}}\eta h(r_{B^*,i})\langle X_i, \hat{\Theta}\rangle = \sum_{i=1}^n\frac{\lambda_o}{\sqrt{n}}  h(r_{B^*,i})\langle X_i,\Theta_\eta\rangle,
\end{align}
where (a) follows from $\|\hat{B}\|_*-\|B^*\|_* \leq \langle \partial^* \hat{B},\hat{\Theta}\rangle$, which is the definition of the sub-differential, and (b) follows from the fact that  $\hat{B}$ is an optimal solution of~\eqref{obj2-intro-m}.

From the convexity of the  Huber loss  and Lemma~\ref{ap:ine:conv}, the first term of the L.H.S. of \eqref{ine:det:2}  satisfies 
\begin{align}
	\label{ine:det:3}
	&\sum_{i=1}^n\frac{\lambda_o}{\sqrt{n}}\left\{-h (r_{B^*+\Theta_\eta,i}) +h(r_{B^*,i})\right\}\langle X_i, \Theta_\eta\rangle= \sum_{i=1}^n \left\{h(r_{B^*,i}) -h (r_{B^*+\Theta_\eta,i})\right\}\left\{h(r_{B^*,i})-h (r_{B_\eta,i})\right\}\geq 0.
\end{align}
The R.H.S. of \eqref{ine:det:2} can be decomposed as
\begin{align}
	\label{ine:det:4}
	\sum_{i=1}^n\frac{\lambda_o}{\sqrt{n}} h(r_{B^*,i})\langle X_i,\Theta_\eta\rangle &= \sum_{i \in I_I}\frac{\lambda_o}{\sqrt{n}} h(r_{B^*,i})\langle X_i,\Theta_\eta\rangle+\sum_{i \in I_o}\frac{\lambda_o}{\sqrt{n}} h(r_{B^*,i})\langle X_i, \Theta_\eta\rangle \nonumber\\
	&\quad  \quad= \sum_{i \in I_I}\frac{\lambda_o}{\sqrt{n}}  h\left( \frac{\xi_i}{\lambda_o \sqrt{n}}\right)\langle X_i, \Theta_\eta\rangle+\sum_{i \in I_o}\frac{\lambda_o}{\sqrt{n}} h(r_{B^*,i})\langle X_i, \Theta_\eta\rangle \nonumber\\
	&\quad  \quad= \sum_{i =1}^n\frac{\lambda_o}{\sqrt{n}}  h\left( \frac{\xi_i}{\lambda_o \sqrt{n}}\right)\langle X_i, \Theta_\eta\rangle+\sum_{i =1}^n\frac{\lambda_o}{\sqrt{n}} 
	w_i\langle X_i,\Theta_\eta\rangle,
\end{align}
where
\begin{align}
	w_i =
	\begin{cases}
		\ 0 & (i \in I_I) \\
		\ {\displaystyle h(r_{B^*,i})-h\left( \frac{\xi_i}{\lambda_o \sqrt{n}}\right)} & (i \in I_O)
	\end{cases}
	.
\end{align}
and $\vecw = (w_1,\cdots,w_n)^\top$.
From~\eqref{ine:det:2}, \eqref{ine:det:3} and \eqref{ine:det:4}, we have
\begin{align}
	\label{ine:det:5}
0\leq \sum_{i=1}^n\frac{\lambda_o}{\sqrt{n}} h\left(\frac{\xi_i}{\lambda_o\sqrt{n}} \right) \langle X_i, \Theta_\eta\rangle +\sum_{i=1}^n\frac{\lambda_o}{\sqrt{n}} w_i \langle X, \Theta_\eta\rangle+	\eta \lambda_*\left(\|B^*\|_*-\|\hat{B}\|_*\right).
\end{align}
Furthermore, we evaluate the right-hand side of~\eqref{ine:det:5}.
First, from~\eqref{ine:det:xis0},
\begin{align}
	\label{ine:det:6}
\sum_{i=1}^n\frac{\lambda_o}{\sqrt{n}} h\left(\frac{\xi_i}{\lambda_o\sqrt{n}}\right) \langle X_i, \Theta_\eta\rangle \leq a_{\mr{F}} r_{a,\mr{F}} \left\| \mr{T}_\Sigma (\Theta_\eta)\right\|_\mr{F} + a_*r_{a,*}\|\Theta_\eta\|_*.
\end{align}
Second, from~\eqref{ine:det:upper0}, we have
\begin{align}
	\label{ine:det:7}
\sum_{i=1}^n\frac{\lambda_o}{\sqrt{n}} w_i \langle X_i, \Theta_\eta\rangle& \leq  \sqrt{2}\left(b_\mr{F}r _{b,\mr{F}} \|\mr{T}_\Sigma (\Theta_\eta)\|_\mr{F} +b_*r_{b,*}\|\Theta_\eta\|_* \right).
\end{align}
From~\eqref{ine:det:5},~\eqref{ine:det:6}, \eqref{ine:det:7}  and the assumption $\|\mr{T}_\Sigma(\Theta_\eta) \|_\mr{F} \leq \frac{1}{c_\kappa\sqrt{r}} \|\Theta_\eta\|_*$, we have
\begin{align}
0 &\leq  a_{\mr{F}} r_{a,\mr{F}} \left\| \mr{T}_\Sigma (\Theta_\eta)\right\|_\mr{F} + a_*r_{a,*}\|\Theta_\eta\|_*+  \sqrt{2}\left(b_\mr{F}r_{b,\mr{F}} \|\mr{T}_\Sigma (\Theta_\eta)\|_\mr{F} +b_*r_{b,*}\|\Theta_\eta\|_* \right)+\eta \lambda_*\left(\|B^*\|_*-\|\hat{B}\|_*\right) \nonumber\\
& \leq C_s \|\Theta_\eta\|_*+\eta \lambda_*\left(\|B^*\|_*-\|\hat{B}\|_*\right).
\end{align}
Furthermore, we see
\begin{align}
0
&\leq C_s\|\Theta_\eta\|_*+\eta \lambda_*\left(\|B^*\|_*-\|\hat{B}\|_*\right) \nonumber \\
& = C_s \left(\|\mr{P}_{B^*} (\Theta_\eta)\|_*+\|\mr{P}_{B^*}^\bot (\Theta_\eta)\|_*\right)  +\eta \lambda_*(\|\mr{P}_{B^*} (B^*)\|_*+\|\mr{P}_{B^*}^\bot (B^*)\|_*-\|\mr{P}_{B^*} (\hat{B})\|_*-\|\mr{P}_{B^*}^\bot (\hat{B})\|_*)\nonumber\\
& =  C_s \left(\|\mr{P}_{B^*} (\Theta_\eta)\|_*+\|\mr{P}_{B^*}^\bot (\Theta_\eta)\|_*\right) + \eta\lambda_*(\|\mr{P}_{B^*} (B^*)\|_*-\|\mr{P}_{B^*} (\hat{B})\|_*-\|\mr{P}_{B^*}^\bot (\hat{B})\|_*)\nonumber\\
& \leq  C_s \left(\|\mr{P}_{B^*} (\Theta_\eta)\|_*+\|\mr{P}_{B^*}^\bot (\Theta_\eta)\|_*\right) + \eta\lambda_*(\|\mr{P}_{B^*} (B^*) - \mr{P}_{B^*} (\hat{B})\|_*-\|\mr{P}_{B^*}^\bot (\hat{B})\|_*)\nonumber\\
& = C_s\left(\|\mr{P}_{B^*} (\Theta_\eta)\|_*+\|\mr{P}_{B^*}^\bot (\Theta_\eta)\|_*\right) +\eta \lambda_*(\|\mr{P}_{B^*} (\Theta_{\hat{B}})\|_*-\|\mr{P}_{B^*}^\bot (\hat{B})\|_*)\nonumber\\
& =  C_s\left(\|\mr{P}_{B^*} (\Theta_\eta)\|_*+\|\mr{P}_{B^*}^\bot (\Theta_\eta)\|_*\right) +\eta \lambda_*(\|\mr{P}_{B^*} (\Theta_{\hat{B}})\|_*-\|\mr{P}_{B^*}^\bot (\hat{B}-B^*)\|_*)\nonumber\\
& =  C_s\left(\|\mr{P}_{B^*} (\Theta_\eta)\|_*+\|\mr{P}_{B^*}^\bot (\Theta_\eta)\|_*\right) + \lambda_*(\|\mr{P}_{B^*} (\Theta_\eta)\|_*-\|\mr{P}_{B^*}^\bot (\Theta_\eta)\|_*)\nonumber\\
& =\left(\lambda_* +  C_s\right)\|\mr{P}_{B^*} (\Theta_\eta)\|_*
+\left(-\lambda_*+  C_s\right)\|\mr{P}_{B^*}^\bot (\Theta_\eta)\|_*
\end{align}
and the proof is complete.

\end{proof}

Combining Lemma~\ref{l:rev} with Proposition~\ref{p:starMRE}, we can easily prove the following proposition, which shows a relation between  $\|\Theta_\eta\|_*$ and $\|\mr{T}_\Sigma(\Theta_\eta)\|_\mr{F}$. 

\begin{proposition}
	\label{p:coe-1-2-norm}
	Suppose the conditions used in Proposition \ref{p:starMRE}.  
	Then,  we obtain
	\begin{align}
		\label{e:l1l2}
		\|\Theta_\eta\|_* \leq c_\kappa \sqrt{r}\|\mr{T}_\Sigma (\Theta_\eta)\|_\mr{F}.
	\end{align}
\end{proposition}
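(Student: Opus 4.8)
The plan is to derive the restricted-cone inequality from Proposition~\ref{p:starMRE} and then feed it into Lemma~\ref{l:rev}. First I would dispose of the trivial case: if $\|\mr{T}_\Sigma(\Theta_\eta)\|_\mr{F} > \frac{1}{c_\kappa\sqrt{r}}\|\Theta_\eta\|_*$, then \eqref{e:l1l2} already holds and there is nothing more to do. So suppose instead that $\|\mr{T}_\Sigma(\Theta_\eta)\|_\mr{F} \leq \frac{1}{c_\kappa\sqrt{r}}\|\Theta_\eta\|_*$; this is exactly the inequality invoked inside the proof of Proposition~\ref{p:starMRE}, so that proposition applies and yields
\begin{align*}
\|\mr{P}_{B^*}^\bot(\Theta_\eta)\|_* \leq \frac{\lambda_*+C_s}{\lambda_*-C_s}\,\|\mr{P}_{B^*}(\Theta_\eta)\|_*.
\end{align*}

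Next I would verify that this places $\Theta_\eta$ in the cone \eqref{con:re-v} attached to the choice $E=B^*$. Because $B^*$ is assumed low rank we have $\mr{rank}(B^*)\leq r$, so $B^*$ is an admissible choice of $E$ in the $\mr{MRE}(r,c_0,\kappa)$ condition; and, using \eqref{ine:det:par} together with the explicit size of $\lambda_*$ prescribed in the main theorems (Theorems~\ref{t:cs:main}, \ref{t:lasso:main}), the multiplicative constant $\frac{\lambda_*+C_s}{\lambda_*-C_s}$ is at most $c_0$. Hence \eqref{con:re-v} holds with $M=\Theta_\eta$ and $E=B^*$, and Lemma~\ref{l:rev} immediately delivers $\|\Theta_\eta\|_* \leq c_\kappa\sqrt{r}\,\|\mr{T}_\Sigma(\Theta_\eta)\|_\mr{F}$, which is \eqref{e:l1l2}. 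This closes both cases.

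The only genuine obstacle is that last comparison $\frac{\lambda_*+C_s}{\lambda_*-C_s}\leq c_0$: condition \eqref{ine:det:par} by itself only guarantees the denominator is positive (so the constant could a priori be huge), and one must instead appeal to the quantitative relation between $\lambda_*$ and $C_s$ built into the statements of the later theorems — equivalently, $c_0$ should be fixed large enough from the outset, which costs nothing since every downstream bound is phrased through $c_\kappa=\frac{c_0+1}{\kappa}$. Granting that, the proof is just the short composition of Proposition~\ref{p:starMRE} with Lemma~\ref{l:rev}, the initial case split making the argument unconditional.
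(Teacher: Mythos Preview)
Your proof is correct and follows the same route as the paper: case-split on whether $\|\Theta_\eta\|_* < c_\kappa\sqrt{r}\|\mr{T}_\Sigma(\Theta_\eta)\|_\mr{F}$, and in the nontrivial case apply Proposition~\ref{p:starMRE} to land in the cone \eqref{con:re-v} with $E=B^*$, then invoke Lemma~\ref{l:rev}. You are in fact more explicit than the paper about the step $\frac{\lambda_*+C_s}{\lambda_*-C_s}\leq c_0$: the paper's own proof simply writes that Proposition~\ref{p:starMRE} gives $\|\mr{P}_{B^*}^\bot(\Theta_\eta)\|_*\leq c_0\|\mr{P}_{B^*}(\Theta_\eta)\|_*$ without further comment, whereas you correctly observe that \eqref{ine:det:par} alone does not force this and that it is instead guaranteed by the quantitative choice of $\lambda_*$ (the numerical constants $c_{mcs}$, $c_{lasso}$, $c_{mc1}$, $c_{mc2}$) made in the downstream theorems.
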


\begin{proof}
When $\|\Theta_\eta\|_* < c_\kappa\sqrt{r}\|\mr{T}_\Sigma (\Theta_\eta)\|_\mr{F}$, we obtain \eqref{e:l1l2} immediately.
When $\|\Theta_\eta\|_* \geq c_\kappa\sqrt{r}\|\Theta_\eta\|_\mr{F}$,
from Proposition~\ref{p:starMRE}, we see that $\Theta_\eta$ satisfies $\|\mr{P}_{B^*} ^\bot (\Theta_\eta) \|_* \le c_0 \| \mr{P}_{B^*} (\Theta_\eta)\|_*$, that is, the condition \eqref{con:re-v}. Hence, because $\Sigma$ satisfies ${\rm MRE}(r,c_0,\kappa)$, we have the property \eqref{i:re-norm-1} with $\Theta_\eta$, so that we see $\|\Theta_\eta\|_*\le c_\kappa\sqrt{r}\| \mr{T}_\Sigma (\Theta_\eta)\|_{\mr{F}}$, and then the property \eqref{e:l1l2} holds. 
\end{proof}

\subsection{Main theorem}
\label{sec:main}

\begin{theorem}
	\label{t:det:main}
	Consider the optimization problem \eqref{obj2-intro-m}. 
	Assume all the conditions used in Proposition \ref{p:starMRE}. 
	Assume that
	\begin{align}
		\label{ine:det:lower}
		c_1 \| \mr{T}_\Sigma (\Theta_{\eta})\|_\mr{F}^2-c_2r_{c,\mr{F}}\|\mr{T}_\Sigma (\Theta_{\eta})\|_{\mr{F}} -c_3 r_{c}\leq \lambda_o^2\sum_{i=1}^n \left\{	-h \left(\frac{\xi_i-\langle X_i,\Theta_{\eta}\rangle}{\lambda_o\sqrt{n}} \right)+h  \left(\frac{\xi_i}{\lambda_o\sqrt{n}}  \right)\right\} \frac{\langle X_i,\Theta_{\eta}\rangle}{\lambda_o \sqrt{n}}
	\end{align}
	where  $c_1>0, c_2,c_3, r_{\mr{F},c},r_c \geq 0$ are some numbers.
Suppose that
		\begin{align}
		\label{ine:det:condc}
		\frac{c_2r_{c,\mr{F}} +C_{\lambda_*} +\sqrt{c_1c_3r_c}}{c_1}  < r_0,
	\end{align}
	 where
	\begin{align}
		C_{\lambda_*} = (a_\mr{F}r_{a,\mr{F}}+\sqrt{2}b_\mr{F}r_{b,\mr{F}}) + (a_*r_{a,*}+\sqrt{2}b_*r_{b,*})c_\kappa \sqrt{r} +\lambda_*  c_\kappa \sqrt{r}.
	\end{align}
	Then, the optimal solution $\hat{B}$ satisfies
	\begin{align}
	\label{ine:det:main-2}
		\|\mr{T}_\Sigma(\hat{B} -B^*)\|_\mr{F} & \leq r_0.
	\end{align}
\end{theorem}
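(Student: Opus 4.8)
First I would reduce to a localized statement. If $\|\mr{T}_\Sigma(\hat{B}-B^*)\|_\mr{F}\le r_0$ there is nothing to prove, so assume the reverse inequality and set $\eta:=r_0/\|\mr{T}_\Sigma(\hat{B}-B^*)\|_\mr{F}\in(0,1)$, so that $\Theta_\eta=\eta(\hat{B}-B^*)$ lies on the sphere $\|\mr{T}_\Sigma(\Theta_\eta)\|_\mr{F}=r_0$. All the hypotheses inherited from Proposition \ref{p:starMRE} — in particular \eqref{ine:det:xis0}, \eqref{ine:det:upper0} and $\lambda_*-C_s>0$ — are assumed for this $\Theta_\eta$, and, together with Proposition \ref{p:coe-1-2-norm}, they give the cone bound $\|\Theta_\eta\|_*\le c_\kappa\sqrt r\,\|\mr{T}_\Sigma(\Theta_\eta)\|_\mr{F}=c_\kappa\sqrt r\, r_0$. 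This is exactly the membership that legitimizes applying the restricted strong convexity hypothesis \eqref{ine:det:lower} to $\Theta_\eta$; deriving $\|\mr{T}_\Sigma(\Theta_\eta)\|_\mr{F}<r_0$ will contradict the choice of $\eta$ and finish the proof.

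Next I would re-run the basic inequality \eqref{ine:det:1}--\eqref{ine:det:4} from the proof of Proposition \ref{p:starMRE} at this $\Theta_\eta$, but \emph{without} discarding the convexity term through \eqref{ine:det:3}. In the notation $r_{M,i}=(y_i-\langle X_i,M\rangle)/(\lambda_o\sqrt n)$ of that proof, writing $T:=\sum_{i=1}^n\tfrac{\lambda_o}{\sqrt n}\{-h(r_{B^*+\Theta_\eta,i})+h(r_{B^*,i})\}\langle X_i,\Theta_\eta\rangle$, one obtains
\[
T-\sum_{i=1}^n\tfrac{\lambda_o}{\sqrt n}w_i\langle X_i,\Theta_\eta\rangle\ \le\ \sum_{i=1}^n\tfrac{\lambda_o}{\sqrt n}h\!\Big(\tfrac{\xi_i}{\lambda_o\sqrt n}\Big)\langle X_i,\Theta_\eta\rangle+\eta\lambda_*\big(\|B^*\|_*-\|\hat{B}\|_*\big),
\]
with $\vecw$ the $o$-sparse, uniformly bounded vector of \eqref{ine:det:4}. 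Keeping the $w$-sum on the left is the crucial point: $T-\sum_i\tfrac{\lambda_o}{\sqrt n}w_i\langle X_i,\Theta_\eta\rangle$ equals $\sum_i\tfrac{\lambda_o}{\sqrt n}\{-h(r_{B^*+\Theta_\eta,i})+h(\xi_i/(\lambda_o\sqrt n))\}\langle X_i,\Theta_\eta\rangle$, which coincides with the left-hand side of \eqref{ine:det:lower} except on the $o$ contaminated indices, where each summand is perturbed by at most $2\tfrac{\lambda_o}{\sqrt n}|\langle X_i,\Theta_\eta\rangle|$ because $|h|\le1$; by \eqref{ine:det:upper0} (applied, with the same normalization as in \eqref{ine:det:7}, to the $o$-sparse sign vector supported on $I_O$) this discrepancy is at most $\sqrt2\,(b_\mr{F}r_{b,\mr{F}}\|\mr{T}_\Sigma(\Theta_\eta)\|_\mr{F}+b_*r_{b,*}\|\Theta_\eta\|_*)$. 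Hence \eqref{ine:det:lower} bounds the left-hand side below by $c_1\|\mr{T}_\Sigma(\Theta_\eta)\|_\mr{F}^2-c_2r_{c,\mr{F}}\|\mr{T}_\Sigma(\Theta_\eta)\|_\mr{F}-c_3r_c-\sqrt2(b_\mr{F}r_{b,\mr{F}}\|\mr{T}_\Sigma(\Theta_\eta)\|_\mr{F}+b_*r_{b,*}\|\Theta_\eta\|_*)$.

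For the right-hand side I would invoke \eqref{ine:det:xis0} for the noise sum and, exactly as in the chain following \eqref{ine:det:7} in the proof of Proposition \ref{p:starMRE}, decomposability of the nuclear norm to write $\eta\lambda_*(\|B^*\|_*-\|\hat{B}\|_*)\le\lambda_*\|\mr{P}_{B^*}(\Theta_\eta)\|_*-\lambda_*\|\mr{P}_{B^*}^\bot(\Theta_\eta)\|_*$. Setting $x:=\|\mr{T}_\Sigma(\Theta_\eta)\|_\mr{F}$, splitting $\|\Theta_\eta\|_*=\|\mr{P}_{B^*}(\Theta_\eta)\|_*+\|\mr{P}_{B^*}^\bot(\Theta_\eta)\|_*$, observing that the coefficient of $\|\mr{P}_{B^*}^\bot(\Theta_\eta)\|_*$ equals $a_*r_{a,*}+\sqrt2b_*r_{b,*}-\lambda_*<0$ (since $\lambda_*>C_s$), so that term is discarded, and bounding the surviving $\|\mr{P}_{B^*}(\Theta_\eta)\|_*\le\|\Theta_\eta\|_*\le c_\kappa\sqrt r\,x$ through Proposition \ref{p:coe-1-2-norm}, the nuclear-norm contributions collapse to exactly $C_{\lambda_*}\,x$ with the $C_{\lambda_*}$ of the statement. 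The resulting scalar inequality $c_1x^2\le(c_2r_{c,\mr{F}}+C_{\lambda_*})x+c_3r_c$ then gives, via $\sqrt{A^2+4c_1c_3r_c}\le A+2\sqrt{c_1c_3r_c}$, that $x\le(c_2r_{c,\mr{F}}+C_{\lambda_*}+\sqrt{c_1c_3r_c})/c_1$, which by \eqref{ine:det:condc} is strictly less than $r_0$; since $x=r_0$ by construction this is a contradiction, so in fact $\|\mr{T}_\Sigma(\hat{B}-B^*)\|_\mr{F}\le r_0$.

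The step I expect to be the main obstacle — or at least the one demanding care — is the passage from the genuine-residual term $T$ to the clean-residual quantity in \eqref{ine:det:lower}: one must transfer the $w$-sum to the left-hand side \emph{before} comparing with \eqref{ine:det:lower}, so that only a single $o$-sparse correction is incurred and the constants close up into the stated $C_{\lambda_*}$, $C_s$ rather than inflating them. A secondary subtlety is that the whole argument must be run at the rescaled point $\Theta_\eta$ on the radius-$r_0$ sphere inside the cone — which is precisely the regime in which the randomized counterparts of \eqref{ine:det:lower} (Proposition \ref{p:sc1:main}, Corollaries \ref{c:mcsc1:main} and \ref{c:mcsc2:main}) are established.
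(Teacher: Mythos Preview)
Your proposal is essentially correct and follows the same contradiction argument as the paper: assume $\|\mr{T}_\Sigma(\hat B-B^*)\|_\mr{F}>r_0$, choose $\eta$ to land on the sphere $\|\mr{T}_\Sigma(\Theta_\eta)\|_\mr{F}=r_0$, invoke Proposition~\ref{p:coe-1-2-norm} for the cone bound, isolate a single $o$-sparse outlier correction controlled via \eqref{ine:det:upper0}, bound the noise term via \eqref{ine:det:xis0}, bound the penalty, and close the resulting quadratic in $x=\|\mr{T}_\Sigma(\Theta_\eta)\|_\mr{F}$. The only difference from the paper is bookkeeping: the paper decomposes both sides over $I_I$ and $I_O$ in parallel and packages the outlier corrections into a single vector $\vecw'$ (its display \eqref{ine:det2:4}), whereas you first subtract the $\vecw$-sum and then observe that what remains differs from the clean quantity in \eqref{ine:det:lower} only on $I_O$. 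These are algebraically the same manipulation.

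One small inaccuracy: in your treatment of the penalty you claim $\|\mr{P}_{B^*}(\Theta_\eta)\|_*\le\|\Theta_\eta\|_*$, which is \emph{not} true in general for the projections $\mr{P}_E,\mr{P}_E^\perp$ as defined here (the nuclear norm of the ``low-rank part'' can exceed that of the full matrix). This detour through decomposability is in any case unnecessary. The paper simply uses
\[
\eta\lambda_*\big(\|B^*\|_*-\|\hat B\|_*\big)\ \le\ \lambda_*\|\Theta_\eta\|_*\ \le\ \lambda_*\,c_\kappa\sqrt r\,\|\mr{T}_\Sigma(\Theta_\eta)\|_\mr{F}
\]
directly from the triangle inequality and Proposition~\ref{p:coe-1-2-norm}; this already delivers exactly the constant $C_{\lambda_*}$ of the statement, so dropping the $\mr{P}_{B^*}/\mr{P}_{B^*}^\perp$ split both fixes the error and shortens the argument.
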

\begin{proof}
We prove Theorem~\ref{t:det:main} in a manner similar to   the proof of Lemma B.7 in \cite{FanLiuSunZha2018Lamm} and the proof of Theorem 2.1 in \cite{CheZho2020Robust}.

For  fixed $r_0>0$, we define
\begin{align}
\mbb{B} :=\left\{ B\, :\, \| \mr{T}_\Sigma(B-B^*)\|_\mr{F} \leq r_0\right\}.
\end{align}
We prove $\hat{B} \in \mbb{B}$ by assuming $\hat{B} \notin \mbb{B}$ and deriving a contradiction. For $\hat{B} \notin \mbb{B}$, we can find some $\eta  \in [0,1]$ such that $\|\mr{T}_\Sigma(\Theta_\eta)\|_\mr{F}=r_0$.

From~\eqref{ine:det:2}, we have
\begin{align}
	\label{ine:det2:1}
	&\sum_{i=1}^n\frac{\lambda_o}{\sqrt{n}}\left\{-h(r_{B^*+\Theta_\eta,i}) +h(r_{B^*,i})\right\}\langle X_i,\Theta_\eta\rangle \leq \sum_{i=1}^n\frac{\lambda_o}{\sqrt{n}} h(r_{B^*,i})\langle X_i,\Theta_\eta\rangle+\eta \lambda_*\left(\|B^*\|_*-\|\hat{B}\|_*\right).
\end{align}

The L.H.S of~\eqref{ine:det2:1} can be decomposed as
\begin{align}
	\label{ine:det2:2}
& \sum_{i=1}^n\frac{\lambda_o}{\sqrt{n}}\left\{-h(r_{B^*+\Theta_\eta,i}) +h(r_{B^*,i})\right\}\langle X_i, \Theta_\eta\rangle \nonumber\\
&= \sum_{i \in I_O}\frac{\lambda_o}{\sqrt{n}}\left\{-h(r_{B^*+\Theta_\eta,i}) +h(r_{B^*,i})\right\}\langle X_i, \Theta_\eta\rangle+\sum_{i \in I_I}\frac{\lambda_o}{\sqrt{n}}\left\{-h(r_{B^*+\Theta_\eta,i}) +h(r_{B^*,i})\right\}\langle X_i, \Theta_\eta \rangle  \nonumber\\
&=  \sum_{i \in I_O}\frac{\lambda_o}{\sqrt{n}}\left\{-h(r_{B^*+\Theta_\eta,i}) +h(r_{B^*,i})\right\}\langle X_i, \Theta_\eta\rangle  +\sum_{i \in I_I}\frac{\lambda_o}{\sqrt{n}}\left\{-h\left(\frac{\xi_i-\langle X_i ,\Theta_\eta\rangle}{\lambda_o \sqrt{n}}\right) +h\left(\frac{\xi_i}{\lambda_o\sqrt{n}}\right)\right\}\langle X_i, \Theta_\eta \rangle  \nonumber\\
& =  \sum_{i \in I_O}\frac{\lambda_o}{\sqrt{n}}    \left\{-h(r_{B^*+\Theta_\eta,i}) +h(r_{B^*,i})+ h\left(\frac{\xi_i-\langle X_i ,\Theta_\eta\rangle}{\lambda_o \sqrt{n}}\right) -h\left(\frac{\xi_i}{\lambda_o\sqrt{n}}\right)\right\}  \langle X_i, \Theta_\eta\rangle  \nonumber\\
&+\sum_{i =1}^n\frac{\lambda_o}{\sqrt{n}}\left\{-h\left(\frac{\xi_i-\langle X_i ,\Theta_\eta\rangle}{\lambda_o \sqrt{n}}\right) +h\left(\frac{\xi_i}{\lambda_o\sqrt{n}}\right)\right\}\langle X_i, \Theta_\eta\rangle.
\end{align}

The first term of the R.H.S. of \eqref{ine:det2:1} can be decomposed as
\begin{align}
	\label{ine:det2:3}
	 \sum_{i=1}^n\frac{\lambda_o}{\sqrt{n}} h(r_{B^*,i})\langle X_i, \Theta_\eta\rangle &=\sum_{i \in I_O}\frac{\lambda_o}{\sqrt{n}} h(r_{B^*,i})\langle X_i, \Theta_\eta\rangle+\sum_{i \in I_u}\frac{\lambda_o}{\sqrt{n}} h(r_{B^*,i})\langle X_i, \Theta_\eta\rangle\nonumber\\
	 &=\sum_{i \in I_O}\frac{\lambda_o}{\sqrt{n}} h(r_{B^*,i})\langle X_i, \Theta_\eta\rangle+\sum_{i \in I_u}\frac{\lambda_o}{\sqrt{n}} h\left(\frac{\xi_i}{\lambda_o \sqrt{n}}\right)\langle X_i, \Theta_\eta\rangle\nonumber\\
		&=\sum_{i \in I_O}\frac{\lambda_o}{\sqrt{n}}  \left\{h(r_{B^*,i}) -h\left(\frac{\xi_i}{\lambda_o \sqrt{n}}\right)\right\}\langle X_i, \Theta_\eta\rangle+\sum_{i =1}^n\frac{\lambda_o}{\sqrt{n}} h\left(\frac{\xi_i}{\lambda_o \sqrt{n}}\right)\langle X_i, \Theta_\eta\rangle.
\end{align}
From \eqref{ine:det2:1}, \eqref{ine:det2:2} and \eqref{ine:det2:3}, we have
\begin{align}
	\label{ine:det2:4}
	&\sum_{i =1}^n\frac{\lambda_o}{\sqrt{n}}\left\{-h\left(\frac{\xi_i-\langle X_i ,\Theta_\eta\rangle}{\lambda_o \sqrt{n}}\right) +h\left(\frac{\xi_i}{\lambda_o\sqrt{n}}\right)\right\}\langle X_i, \Theta_\eta\rangle
	 \nonumber \\
	&\quad\quad\leq \sum_{i =1}^n\frac{\lambda_o}{\sqrt{n}} h\left(\frac{\xi_i}{\lambda_o \sqrt{n}}\right)\langle X_i, \Theta_\eta\rangle + \sum_{i =1}^n\frac{\lambda_o}{\sqrt{n}} w'_i\langle X_i, \Theta_\eta\rangle +\eta \lambda_*\left(\|B^*\|_*-\|\hat{B}\|_*\right),
\end{align}
where
\begin{align}
	w'_i =
	\begin{cases}
		\ 0 & (i \in I_I) \\
		\ {\displaystyle  h(r_{B^*+\Theta_\eta,i}) -h\left(\frac{\xi_i+\langle X_i ,\Theta_\eta\rangle}{\lambda_o \sqrt{n}}\right) } & (i \in I_O)
	\end{cases}
\end{align}
and $\vecw' = (w_1',\cdots,w_n)$.
We evaluate each term of \eqref{ine:det2:4}. From~\eqref{ine:det:lower}, the L.H.S. of~\eqref{ine:det2:4} is evaluated as
\begin{align}
	c_1 \|\mr{T}_\Sigma(\Theta_\eta)\|_\mr{F}^2 -c_2 r_{c,\mr{F}}\|\mr{T}_\Sigma\Theta_\eta\|_{\mr{F}}-c_3 r_{c} \leq \sum_{i=1}^n\frac{\lambda_o}{\sqrt{n}}\left\{-h \left(\frac{\xi_i-\langle X_i,\Theta_\eta\rangle}{\lambda_o\sqrt{n}} \right) +h\left(\frac{\xi_i}{\lambda_o\sqrt{n}} \right)\right\}\langle X, \Theta_\eta\rangle.
\end{align}
From~\eqref{ine:det:xis0} and~\eqref{e:l1l2} and Proposition \ref{p:coe-1-2-norm}, the first term of the R.H.S. of \eqref{ine:det2:4} is evaluated as
\begin{align}
 \sum_{i=1}^n\frac{\lambda_o}{\sqrt{n}} h \left(\frac{\xi_i}{\lambda_o\sqrt{n}} \right) \langle X_i, \Theta_\eta\rangle &\leq a_\mr{F}r_{a,\mr{F}}\|\mr{T}_\Sigma (\Theta_\eta)\|_\mr{F} +a_*r_{a,*}\|\Theta_\eta\|_*\nonumber\\
 &\leq \left(
	a_\mr{F}r_{a,\mr{F}} +a_*c_\kappa \sqrt{r} r_{a,*}\right)\|\mr{T}_\Sigma (\Theta_\eta)\|_\mr{F}.
\end{align}
From~\eqref{ine:det:upper0} and \eqref{e:l1l2} and Proposition \ref{p:coe-1-2-norm}, the second term of the R.H.S. of \eqref{ine:det2:4} is evaluated as
\begin{align}
	\label{ine:branching1}
 \sum_{i=1}^n\frac{\lambda_o}{\sqrt{n}} w'_i \langle X_i, \Theta_\eta\rangle& \leq \left(b_\mr{F}\frac{r_{b,\mr{F}}}{\sqrt{o}}\|\mr{T}_\Sigma (\Theta_\eta)\|_\mr{F} +b_*\frac{r_{b*}}{\sqrt{o}}\|\Theta_\eta\|_* \right)\|\vecw'\|_2\nonumber\\
 &\leq \sqrt{2}\left(
	b_\mr{F} r_{b,\mr{F}}\|\mr{T}_\Sigma (\Theta_\eta)\|_\mr{F}+b_*c_\kappa \sqrt{r}r_{b,*}\|\mr{T}_\Sigma (\Theta_\eta)\|_\mr{F}\right).
\end{align}
From~\eqref{e:l1l2} and Proposition \ref{p:coe-1-2-norm}, the third term of the R.H.S. of \eqref{ine:det2:4} is evaluated as
\begin{align}
\eta \lambda_*\left(\|B^*\|_*-\|\hat{B}\|_*\right) \leq \lambda_* \|\Theta_\eta\|_* \leq \lambda_* c_\kappa \sqrt{r} \|\mr{T}_\Sigma (\Theta_\eta)\|_\mr{F}.
\end{align}
Combining the above four inequalities  with \eqref{ine:det2:4}, we have
\begin{align}
	\label{ine:det:quad}
	c_1 \|\mr{T}_\Sigma(\Theta_\eta)\|_\mr{F}^2 -c_2 r_{c,\mr{F}}\|\mr{T}_\Sigma(\Theta_\eta)\|_{\mr{F}}-c_3 r_c \leq  C_{\lambda_*}\|\mr{T}_\Sigma(\Theta_\eta)\|_\mr{F}.
 \end{align}
From \eqref{ine:det:quad}, $\sqrt{A+B}  \leq \sqrt{A}+\sqrt{B}$ for $A,B>0$, we have
\begin{align}
	\|\mr{T}_\Sigma(\Theta_\eta)\|_\mr{F} &\leq \frac{c_2r_{c,\mr{F}} +C_{\lambda_*} +\sqrt{c_1c_3r_c}}{c_1} < r_0.
\end{align}
This contradicts $\|\mr{T}_\Sigma(\Theta_\eta)\|_\mr{F} = r_0$.
Consequently, we have $\hat{B} \in \mbb{B}$ and $\|\mr{T}_\Sigma(\hat{B}-B^*)\|_\mr{F} < r_0$.

\end{proof}

\section{Tools for proving Theorem  \ref{t:cs:main}}
\label{sec:inesforcs}
In this section, suppose that Assumption \ref{a:mcs} holds.
\subsection{Derivation of \eqref{ine:det:xis0} under the assumptions of Theorem \ref{t:cs:main}}
\label{sec:a:cs:xis0}

\begin{lemma}
	\label{l:suphx}
	For $0<\delta<1/7$, with probability at least $1-\delta$, we have 
	\begin{align}
		\label{ine:suphx}
	\left|\frac{1}{n} \sum_{i=1}^n  h\left(\frac{\xi_i}{\lambda_o \sqrt{n}}\right) \langle X_i,M\rangle  \right|\leq C L  \left\{\rho\sqrt{\frac{d_1+d_2}{n}}\|M\|_{*}+\sqrt{\frac{\log(1/\delta)}{n}}\|\mr{T}_\Sigma(M)\|_{\mr{F}}\right\}.
	\end{align}
\end{lemma}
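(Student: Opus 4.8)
The plan is to bound the bilinear form $\frac{1}{n}\sum_i h(\xi_i/(\lambda_o\sqrt n))\langle X_i,M\rangle$ by splitting it into its expectation (which vanishes by Assumption \ref{a:mcs}(iii)) and a centered empirical process, and then controlling the latter by a combination of a deviation inequality and a uniform (generic chaining) bound over the relevant class of matrices. First I would normalize: since the inequality is positively homogeneous in $M$, it suffices to prove it for $M$ with, say, $\|\mr{T}_\Sigma(M)\|_\mr{F}=1$, and then the claimed bound should follow for all $M$ by rescaling together with the trivial comparison $\|\mr{T}_\Sigma(M)\|_\mr{F}\le \rho^{1/2}\|M\|_\mr{F}$ or by an interpolation between the nuclear-norm term and the Frobenius term. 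Write $g_i = h(\xi_i/(\lambda_o\sqrt n))$; since $|h|\le 1$ pointwise, each $g_i$ is a bounded random variable, and $g_i$ is a (possibly nonlinear, but bounded) function of $\xi_i$, so the summands $g_i\langle X_i,M\rangle$ are independent across $i$ with $\mbb{E}[g_i X_i]=0$ by Assumption \ref{a:mcs}(iii).

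The key steps, in order: (1) Fix $M$ and apply a Bernstein-type inequality to $\frac1n\sum_i g_i\langle X_i,M\rangle$. Because $|g_i|\le 1$ and $\langle X_i,M\rangle = \langle \mr{vec}(Z_i),\Sigma^{1/2}\mr{vec}(M)\rangle$ is $L$-subGaussian with variance proxy $\lesssim L^2\|\mr{T}_\Sigma(M)\|_\mr{F}^2$, the product $g_i\langle X_i,M\rangle$ is $L$-subGaussian (up to constants) with the same proxy; a subGaussian/subexponential Bernstein bound then gives, for fixed $M$, a deviation of order $L\|\mr{T}_\Sigma(M)\|_\mr{F}\sqrt{\log(1/\delta)/n}$ with probability $\ge 1-\delta$. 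This accounts for the second term on the right-hand side of \eqref{ine:suphx}. (2) To get a bound valid for all $M$ simultaneously — which is what Lemma \ref{l:suphx:main} as stated must provide, since the proof of the main theorem applies it to the data-dependent matrix $\hat B - B^*$ — I would instead bound $\sup_{\|M\|_*\le 1}\frac1n\sum_i g_i\langle X_i,M\rangle = \frac1n\big\|\sum_i g_i X_i\big\|_{\mr{op}}$, i.e. replace the uniform statement over all $M$ by control of the operator norm of the random matrix $S:=\frac1n\sum_i g_i X_i$, and then use $|\frac1n\sum_i g_i\langle X_i,M\rangle|\le \|S\|_{\mr{op}}\|M\|_*$ together with the previous pointwise bound, combined via a peeling/truncation argument over the ratio $\|M\|_*/\|\mr{T}_\Sigma(M)\|_\mr{F}$. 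The operator norm $\|S\|_{\mr{op}}$ is handled by the matrix deviation / generic chaining machinery for sums of independent $L$-subGaussian matrices (as in \cite{Men2016Upper,Tho2020Outlier}): one gets $\|S\|_{\mr{op}}\lesssim L\rho\sqrt{(d_1+d_2)/n}$ up to lower-order terms, with high probability, since $g_i$ being bounded does not worsen the subGaussian constants. This produces the first term $L\rho\sqrt{(d_1+d_2)/n}\,\|M\|_*$. (3) Combine (1) and (2): on the intersection of the two high-probability events, for any $M$ we split $M$ according to whether $\|M\|_*$ or $\|\mr{T}_\Sigma(M)\|_\mr{F}$ dominates, apply the operator-norm bound in the first regime and the pointwise Bernstein bound in the second (the pointwise bound is legitimate there because within that regime $M$ ranges over a low-complexity set, or alternatively one absorbs it by a standard $\varepsilon$-net argument over the unit Frobenius ball intersected with a bounded-nuclear-norm ball, whose covering numbers are controlled by $d_1+d_2$). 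Adjusting the constant $C$ and relabeling $\delta$ by a constant factor yields \eqref{ine:suphx}.

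The main obstacle I expect is step (2) combined with making the statement genuinely uniform over all $M$ while keeping the $\sqrt{\log(1/\delta)/n}$ term attached \emph{only} to $\|\mr{T}_\Sigma(M)\|_\mr{F}$ and the $\sqrt{(d_1+d_2)/n}$ term attached \emph{only} to $\|M\|_*$: a naive chaining bound over $\{\,\|M\|_*\le 1\,\}$ would mix these and cost an extra $\sqrt{\log(1/\delta)}$ multiplying the dimension term. The resolution is the split/peeling argument above — control the operator norm of $S$ (a single scalar random variable, so its own deviation term is lower order and can be folded into the $\sqrt{(d_1+d_2)/n}$ rate) and handle the Frobenius direction separately by a pointwise/net argument whose complexity is again $O(d_1+d_2)$ — so the two error sources never multiply. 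Verifying that the boundedness $|h|\le 1$ is enough to retain the $L$-subGaussian constants throughout (rather than introducing a dependence on $\sigma$ or a heavier moment of $\xi$) is the other point requiring care, but it is immediate since $|g_i\langle X_i,v\rangle|\le |\langle X_i,v\rangle|$ pointwise forces $\|g_i\langle X_i,v\rangle\|_{\psi_2}\le \|\langle X_i,v\rangle\|_{\psi_2}$.
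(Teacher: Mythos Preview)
Your proposal has the right skeleton and shares the paper's three key ingredients: (i) use $|h|\le 1$ to conclude that $g_i\langle X_i,M\rangle$ has sub-Gaussian increments dominated by those of $\langle X_i,M\rangle$; (ii) work on a normalized class $\{\|\mr{T}_\Sigma(M)\|_\mr{F}=1,\ \|M\|_*\le r_*\}$; (iii) finish with a peeling device over $r_*$. Where you diverge from the paper is in how you extract the two-term bound. You try to synthesize it from an operator-norm estimate on $S=\frac1n\sum_i g_iX_i$ plus a pointwise Bernstein bound, glued together by a net argument. The paper instead applies a single off-the-shelf generic-chaining tail inequality (Exercise~8.6.5 in \cite{Ver2018High}) directly to the process $M\mapsto \frac1n\sum_i g_i\langle X_i,M\rangle$ over the class $V_M=\{\|\mr{T}_\Sigma(M)\|_\mr{F}=1,\ \|M\|_*\le r_*\}$: this yields, in one line,
\[
\sup_{M\in V_M}\Big|\tfrac1n\sum_i g_i\langle X_i,M\rangle\Big|\ \le\ \frac{CL}{\sqrt n}\Big(\mbb{E}\sup_{M\in V_M}\langle X',M\rangle+\sqrt{\log(1/\delta)}\cdot\sup_{M\in V_M}\|\mr{T}_\Sigma(M)\|_\mr{F}\Big),
\]
where $X'=\mr{T}_\Sigma(Z')$ with $Z'$ standard Gaussian. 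The Gaussian width term is bounded by $\mbb{E}\|X'\|_{\mr{op}}\cdot r_*\le \rho\sqrt{d_1+d_2}\,r_*$ via H{\"o}lder and Lemma~H.1 of \cite{NegWai2011Estimation}; the diameter term is exactly $1$. Peeling (Lemma~5 of \cite{DalTho2019Outlier}) then removes the constraint $\|M\|_*\le r_*$.

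The paper's route sidesteps precisely the obstacle you flag. Your workaround is not wrong in spirit, but one step is shaky as written: the claim that the deviation of $\|S\|_{\mr{op}}$ is ``lower order and can be folded into the $\sqrt{(d_1+d_2)/n}$ rate'' is not automatic. Applying the same chaining bound to the unit nuclear ball gives a deviation of order $L\sqrt{\log(1/\delta)/n}$ times the \emph{diameter of the nuclear ball} in the $\|\mr{T}_\Sigma(\cdot)\|_\mr{F}$ metric, which is $\|\Sigma\|_{\mr{op}}^{1/2}$, not $\rho$ and not $1$; multiplied by $\|M\|_*$ this does not reduce to $\sqrt{\log(1/\delta)/n}\,\|\mr{T}_\Sigma(M)\|_\mr{F}$. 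The fix is exactly what the paper does: run the chaining on the Frobenius-normalized, nuclear-constrained class $V_M$ so that the diameter is $1$ by construction, rather than on the nuclear ball.
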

\begin{proof}
	Let 
	\begin{align}
		V_M = \left\{ M \in \mbb{R}^{d_1 \times d_2}\, |\,  \|\mr{T}_\Sigma \left(M\right)\|_\mr{F} =1,\, \|M\|_* \leq r_{*}\right\}.
	\end{align} 
	For any $M,M'\in \mbb{R}^{d_1\times d_2}$, we have
	\begin{align}
		\label{mcssub}
		\left\|h\left(\frac{\xi_i}{\lambda_o \sqrt{n}}\right) \langle X_i, M\rangle-h\left(\frac{\xi_i}{\lambda_o \sqrt{n}}\right) \langle X_i, M'\rangle\right\|_{\psi_2} \leq \|\langle X_i, M\rangle-\langle X_i, M'\rangle\|_{\psi_2}
	\end{align}
	because $\left|h\left(\frac{\xi_i}{\lambda_o \sqrt{n}}\right)\right|\leq 1$ and we see that $h\left(\frac{\xi_i}{\lambda_o \sqrt{n}}\right) \langle X_i, M\rangle$ is a $L$-subGaussian distribution.
	From \eqref{mcssub} and the fact that $\left\{h\left(\frac{\xi_i}{\lambda_o \sqrt{n}}\right) \langle X_i, M\rangle\right\}_{i=1}^n$ is a sequence of i.i.d. random variables, for any $M, M' \in V_M$, we have
	\begin{align}
		\left\|\frac{1}{n} \sum_{i=1}^n  h\left(\frac{\xi_i}{\lambda_o \sqrt{n}}\right)  \langle Z_i,\mr{T}_\Sigma(M)\rangle-\frac{1}{n} \sum_{i=1}^n  h\left(\frac{\xi_i}{\lambda_o \sqrt{n}}\right)  \langle Z_i,\mr{T}_\Sigma(M')\rangle  \right\|_{\psi_2}\leq L\frac{1}{\sqrt{n}}\left\|\mr{T}_\Sigma(M) -\mr{T}_\Sigma(M') \right\|_2.
	\end{align}
	and from Exercise 8.6.5 of \cite{Ver2018High}, with probability at least $1-\delta$, 
	we have
	\begin{align}
		\label{gc-pre}
		\sup_{M \in V_M}\left|\frac{1}{n} \sum_{i=1}^n   h\left(\frac{\xi_i}{\lambda_o \sqrt{n}}\right) \langle Z_i,\mr{T}_\Sigma(M)\rangle  \right| \leq C\frac{L}{\sqrt{n}}\left\{\mbb{E}\sup_{M \in V_M} \langle Z'_i,\mr{T}_{\Sigma}(M)\rangle+\sqrt{\log(1/\delta)}\sup_{M \in V_M}\sqrt{ \mbb{E}\langle Z'_i, \mr{T}_{\Sigma}(M)\rangle^2}\right\},
	\end{align}
	where $Z_i'$ is the standard normal Gaussian random matrix. Define $X_i' = \mr{T}_\Sigma(Z_i')$ and combining the arguments above, we have,
	with probability at least $1-\delta$,
	\begin{align}
		\sup_{M \in V_M}\left|\frac{1}{n} \sum_{i=1}^n  h\left(\frac{\xi_i}{\lambda_o \sqrt{n}}\right) \langle X_i,M\rangle  \right| &\leq CL\frac{1}{\sqrt{n}}\left\{\mbb{E}\sup_{M \in V_M} \langle Z'_i,\mr{T}_{\Sigma}(M)\rangle+\sqrt{\log(1/\delta)}\sup_{M \in V_M}\sqrt{ \mbb{E}\langle Z'_i, \mr{T}_{\Sigma}(M)\rangle^2}\right\}\nonumber\\
		&\stackrel{(a)}{\leq} CL\frac{1}{\sqrt{n}}\left\{\mbb{E}\|X_i'\|_{\mr{op}}\|M\|_{*} +\sqrt{\log(1/\delta)}\sup_{M \in V_M}\sqrt{ \mbb{E}\langle Z'_i, \mr{T}_{\Sigma}(M)\rangle^2}\right\}\nonumber\\
		& \stackrel{(b)}{\leq} CL\frac{1}{\sqrt{n}}\left\{\rho\sqrt{d_1+d_2}r_{*}+\sqrt{\log(1/\delta)}\right\},
	\end{align}
	where (a) follows from H{\"o}lder's inequality and (b) follows from Lemma H.1 of \cite{NegWai2011Estimation}.
	Lastly, using peeling device (Lemma 5 of \cite{DalTho2019Outlier}), the proof is complete.
\end{proof}

\subsection{Derivation of \eqref{ine:det:upper0} under the assumptions of Theorem \ref{t:cs:main}}
\label{sec:a:cs:upper0}
In Section \ref{sec:a:cs:upper0}, we show Corollary \ref{ac:|uMv|-cs}, which implies that \eqref{ine:det:upper0} is satisfied with high probability under the assumptions of Theorem \ref{t:cs:main}.
Combining Proposition 4 of \cite{Tho2020Outlier} and Remark 4 of \cite{DalTho2019Outlier}, we easily have the following corollary.
\begin{corollary}[Corollary of Proposition 4 of \cite{Tho2020Outlier}]
	\label{ac:|uMv|-cs}
	We have for all $o$-sparse vector $\vecu \in \mbb{R}^n$ and $M \in \mbb{R}^{d_1 \times d_2}$,
	\begin{align}
	\left| \sum_{i=1}^n u_i \frac{1}{\sqrt{n}} \langle X_i, M \rangle\right| & \leq CL\left(\frac{1+\sqrt{\log (1/\delta)}}{\sqrt{n}}\left\| \mr{T}_\Sigma (M)\right\|_\mr{F} + \rho \sqrt{\frac{d_1+d_2}{n}}\|M\|_* +   \sqrt{\frac{o}{n}\log \frac{n}{o}}\| \mr{T}_\Sigma (M)\|_\mr{F}\right) \|\vecu\|_2
	\end{align}
	with probability at least $1-\delta$.
\end{corollary}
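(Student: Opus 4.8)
The plan is to obtain Corollary~\ref{ac:|uMv|-cs} by assembling Proposition~4 of \cite{Tho2020Outlier} with the combinatorial refinement in Remark~4 of \cite{DalTho2019Outlier}. First I would whiten the design: since $\Sigma^{1/2}\mr{vec}(Z_i) = \mr{vec}(X_i)$, we have $\langle X_i, M\rangle = \langle Z_i, \mr{T}_\Sigma(M)\rangle$, so the quantity to control is $\left|\tfrac{1}{\sqrt n}\sum_{i=1}^n u_i \langle Z_i, \mr{T}_\Sigma(M)\rangle\right|$, a bilinear form in the $o$-sparse vector $\vecu$ and the matrix $\mr{T}_\Sigma(M)$, with $\{Z_i\}_{i=1}^n$ an i.i.d.\ sequence of $L$-subGaussian vectors. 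By homogeneity in $\vecu$ and in $M$ it suffices to treat $\|\vecu\|_2 = 1$, $\|\mr{T}_\Sigma(M)\|_\mr{F} = 1$, and, after a dyadic peeling of the nuclear-norm radius, $\|M\|_* \le r_*$ for a fixed $r_*$.

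On this restricted set, Proposition~4 of \cite{Tho2020Outlier} yields, with probability at least $1-\delta$, a bound of order $L\big(\tfrac{1+\sqrt{\log(1/\delta)}}{\sqrt n} + \rho\sqrt{\tfrac{d_1+d_2}{n}}\,r_* + \sqrt{\tfrac{o}{n}\log\tfrac{n}{o}}\big)$: the first two terms come from generic chaining for the $L$-subGaussian process (the $L$-subGaussianity being exactly what licenses the chaining, as remarked after its definition), localized on the unit Frobenius ball intersected with the nuclear-norm ball of radius $r_*$, while the supremum over $o$-sparse $\vecu$ contributes a union bound over the $\binom{n}{o}$ possible supports. The role of Remark~4 of \cite{DalTho2019Outlier} is to observe that $\log\binom{n}{o} \le o\log\tfrac{en}{o}$, so that the crude factor $\sqrt{\tfrac{o}{n}\log n}$ sharpens to $\sqrt{\tfrac{o}{n}\log\tfrac{n}{o}}$.

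Finally I would undo the radius restriction by the peeling device (Lemma~5 of \cite{DalTho2019Outlier}): summing the failure probabilities over the scales $r_* \in \{2^k\}_{k\ge 0}$ costs only a constant factor, absorbed into the numerical constant and the $\sqrt{\log(1/\delta)}$ term, after which $\rho\sqrt{(d_1+d_2)/n}\,r_*$ is replaced by $\rho\sqrt{(d_1+d_2)/n}\,\|M\|_*$ for a general $M$; rescaling back by $\|\vecu\|_2$ and $\|\mr{T}_\Sigma(M)\|_\mr{F}$ then gives the stated inequality. The main obstacle is the simultaneous control of the supremum over sparse sign patterns of $\vecu$ and over the low-rank-type set of matrices $M$ while retaining the sharp $\log(n/o)$ dependence rather than $\log n$; but this is precisely the content of Proposition~4 of \cite{Tho2020Outlier} combined with Remark~4 of \cite{DalTho2019Outlier}, so beyond this assembly and the routine peeling only bookkeeping remains.
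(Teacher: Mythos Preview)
Your approach is essentially that of the paper, but you add an unnecessary detour. In the paper's proof, Proposition~4 of \cite{Tho2020Outlier} is quoted directly as a bound valid for \emph{all} $\vecu\in\mbb{R}^n$ and $M\in\mbb{R}^{d_1\times d_2}$ simultaneously, with the right-hand side containing two abstract Gaussian quantities: the operator norm $\|Z'\|_{\mr{op}}$ of a standard Gaussian matrix, and a Gaussian-width-type supremum $\sup_{\vecb}\vecg_d^\top\vecb$ over a constraint set depending on $\vecu$. The paper then simply substitutes $\|Z'\|_{\mr{op}}\le\rho\sqrt{d_1+d_2}$ (Lemma~H.1 of \cite{NegWai2011Estimation}) for the first and the $o$-sparse bound of Remark~4 of \cite{DalTho2019Outlier} for the second, and is done. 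No normalization, no dyadic peeling over nuclear-norm radii, and no un-peeling is needed, since Proposition~4 already delivers the uniform statement; your peeling step, while not wrong, is superfluous. Also, your description of Remark~4 as the purely combinatorial inequality $\log\binom{n}{o}\le o\log(en/o)$ is a slight recasting of its actual role here, which is to bound the Gaussian width of the relevant sparse-vector set---closely related, but not literally a union bound over supports.
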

\begin{proof}
	From Proposition 4 of \cite{Tho2020Outlier}, for any $\vecu \in\mbb{R}^n$ and $M \in \mbb{R}^{d_1\times d_2}$, we have
	\begin{align}
		&\left| \sum_{i=1}^n u_i \frac{1}{\sqrt{n}} \langle X_i, M \rangle\right| \nonumber \\
		& \leq CL\left(\frac{1+\sqrt{\log (1/\delta)}}{\sqrt{n}}\left\| \mr{T}_\Sigma (M)\right\|_\mr{F}\|\vecu\|_2 + \frac{\|Z'\|_{\mr{op}}}{\sqrt{n}}\|M\|_* +   \| \mr{T}_\Sigma (M)\|_\mr{F} \sqrt{\log \frac{n}{o}} 
		\sup_{\vecb \in \mbb{B}_1^n \cap \mbb{B}_2^n/\|\vecu\|_1} \vecg_d^\top \vecb\right),
	\end{align}
	where $\vecg_d$ is the $d$-dimensional standard Gaussian vector $Z'\in \mbb{R}^{d_1 \times d_2}$ is a random matrix whose entries are standard normal Gaussian.
	From Lemma H.1 of \cite{NegWai2011Estimation}
	\begin{align}
		\label{ine:t20001}
		\left\|Z'\right\|_{\mr{op}} \leq \rho \sqrt{d_1+d_2}
	\end{align}
	When $\vecu$ is $o$-sparse, from Remark 4 of \cite{DalTho2019Outlier}, we have
	\begin{align}
		\label{ine:t20002}
		\sup_{\vecb \in \mbb{B}_1^n \cap \mbb{B}_2^n/\|\vecu\|_1} \vecg_d^\top \vecb \leq C \sqrt{o}\|\vecu\|_2.
	\end{align}
	Combining above arguments, the proof is complete.
\end{proof}

\subsection{Derivation of  \eqref{ine:det:lower} under the assumptions of Theorem \ref{t:cs:main}}
\label{sec:a:cs:lower}
In Section \ref{sec:a:cs:lower}, we show Proposition \ref{p:sc1}, which implies that \eqref{ine:det:lower} is satisfied with high probability under the assumptions of Theorem \ref{t:cs:main}.
 This is partly proved in a similar manner to \cite{Loh2017Statistical}, \cite{FanLiWan2017Estimation}, \cite{CheZho2020Robust} and  \cite{SunZhoFan2020Adaptive}.

\begin{proposition}
	\label{p:sc1}
	Let  
	\begin{align}
	\mc{R}_{mcs} = \left\{ \Theta \in \mbb{R}^{d_1 \times d_2}\, |\,  \|\Theta\|_*\leq c_\kappa \|\mr{T}_\Sigma (\Theta) \|_\mr{F},\, \|\mr{T}_\Sigma (\Theta) \|_\mr{F} =r_{mcs} \right\},
	\end{align}
	where $r_{mcs}$ is some number such that $0\leq  r_{mcs} \leq \frac{1}{4\sqrt{3} L^2}$.
	Assume that $\lambda_o  \sqrt{n} \geq 72L^4\sigma$.
	Then, with probability at least $1-\delta$, we have
	\begin{align}
		& \inf_{\Theta \in \mc{R}_{mcs}}\left[\sum_{i=1}^n \lambda_o^2 \left\{-h\left(\frac{\xi_i + \langle  X_i, \Theta \rangle}{\lambda_o\sqrt{n}}\right)-h \left(\frac{\xi_i}{\lambda_o\sqrt{n}}\right) \right\}\frac{\langle   X_i,\Theta\rangle }{\lambda_o \sqrt{n}}\right]\\
		&\geq \frac{1}{3}\|\mr{T}_\Sigma(\Theta)\|_{\mr{F}}^2-C\left(L\rho c_\kappa \sqrt{r\frac{d_1+d_2}{n}}+\sqrt{\frac{\log(1/\delta)}{n}} \right)\|\mr{T}_\Sigma (\Theta)\|_{\mr{F}} -C\frac{\log(1/\delta)}{n}.
\end{align} 
\end{proposition}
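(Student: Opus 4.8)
Set $\zeta_i:=\xi_i/(\lambda_o\sqrt n)$ and $v_i:=\langle X_i,\Theta\rangle/(\lambda_o\sqrt n)=\langle Z_i,\mr{T}_\Sigma(\Theta)\rangle/(\lambda_o\sqrt n)$, and let $D(\Theta):=\lambda_o^2\sum_{i=1}^n\{h(\zeta_i)-h(\zeta_i-v_i)\}v_i$ denote the quantity on the left of the asserted inequality (this is the convexity gap of the Huber loss $H$ that Theorem~\ref{t:det:main} needs to lower-bound, cf.\ \eqref{ine:det:lower}); by Lemma~\ref{ap:ine:conv}, $D(\Theta)\geq0$. My plan is to run the standard restricted-strong-convexity argument for the Huber loss (in the spirit of \cite{CheZho2020Robust,FanLiWan2017Estimation,SunZhoFan2020Adaptive}): first localize each summand to the quadratic region of $H$, turning $D(\Theta)$ into a truncated empirical quadratic form, then lower-bound that form by its population value minus a uniform fluctuation over $\mc R_{mcs}$. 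Since $h=H'$ is nondecreasing, $1$-Lipschitz and equal to the identity on $[-1,1]$, the first step is to record the elementary pointwise bound $\{h(\zeta_i)-h(\zeta_i-v_i)\}v_i\geq v_i^2\,\mathbbm{1}\{|\zeta_i|\leq\tfrac12\}\,\mathbbm{1}\{|v_i|\leq\tfrac12\}$ (the right side vanishes unless both indicators equal $1$, in which case both Huber arguments lie in $[-1,1]$ and the left side equals $v_i^2$; in general the left side is $\geq0$ by convexity). Multiplying by $\lambda_o^2$, using $\langle X_i,\Theta\rangle^2\mathbbm{1}\{|\langle X_i,\Theta\rangle|\leq\tfrac{\lambda_o\sqrt n}{2}\}\geq\bigl(\min(|\langle X_i,\Theta\rangle|,\tfrac{\lambda_o\sqrt n}{2})\bigr)^2-\tfrac{4}{\lambda_o^2 n}\langle X_i,\Theta\rangle^4$, writing $\mathbbm{1}\{|\xi_i|\leq\tfrac{\lambda_o\sqrt n}{2}\}=1-\mathbbm{1}\{|\xi_i|>\tfrac{\lambda_o\sqrt n}{2}\}$, and summing over $i$, I obtain
\begin{align}
D(\Theta)\ \geq\ \underbrace{\frac1n\sum_{i=1}^n\Bigl(\min\bigl(|\langle X_i,\Theta\rangle|,\tfrac{\lambda_o\sqrt n}{2}\bigr)\Bigr)^2}_{\mathrm{(I)}}\ -\ \underbrace{\frac1n\sum_{i=1}^n\langle X_i,\Theta\rangle^2\,\mathbbm{1}\{|\xi_i|>\tfrac{\lambda_o\sqrt n}{2}\}}_{\mathrm{(II)}}\ -\ \frac{4}{\lambda_o^2 n}\cdot\frac1n\sum_{i=1}^n\langle X_i,\Theta\rangle^4 .
\end{align}

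Next I would control the three pieces uniformly over $\mc R_{mcs}$ on a single high-probability event. For $\mathrm{(I)}$: by the $L$-subGaussian hypothesis, $\mbb E\,\mathrm{(I)}\geq\mbb E\langle X_i,\Theta\rangle^2-\mbb E\bigl[\langle X_i,\Theta\rangle^2\mathbbm{1}\{|\langle X_i,\Theta\rangle|>\tfrac{\lambda_o\sqrt n}{2}\}\bigr]\geq\|\mr{T}_\Sigma(\Theta)\|_{\mr{F}}^2-CL^4r_{mcs}^4/(\lambda_o^2 n)$, the remainder being negligible thanks to $\lambda_o\sqrt n\geq72L^4\sigma$ and $r_{mcs}\leq\tfrac1{4\sqrt3 L^2}$; and I would bound $\sup_{\Theta\in\mc R_{mcs}}|\mathrm{(I)}-\mbb E\,\mathrm{(I)}|$ by a Talagrand (or bounded-difference) concentration inequality together with a chaining bound on the expected supremum over the effectively rank-$\leq r$ set $\mc R_{mcs}$ --- passing to a Rademacher/Gaussian process via Ledoux--Talagrand contraction, invoking Lemma~H.1 of \cite{NegWai2011Estimation} for the operator norm of the associated random matrix, and applying a peeling device, exactly as in the proof of Lemma~\ref{l:suphx} --- to the effect that, with probability at least $1-\delta$, $\mathrm{(I)}\geq\|\mr{T}_\Sigma(\Theta)\|_{\mr{F}}^2\bigl(1-CL^2\rho c_\kappa\sqrt{r(d_1+d_2)/n}-CL^2\sqrt{\log(1/\delta)/n}\bigr)-C\log(1/\delta)/n$. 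For $\mathrm{(II)}$: Markov's inequality gives $\mbb P(|\xi_i|>\tfrac{\lambda_o\sqrt n}{2})\leq2\sigma/(\lambda_o\sqrt n)\leq1/(36L^4)$, so a Chernoff bound controls $\#\{i:|\xi_i|>\tfrac{\lambda_o\sqrt n}{2}\}$ by $n/(18L^4)$ with probability at least $1-\delta$ (the correction absorbed into the $\log(1/\delta)/n$ term); combining this with a uniform bound on $\sum_{i\in S}\langle X_i,\Theta\rangle^2$ over $|S|\leq n/(18L^4)$ and $\Theta\in\mc R_{mcs}$ --- a sparse-selection estimate of the type in Proposition~4 of \cite{Tho2020Outlier} --- yields $\mathrm{(II)}\leq(\text{arbitrarily small constant})\times r_{mcs}^2$, the smallness bought by enlarging the absolute constant in $\lambda_o\sqrt n\geq72L^4\sigma$. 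The fourth-moment remainder is negligible just as in $\mathrm{(I)}$.

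Finally, intersecting the $O(1)$ high-probability events (rescaling $\delta$), absorbing the $r_{mcs}^2\times(\text{rate})$ fluctuations into $(\text{rate})\times r_{mcs}$ via $Lr_{mcs}\leq1$, and choosing the numerical constants so that the leading coefficient of $\|\mr{T}_\Sigma(\Theta)\|_{\mr{F}}^2$ stays $\geq\tfrac13$, I would conclude
\begin{align}
D(\Theta)\ \geq\ \frac13\|\mr{T}_\Sigma(\Theta)\|_{\mr{F}}^2-C\Bigl(L\rho c_\kappa\sqrt{\tfrac{r(d_1+d_2)}{n}}+\sqrt{\tfrac{\log(1/\delta)}{n}}\Bigr)\|\mr{T}_\Sigma(\Theta)\|_{\mr{F}}-C\frac{\log(1/\delta)}{n}
\end{align}
for every $\Theta\in\mc R_{mcs}$, which is the assertion. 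I expect the hard part to be the uniform fluctuation bound for $\mathrm{(I)}$: extracting exactly the rate $\rho c_\kappa\sqrt{r(d_1+d_2)/n}$ with the stated separation of the $\log(1/\delta)$ terms needs careful chaining over the low-rank set $\mc R_{mcs}$, handled through the $\Theta\mapsto\mr{T}_\Sigma(\Theta)$ parametrization (to allow a singular $\Sigma$), the constraint $\|\Theta\|_*\leq c_\kappa\sqrt r\|\mr{T}_\Sigma(\Theta)\|_{\mr{F}}$, and the $L$-subGaussian hypothesis via generic chaining, mirroring Lemma~\ref{l:suphx}; the sparse-selection/fourth-moment bound entering $\mathrm{(II)}$ is a lighter instance of the same machinery, and the last fiddly point is the calibration of constants so that the strong-convexity modulus comes out $\tfrac13$ --- precisely where the hypotheses $\lambda_o\sqrt n\geq72L^4\sigma$ and $r_{mcs}\leq\tfrac1{4\sqrt3 L^2}$ are used.
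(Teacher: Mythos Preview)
Your overall architecture --- localize to the quadratic zone of the Huber loss, split into a population part and a uniform fluctuation over $\mc R_{mcs}$, control the latter by symmetrization/contraction and a Gaussian width bound --- is exactly the strategy the paper uses. But two concrete execution choices in your plan depart from the paper and create gaps.

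\textbf{(1) Truncation threshold.} You localize on the event $\{|v_i|\leq\tfrac12\}$, i.e.\ $|\langle X_i,\Theta\rangle|\leq \lambda_o\sqrt n/2$. The paper instead localizes on the \emph{tighter} event $|\langle X_i,\Theta\rangle|\leq \tfrac12$ (equivalently $|v_i|\leq \tfrac{1}{2\lambda_o\sqrt n}$), and this is not cosmetic. With the paper's threshold, the surrogate summands $f_i(\Theta)=\varphi(v_i)\psi(u_i)$ are bounded by $1/(4\lambda_o^2 n)$ uniformly, so Massart's inequality (Theorem~3 of \cite{Mas2000Constants}) gives an additive term $\tfrac{C}{\lambda_o^2 n}\log(1/\delta)$, which after multiplying back by $\lambda_o^2$ becomes the asserted $C\log(1/\delta)/n$. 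The same tight bound makes $\varphi$ Lipschitz with constant $O(1/(\lambda_o\sqrt n))$, so contraction reduces the expected supremum to the \emph{linear} process $\tfrac1n\sum_i\varrho_i\langle X_i,\Theta\rangle$, handled by Lemma~\ref{l:supRad}. With your threshold $\lambda_o\sqrt n/2$, each summand of $\mathrm{(I)}$ is only bounded by $\lambda_o^2/4$; Talagrand/bounded-difference then yields an additive $C\lambda_o^2\log(1/\delta)$, not $C\log(1/\delta)/n$, and since the statement places only a \emph{lower} bound on $\lambda_o\sqrt n$ this cannot be absorbed. Likewise the Lipschitz constant of $t\mapsto\min(|t|,\lambda_o\sqrt n/2)^2$ is $\lambda_o\sqrt n$, inflating the contracted linear process by the same factor. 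The fix is to truncate at the $O(1)$ level as the paper does.

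\textbf{(2) Handling of the noise event.} You split off $\mathrm{(II)}$ and propose to control it \emph{empirically}: Chernoff on $\#\{i:|\xi_i|>\lambda_o\sqrt n/2\}$, then a uniform bound on $\sup_{|S|\leq k}\sup_{\Theta}\tfrac1n\sum_{i\in S}\langle X_i,\Theta\rangle^2$. That last object is a supremum of a \emph{quadratic} process, whereas Proposition~4 of \cite{Tho2020Outlier} (which you cite) is for bilinear/linear forms $\sum_i u_i\langle X_i,M\rangle$; it does not directly control squared forms over adversarial index sets. The paper avoids this entirely: it keeps the noise indicator $\psi(u_i)=\mathbbm{1}\{|\xi_i|\leq\lambda_o\sqrt n/2\}$ inside the single surrogate $f_i$, bounds the ``bad'' contributions only at the \emph{population} level via H{\"o}lder plus Markov (equations~\eqref{ap:ine:cov1}--\eqref{ap:ine:cov2}), and applies one Massart step to $\sum_i f_i(\Theta)$. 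This is where the hypotheses $r_{mcs}\leq\tfrac{1}{4\sqrt3 L^2}$ and $\lambda_o\sqrt n\geq 72L^4\sigma$ are spent --- to make the two expectation remainders each at most $\tfrac13\|\mr T_\Sigma(\Theta)\|_{\mr F}^2$ --- rather than to tune an empirical sparse-selection constant.

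In short: same blueprint, but truncate at the constant level $|\langle X_i,\Theta\rangle|\leq\tfrac12$ and keep $\psi(u_i)$ inside a single bounded process; then one application of Massart plus symmetrization/contraction and Lemma~\ref{l:supRad} gives the stated bound without the quadratic sparse-selection detour.
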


\begin{proof}
	Let
	\begin{align}
		u_i = \frac{\xi_i}{\lambda_o \sqrt{n}}, \quad v_i = \frac{\langle X_i,\Theta \rangle}{\lambda_o \sqrt{n}}.
	\end{align}
 The left-hand side of \eqref{ine:sc1-1} divided by $\lambda_o^2$ can be expressed as
\begin{align}
	\sum_{i=1}^n  \left\{-h\left(\frac{\xi_i - \langle  X_i, \Theta \rangle}{\lambda_o\sqrt{n}}\right)+h \left(\frac{\xi_i}{\lambda_o\sqrt{n}}\right) \right\}\frac{\langle  X_i,\Theta\rangle}{\lambda_o \sqrt{n}} = \sum_{i=1}^n  \left\{-h\left(u_i-v_i\right)+h \left(u_i\right) \right\}v_i
\end{align}
	From the convexity of the Huber loss and Lemma \ref{ap:ine:conv},  we have
	\begin{align}
	&\sum_{i=1}^n  \left\{-h\left(u_i-v_i\right)+h \left(u_i\right) \right\}v_i \geq \sum_{i=1}^n  \left\{-h\left(u_i-v_i\right)+h \left(u_i\right) \right\}v_i\mr{I}_{E_i},
	\end{align}
	where $\mr{I}_{E_i}$ is the indicator function of the event
	\begin{align}
	\mr{I}_{E_i} :=  \left( \left|u_i\right | \leq \frac{1}{2} \right)  \cap \left(   \left |v_i \right |  \leq  \frac{1}{2\lambda_o\sqrt{n}}   \right).
	\end{align}
	Define the functions
	\begin{align}
	\label{def:phipsi}
	\varphi(v) =\begin{cases}
	v^2   &  \mbox{ if }  |v | \leq  \frac{1}{2\lambda_o\sqrt{n}}\\
	(v-1/2)^2   &  \mbox{ if }  \frac{1}{2\lambda_o\sqrt{n}}\leq v  \leq  1/2 \\
		(v+1/2)^2   &  \mbox{ if }  -1/2\leq v  \leq  -\frac{1}{2\lambda_o\sqrt{n}}   \\
	0 & \mbox{ if } |v| >1/2
	\end{cases} ~\mbox{ and }~
	\psi(u) = I_{(|u| \leq 1/2 ) }.
	\end{align}
Let 
	\begin{align}
 \sum_{i=1}^n \varphi \left(v_i\right) \psi \left(u_i\right) = \sum_{i=1}^n f_i(\Theta)
\end{align}
with  $f_i(\Theta) = \varphi(v_i) \psi(u_i)$ and we have
	\begin{align}
		\label{ine:huv-conv-f}
\sum_{i=1}^n  \left\{-h\left(u_i-v_i\right)+h \left(u_i\right) \right\}v_i &\geq \sum_{i=1}^n  \left\{-h\left(u_i-v_i\right)+h \left(u_i\right) \right\}v_i\mr{I}_{E_i}\nonumber\\
&= \sum_{i=1}^n  v_i^2\mr{I}_{E_i}\nonumber\\
& \stackrel{(a)}{\geq} \sum_{i=1}^n  \varphi(v_i) \psi(u_i)=\sum_{i=1}^n f_i(\Theta),
\end{align}
where (a) follows from $\varphi(v) \geq v^2$ for $|v| \leq 1/2$.
	We note that 
\begin{align}
	\label{ine:f-1/4}
	f_i(\Theta) \leq\varphi(v_i) \leq \max\left(\frac{\langle X_i ,\Theta\rangle^2}{\lambda_o^2n},\frac{1}{4\lambda_o^2n}\right).
\end{align}
To bound $\sum_{i=1}^n f_i(\Theta)$ from bellow, for any fixed $ \Theta \in  \mc{R}(r_{mcs})$, we have
\begin{align}
\label{ine:fbelow}
\sum_{i=1}^n f_i(\Theta)&\geq \mbb{E}f(\Theta) -\sup_{\Theta' \in \mc{R}_{mcs}}  \Big|\sum_{i=1}^n f_i(\Theta')-\mbb{E}\sum_{i=1}^n f_i(\Theta')\Big|.
\end{align}
Define the supremum of a random process indexed by $\mc{R}_{mcs}$:
\begin{align}
\label{ap:delta}
\Delta  :=  \sup_{ \Theta' \in \mc{R}_{mcs}} | \sum_{i=1}^n f_i(\Theta') - \mbb{E}\sum_{i=1}^n f_i(\Theta') | .  
\end{align}
From \eqref{ine:huv-conv-f} and \eqref{def:phipsi}, we have
\begin{align}
\label{ine:aplower:tmp}
\mbb{E}\sum_{i=1}^n f_i(L)&\geq \sum_{i=1}^n\mbb{E} \left|\frac{  \langle  X_i,\Theta \rangle}{\lambda_o\sqrt{n}} \right| ^2 - \sum_{i=1}^n\mbb{E}\left|\frac{  \langle  X_i,\Theta \rangle}{\lambda_o\sqrt{n}} \right| ^2 I \bigg( \left|\frac{  \langle  X_i,\Theta \rangle}{\lambda_o\sqrt{n}} \right| \geq \frac{1}{2\lambda_o\sqrt{n}}   \bigg) -  \mbb{E}\left|\frac{  \langle  X_i,\Theta  \rangle}{\lambda_o\sqrt{n}} \right| ^2 I\left( \left|\frac{\xi_i}{\lambda_o\sqrt{n}} \right|\geq \frac{1}{2} \right).
\end{align}
	We evaluate the right-hand side of \eqref{ine:aplower:tmp} at each term.
	First, we have
	\begin{align}
	\label{ap:ine:cov1}
	\sum_{i=1}^n\mbb{E} \left|\frac{  \langle  X_i,\Theta  \rangle}{\lambda_o\sqrt{n}} \right| ^2 I \bigg( \left|\frac{  \langle  X_i,\Theta   \rangle}{\lambda_o\sqrt{n}} \right| \geq \frac{1}{2\lambda_o\sqrt{n}}   \bigg) 
	&\stackrel{(a)}{\leq} 	\sum_{i=1}^n\sqrt{\mbb{E} \left|\frac{  \langle  X_i,\Theta \rangle}{\lambda_o\sqrt{n}} \right| ^4 } \sqrt{\mbb{E}   \ I \bigg( \left|\frac{  \langle  X_i,\Theta \rangle}{\lambda_o\sqrt{n}} \right| \geq \frac{1}{2\lambda_o\sqrt{n}}   \bigg) }\nonumber\\
	&\stackrel{(b)}{=} 	\sum_{i=1}^n\sqrt{\mbb{E}\left|\frac{  \langle  X_i,\Theta \rangle}{\lambda_o\sqrt{n}} \right| ^4 } \sqrt{\mbb{P}  \left(  \left|\langle  X_i,\Theta \rangle \right| \geq \frac{1}{2}     \right) }\nonumber\\
	&\stackrel{(c)}{\leq}	\sum_{i=1}^n4\sqrt{\mbb{E} \left|\frac{  \langle  X_i,\Theta \rangle}{\lambda_o\sqrt{n}} \right| ^4 } \sqrt{ \mbb{E}\langle  X_i,\Theta  \rangle^4}\nonumber\\
	&\stackrel{(d)}{\leq}\frac{4L^2}{\lambda_o^2} r_{mcs}^4\nonumber\\
	&\stackrel{(e)}{\leq} \frac{1}{3\lambda_o^2 }	\|\mr{T}_\Sigma(\Theta)\|_{\mr{F}}^2,
	\end{align}
	where (a) follows from H{\"o}lder's inequality, (b) follows from the relation between indicator function and expectation, (c) follows from Markov's inequality, (d) follows from Assumption \ref{a:mcs} and  (e) follows from $0\leq r_{mcs} \leq \frac{1}{4\sqrt{3}L^2}$.
	Second,  we have
	\begin{align}
	\label{ap:ine:cov2}
	\sum_{i=1}^n\mbb{E} \left|\frac{  \langle  X_i,\Theta  \rangle}{\lambda_o\sqrt{n}} \right| ^2 I\left( \left|\frac{\xi_i}{\lambda_o\sqrt{n}} \right|\geq \frac{1}{2} \right) 
	&\stackrel{(a)}{\leq} \sum_{i=1}^n\sqrt{\mbb{E} \left|\frac{  \langle  X_i,\Theta \rangle}{\lambda_o\sqrt{n}} \right| ^4}  \sqrt{\mbb{E}I\left( \left|\frac{\xi_i}{\lambda_o\sqrt{n}} \right|\geq \frac{1}{2} \right)}\nonumber\\
	&\stackrel{(b)}{\leq}\sum_{i=1}^n\sqrt{\mbb{E}\left|\frac{  \langle  X_i,\Theta \rangle}{\lambda_o\sqrt{n}} \right| ^4}  \sqrt{\mbb{P}  \left( \left|\frac{\xi_i}{\lambda_o\sqrt{n}} \right|\geq \frac{1}{2}  \right)}\nonumber\\	  
	&\stackrel{(c)}{\leq}\sum_{i=1}^n \sqrt{\frac{2}{\lambda_o\sqrt{n}}}\sqrt{\mbb{E} \left|\frac{  \langle  X_i,\Theta \rangle}{\lambda_o\sqrt{n}} \right| ^4}  \sqrt{\mbb{E}|\xi_i|}\nonumber\\	  
	&\stackrel{(d)}{\leq}\sum_{i=1}^n \sqrt{ \frac{2\sigma}{\lambda_o\sqrt{n}} }\frac{2L^2}{\lambda_o^2n}  r_{mcs}^2 \nonumber\\
	&\stackrel{(e)}{\leq} \frac{1}{3\lambda_o^2} 	\|\mr{T}_\Sigma(\Theta)\|_{\mr{F}}^2,
	\end{align}
		where (a) follows from H{\"o}lder's inequality, (b) follows from relation between indicator function and expectation, (c) follows from Markov's inequality, (d) follows from Assumption \ref{a:mcs} and  (e) follows from the definition of $\lambda_o$.
	Consequently, we have
	\begin{align}
	\label{ap:f_bellow}
	\mbb{E}\sum_{i=1}^nf_i(\Theta)&\geq \frac{1}{3\lambda_o^2}	\|\mr{T}_\Sigma(\Theta)\|_{\mr{F}}^2
	\end{align}
	and
	\begin{align}
	\label{ap:h_bellow}
	\sum_{i=1}^n  \left\{-h\left(u_i-v_i\right)  -h \left(u_i\right) \right\}v_i  \geq \sum_{i=1}^nf_i(\Theta) \geq   \frac{1}{3\lambda_o^2}	\|\mr{T}_\Sigma(\Theta)\|_{\mr{F}}^2-\Delta.
	\end{align}
	Next we evaluate the stochastic term $\Delta$ defined in \eqref{ap:delta}. 
	From \eqref{ine:f-1/4} and Theorem 3 of \cite{Mas2000Constants}, with probability at least $1-\delta$, we have
	\begin{align}
	\label{ine:delta}
	\Delta & \leq 2 \mbb{E} \Delta + \sigma_f \sqrt{8\log(1/\delta)} + \frac{18.5}{4} \log(1/\delta)\leq 2 \mbb{E} \Delta + \sigma_f \sqrt{8\log(1/\delta)} + 5\frac{\log(1/\delta)}{\lambda_o^2 n},
	\end{align}
	where $\sigma^2_f= \sup_{ \Theta \in \mc{R}_{mcs}} \sum_{i=1}^n\mbb{E}  \{f_i(\Theta)-\mbb{E}f_i(\Theta)\}^2$.
	About $\sigma_f$, we have
	\begin{align}
		\mbb{E}\{f_i(\Theta)-\mbb{E}f_i(\Theta)\}^2 \leq \mbb{E}f_i^2(\Theta) .
	\end{align}
	From  \eqref{ine:f-1/4} and $0< r \leq \frac{1}{4\sqrt{3} L^2}$, we have
	\begin{align}
	\label{ap:ine:cov3}
	&\mbb{E}f_i^2(\Theta)\leq \mbb{E} \frac{\langle X_i ,\Theta\rangle^4}{\lambda_o^4n^2}= \frac{1}{\lambda_o^4n^2} \|\mr{T}_\Sigma(\Theta)\|_{\mr{F}}^4.
	\end{align}
	Combining this and \eqref{ine:delta}, we have
	\begin{align}
	\label{ap:delta_upper}
	\Delta \leq 2 \mbb{E} \Delta+ \frac{1}{\lambda_o^2}\sqrt{6\frac{\log(1/\delta)}{n}}\|\mr{T}_\Sigma(\Theta)\|_{\mr{F}}^2+ 5\frac{\log(1/\delta)}{\lambda_o^2 n}.
	\end{align}
	From Symmetrization inequality (Lemma 11.4 of \cite{BouLugMas2013concentration}), we have  $\mbb{E}\Delta \leq 2   \,\mbb{E} \sup_{ \Theta \in \mc{R}_{mcs} } |  \mathbb{G}_{\Theta} |  $,
	where 
	\begin{align}
	\mbb{G}_{\Theta} := \sum_{i=1}^n 
	\varrho_i \varphi \left( \frac{\langle X_i ,\Theta \rangle}{\lambda_o\sqrt{n}} \right) \psi \left(\frac{\xi_i}{\lambda_o\sqrt{n}} \right),
	\end{align} 
	and $\{\varrho_i\}_{i=1}^n$ is a sequence of i.i.d. Rademacher random variables which is independent of $\{X_i,\xi_i\}_{i=1}^n$.
	We   denote $\mbb{E}^*$ as a conditional variance of $\left\{\varrho_i\right\}_{i=1}^n$ given $\left\{X_i,\xi_i\right\}_{i=1}^n$. From contraction principal (Theorem 11.5 of \cite{BouLugMas2013concentration}),  we  have
	\begin{align}
		&\mbb{E} ^*\sup_{\Theta\in\mc{R}_{mcs}} \left|     \sum_{i=1}^n \varrho_i \varphi \left( \frac{\langle X_i, \Theta\rangle}{\lambda_o\sqrt{n}} \right) \psi \left(\frac{\xi_i}{\lambda_o\sqrt{n}} \right)    \right| \leq	\mbb{E}^* \sup_{\Theta\in\mc{R}_{mcs}} \left|  \sum_{i=1}^n   \varrho_i \varphi \left( \frac{\langle X_i ,\Theta\rangle}{\lambda_o\sqrt{n}} \right) \right|
	\end{align}	
and from the basic property of the expectation, we have
	\begin{align}
\mbb{E}\sup_{\Theta\in\mc{R}_{mcs}}  \left|     \sum_{i=1}^n \varrho_i \varphi \left( \frac{\langle X_i, \Theta\rangle}{\lambda_o\sqrt{n}} \right) \psi \left(\frac{\xi_i}{\lambda_o\sqrt{n}} \right)    \right| &\leq	
	\mbb{E}\sup_{\Theta\in\mc{R}_{mcs}}  \left|  \sum_{i=1}^n   \varrho_i \varphi\left( \frac{\langle X_i , \Theta\rangle}{\lambda_o\sqrt{n}} \right) \right|.
\end{align}	
Since $\varphi$ is $\frac{1}{2\lambda_o\sqrt{n}}$-Lipschitz and $\varphi(0)=0$,  from contraction principal (Theorem 11.6 in \cite{BouLugMas2013concentration}), we have
	\begin{align}
			\mbb{E} \sup_{\Theta\in\mc{R}_{mcs}} \left|\sum_{i=1}^n \varrho_i \varphi \left( \frac{\langle X_i ,\Theta \rangle}{\lambda_o\sqrt{n}} \right)\right|&\leq	\mbb{E}\sup_{\Theta\in\mc{R}_{mcs}}  \left|  \sum_{i=1}^n   \varrho_i  \frac{\langle X_i , \Theta\rangle}{2\lambda_o^2n}  \right|.
	\end{align}
From Lemma \ref{l:supRad} and the definition of $\mc{R}_{mcs}$, we have
\begin{align}
	\label{ine:hub-stoc-upper}
	2\lambda_o^2\mbb{E} \Delta \leq CL\rho\sqrt{\frac{d_1+d_2}{n}}\|\Theta\|_* \leq CL\rho c_\kappa \sqrt{r\frac{d_1+d_2}{n}}\|\mr{T}_\Sigma (\Theta)\|_{\mr{F}}.
\end{align}
	Combining \eqref{ine:hub-stoc-upper}  with \eqref{ap:delta_upper}  and \eqref{ap:h_bellow}, with probability at least $1-\delta$, we have
	\begin{align}
	&\sum_{i=1}^n \lambda_o^2 \left\{-h\left(\frac{\xi_i + \langle  X_i, \Theta \rangle}{\lambda_o\sqrt{n}}\right)-h \left(\frac{\xi_i}{\lambda_o\sqrt{n}}\right) \right\}\frac{\langle   X_i,\Theta\rangle }{\lambda_o \sqrt{n}}\nonumber\\
	&\geq \frac{1}{3}\|\mr{T}_\Sigma(\Theta)\|_{\mr{F}}^2-C\left(L\rho c_\kappa \sqrt{r\frac{d_1+d_2}{n}}+\sqrt{8\frac{\log(1/\delta)}{n}} \right)\|\mr{T}_\Sigma (\Theta)\|_{\mr{F}} -5\frac{\log(1/\delta)}{n}.
	\end{align} 
\end{proof}

To caluclate the L.H.S. of \eqref{ine:hub-stoc-upper}, we introduce the following Lemma.
\begin{lemma}
	\label{l:supRad}
	Assume that $\mc{R}_{mcs}$ is the same of the one in Proposition \ref{p:sc1}.
	We have 
	\begin{align}
		\mbb{E}\sup_{\Theta\in\mc{R}_{mcs}}  \left|  \sum_{i=1}^n   \varrho_i  \frac{\langle X_i , \Theta\rangle}{n}  \right| &\leq CL\rho\sqrt{r\frac{d_1+d_2}{n}}\|\Theta\|_\mr{F}
\end{align}
\end{lemma}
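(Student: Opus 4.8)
The plan is to pass to the dual description of the nuclear norm so that the randomness decouples from the geometry of $\mc{R}_{mcs}$. Set $W:=\sum_{i=1}^n\varrho_i X_i$; by linearity $\sum_{i=1}^n\varrho_i\langle X_i,\Theta\rangle=\langle W,\Theta\rangle$, and the operator/nuclear-norm duality gives $|\langle W,\Theta\rangle|\le\|W\|_{\mr{op}}\|\Theta\|_*$. Consequently
\[
\mbb{E}\sup_{\Theta\in\mc{R}_{mcs}}\Bigl|\tfrac1n\sum_{i=1}^n\varrho_i\langle X_i,\Theta\rangle\Bigr|\le\tfrac1n\,\mbb{E}\|W\|_{\mr{op}}\cdot\sup_{\Theta\in\mc{R}_{mcs}}\|\Theta\|_* .
\]
On $\mc{R}_{mcs}$ one has $\|\mr{T}_\Sigma(\Theta)\|_\mr{F}=r_{mcs}$ and $\Theta$ obeys the cone condition, so Lemma \ref{l:rev} (entering exactly as in the proof of Proposition \ref{p:sc1}) gives $\|\Theta\|_*\le c_\kappa\sqrt r\,\|\mr{T}_\Sigma(\Theta)\|_\mr{F}=c_\kappa\sqrt r\,r_{mcs}$. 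It therefore suffices to show $\tfrac1n\,\mbb{E}\|W\|_{\mr{op}}\le CL\rho\sqrt{(d_1+d_2)/n}$.

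For this operator-norm bound I would write $\|W\|_{\mr{op}}=\sup_{\|u\|_2=\|v\|_2=1}\sum_{i=1}^n\varrho_i\langle X_i,uv^\top\rangle$ and regard the right-hand side as the supremum of a random process indexed by rank-one matrices. Because $\mr{vec}(X_i)=\Sigma^{1/2}\mr{vec}(Z_i)$ with the $Z_i$ independent $L$-subGaussian and $|\varrho_i|=1$, each summand $\varrho_i\langle X_i,\cdot\rangle$ has $\psi_2$-norm at most $L$ times its $L^2$-norm (as in \eqref{mcssub}), whence the increments of the process in $uv^\top$ have $\psi_2$-norm at most $CL\sqrt n\,\|\mr{T}_\Sigma(uv^\top-u'v'^\top)\|_\mr{F}$. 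The $L$-subGaussian chaining inequality (Exercise 8.6.5 of \cite{Ver2018High}, used already in the proof of Lemma \ref{l:suphx}) then bounds $\mbb{E}\|W\|_{\mr{op}}$ by $CL\sqrt n$ times the expected supremum of the canonical Gaussian process on rank-one matrices under the metric $\|\mr{T}_\Sigma(\cdot)\|_\mr{F}$, namely $\mbb{E}\|\mr{T}_\Sigma(G)\|_{\mr{op}}$ for a standard Gaussian matrix $G$; Lemma H.1 of \cite{NegWai2011Estimation} bounds this by $C\rho\sqrt{d_1+d_2}$, with $\rho^2$ the largest diagonal entry of $\Sigma$. Altogether $\tfrac1n\,\mbb{E}\|W\|_{\mr{op}}\le CL\rho\sqrt{(d_1+d_2)/n}$, and combining with the first display yields the asserted bound $CL\rho\,c_\kappa\sqrt{r(d_1+d_2)/n}\,r_{mcs}$ (on $\mc{R}_{mcs}$ the factor written $\|\Theta\|_\mr{F}$ in the statement is the constant $r_{mcs}=\|\mr{T}_\Sigma(\Theta)\|_\mr{F}$, and the $c_\kappa$ is absorbed exactly as in \eqref{ine:hub-stoc-upper}).

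The main obstacle is the operator-norm deviation bound $\mbb{E}\|\sum_{i=1}^n\varrho_i X_i\|_{\mr{op}}\le CL\rho\sqrt{n(d_1+d_2)}$ for genuinely $L$-subGaussian (rather than Gaussian, or bounded) summands: it requires comparing the subGaussian process to a Gaussian one uniformly over the infinite index set of rank-one matrices, and the comparison has to be carried out in the $\mr{T}_\Sigma$-metric so that Lemma H.1 of \cite{NegWai2011Estimation} outputs precisely $\rho\sqrt{n(d_1+d_2)}$ rather than a worse function of $\Sigma$. The trace duality and the cone estimate $\|\Theta\|_*\le c_\kappa\sqrt r\,\|\mr{T}_\Sigma(\Theta)\|_\mr{F}$ on $\mc{R}_{mcs}$ are routine and parallel steps already present in the proofs of Lemma \ref{l:suphx} and Proposition \ref{p:coe-1-2-norm}.
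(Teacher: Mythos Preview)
Your proof is correct and follows essentially the same approach as the paper's own proof: both use trace duality to factor out $\|\Theta\|_*$, bound it via the cone condition on $\mc{R}_{mcs}$, and then control $\mbb{E}\bigl\|\sum_i\varrho_iX_i\bigr\|_{\mr{op}}$ by subGaussian generic chaining combined with Lemma~H.1 of \cite{NegWai2011Estimation}. The only cosmetic difference is that the paper invokes Exercise~8.6.4 of \cite{Ver2018High} (the expectation version) rather than 8.6.5 at this step, which is the more natural citation since you are bounding an expectation.
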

\begin{proof}
	From H{\"o}lder's inequality	
	\begin{align}
		\mbb{E}\sup_{\Theta\in\mc{R}_{mcs}}  \left|  \sum_{i=1}^n   \varrho_i  \frac{X_i}{n}  \right| \leq \mbb{E}\left\|  \sum_{i=1}^n   \varrho_i  \frac{X_i}{n}  \right\|_{\mr{op}}\sup_{\Theta\in\mc{R}_{mcs}} \|\Theta\|_*\leq \mbb{E}\left\|  \sum_{i=1}^n   \varrho_i  \frac{\langle X_i , \Theta\rangle}{n}  \right\|_{\mr{op}}\sqrt{r}\|\Theta\|_{\mr{F}}.
	\end{align}
	We calculate $\mbb{E}\left\|  \sum_{i=1}^n   \varrho_i  \frac{ X_i}{n}  \right\|_{\mr{op}}$.
	For any $\Theta,\Theta'\in \mbb{R}^{d_1\times d_2}$, we have
	\begin{align}
		\label{mcssub2}
		\left\|\varrho_i \langle X_i, M\rangle-\varrho_i  \langle X_i, M'\rangle\right\|_{\psi_2} \leq \|\langle X_i, M\rangle-\langle X_i, M'\rangle\|_{\psi_2}
	\end{align}
	because $|\varrho_i |= 1$ and we see that $\varrho_i \langle X_i, \Theta\rangle$ is a $L$-subGaussian distribution.
	From \eqref{mcssub} and the face that $\left\{\varrho_i \langle X_i, \Theta\rangle\right\}_{i=1}^n$ is a sequence of i.i.d. random variables, we have
	\begin{align}
		\left\|\frac {1}{n}\sum_{i=1}^n\varrho_i\langle X_i , \Theta'\rangle-\frac {1}{n}\sum_{i=1}^n\varrho_i\langle X_i , \Theta^{''}\rangle\right\|_{\psi_2} \leq L\frac{1}{\sqrt{n}}\|\mr{T}_{\Sigma}(\Theta')- \mr{T}_{\Sigma}(\Theta^{''})\|_2.
	\end{align}
	From Exercise 8.6.4 of \cite{Ver2018High}, Lemma H.1. of \cite{NegWai2011Estimation}, we have
	\begin{align}
		\mbb{E}\left\|  \sum_{i=1}^n   \varrho_i  \frac{X_i}{n}  \right\|_{\mr{op}} \leq \frac{CL}{\sqrt{n}}\|Z_i'\|_{\mr{op}} \leq CL\rho \sqrt{\frac{d_1+d_2}{n}},
\end{align}
where $Z'$ is a random matrix whose entries are standard normal Gaussian.
Combining the arguments above, the proof is complete.
\end{proof}

\section{Tools  for proving Theorem \ref{t:lasso:main}}
\label{sec:inesforla}
In Section \ref{sec:inesforla}, suppose that  Assumption \ref{a:lasso} holds and  we introduce some inequalities to prove Theorem \ref{t:lasso:main}.
The inequalities are the special case of Lemma \ref{l:suphx}, Corollary \ref{ac:|uMv|-cs}, Proposition \ref{p:sc1} and Lemma \ref{l:supRad}
because, like the argument of section 2.2 of \cite{FanWanZhu2021Shrinkage}, linear regression is a special case of trace regression, where $B^*$ and $X_i,\,i=1,\cdots,n$ are diagonal matrices because for some diagonal matrix $M \in \mbb{R}^{d \times d}$, we see that  $\|X_i\|_\mr{op} = \|\mr{diag}(X_i)\|_\infty$ and $\|X_i\|_* = \|\mr{diag}(X_i)\|_1$.

\subsection{Derivation of \eqref{ine:det:xis0} under the assumptions of Theorem \ref{t:lasso:main}}
\label{sec:a:la:xis0}
In Section \ref{sec:a:la:xis0}, we show Lemma \ref{l:HBernstein}, which implies that \eqref{ine:det:xis0} is satisfied with high probability under the assumptions of Theorem \ref{t:lasso:main}.

\begin{lemma}
	\label{l:HBernstein}
	For $0<\delta<1/7$, with probability at least $1-\delta$, we have 
	\begin{align}
	&\left|\frac{1}{n} \sum_{i=1}^n  h\left(\frac{\xi_i}{\lambda_o \sqrt{n}}\right) \langle \vecx_i,\vecv\rangle  \right|\leq C L \rho \left\{\sqrt{\frac{\log (d/s)}{n}}\|\vecv\|_1+\left\{\frac{1+\sqrt{\log (1/\delta)}}{\sqrt{n}}+\sqrt{\frac{s\log (d/s)}{n}}\right\}\|\Sigma^\frac{1}{2}\vecv\|_2\right\}.
	\end{align}
\end{lemma}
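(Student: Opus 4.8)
The plan is to run the proof of Lemma~\ref{l:suphx} in the vector case, replacing its Gaussian-geometry inputs by their sharp vector analogues. Note that a naive specialization of Lemma~\ref{l:suphx} (viewing $\vecbeta^*,\vecx_i$ as diagonal $d\times d$ matrices, so $d_1=d_2=d$) would only give a $\sqrt{d/n}\,\|\vecv\|_1$ term; the point here is to recover the correct $\sqrt{\log(d/s)/n}$ behaviour, and the SLOPE-type refinement producing $\log(d/s)$ rather than $\log d$ is the genuinely new ingredient.

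\textbf{Step 1 (reduction to a Gaussian process).} Fix $r_*>0$ and put $V(r_*)=\{\vecv\in\mbb{R}^d:\ \|\Sigma^{1/2}\vecv\|_2=1,\ \|\vecv\|_1\le r_*\}$. Since $|h(\cdot)|\le 1$, for any $\vecv,\vecv'$ the increment $h(\xi_i/(\lambda_o\sqrt n))\bigl(\langle\vecx_i,\vecv\rangle-\langle\vecx_i,\vecv'\rangle\bigr)$ has $\psi_2$-norm at most $\|\langle\vecx_i,\vecv-\vecv'\rangle\|_{\psi_2}\le L\|\Sigma^{1/2}(\vecv-\vecv')\|_2$ by the $L$-subGaussian part of Assumption~\ref{a:lasso}, and the summands are i.i.d. Hence, exactly as in Lemma~\ref{l:suphx} via Exercise~8.6.5 of \cite{Ver2018High}, with probability at least $1-\delta$,
\[
\sup_{\vecv\in V(r_*)}\Bigl|\frac1n\sum_{i=1}^n h\!\Bigl(\frac{\xi_i}{\lambda_o\sqrt n}\Bigr)\langle\vecx_i,\vecv\rangle\Bigr|
\ \le\ \frac{CL}{\sqrt n}\Bigl(\mbb{E}\sup_{\vecv\in V(r_*)}\langle\vecg',\vecv\rangle+\sqrt{\log(1/\delta)}\Bigr),
\]
where $\vecg'$ is a centered Gaussian vector with covariance $\Sigma$ and we used $\sup_{V(r_*)}\sqrt{\mbb{E}\langle\vecg',\vecv\rangle^2}=1$.

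\textbf{Step 2 (sharp Gaussian width) and conclusion.} It remains to show $\mbb{E}\sup_{\vecv\in V(r_*)}\langle\vecg',\vecv\rangle\le C\rho\bigl(\sqrt{\log(d/s)}\,r_*+\sqrt{s\log(d/s)}\bigr)$. Every coordinate of $\vecg'$ has variance at most $\rho^2$, so one carries a factor $\rho$ and argues for the standardized process. Let $|g'|^\downarrow_1\ge\cdots\ge|g'|^\downarrow_d$ be the decreasing rearrangement of $(|g'_j|)_j$; by the rearrangement inequality $\langle\vecg',\vecv\rangle\le\sum_j|g'|^\downarrow_j|v|^\downarrow_j$, and we split the sum at index $s$. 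The tail is $\le |g'|^\downarrow_s\,\|\vecv\|_1$, and the order-statistic estimate $|g'|^\downarrow_s\lesssim\rho\sqrt{\log(ed/s)}$ turns this into the $\rho\sqrt{\log(d/s)}\,r_*$ term — the improvement over the crude H\"older bound $\langle\vecg',\vecv\rangle\le\|\vecg'\|_\infty\|\vecv\|_1$, which would only give $\sqrt{\log d}$. The leading block of $s$ coordinates is bounded by Cauchy--Schwarz together with $\sum_{j\le s}(|g'|^\downarrow_j)^2\lesssim\rho^2 s\log(ed/s)$ and the ellipsoid constraint $\|\Sigma^{1/2}\vecv\|_2=1$, giving the $\rho\sqrt{s\log(d/s)}$ term. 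Inserting this into Step~1 and then applying the peeling device (Lemma~5 of \cite{DalTho2019Outlier}) to drop the normalization — so that $r_*$ is replaced by $\|\vecv\|_1/\|\Sigma^{1/2}\vecv\|_2$ and the constant $1$ by $\|\Sigma^{1/2}\vecv\|_2$ — yields the asserted inequality, the $\sqrt{\log(1/\delta)}$ term producing the $\tfrac{1+\sqrt{\log(1/\delta)}}{\sqrt n}\|\Sigma^{1/2}\vecv\|_2$ contribution.

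\textbf{Main obstacle.} The delicate point is the leading-block estimate in Step~2: one needs $\sqrt{s\log(d/s)}$ paired with $\|\Sigma^{1/2}\vecv\|_2$, whereas a naive coordinate split of the rearrangement bound pairs $\bigl(\sum_{j\le s}(|g'|^\downarrow_j)^2\bigr)^{1/2}$ with the \emph{Euclidean} norm $\|\vecv\|_2$, which is not controlled on $V(r_*)$ when $\Sigma$ is ill-conditioned. Passing to $\|\Sigma^{1/2}\vecv\|_2$ without picking up a $1/\sqrt{\lambda_{\min}(\Sigma)}$ factor requires carrying out the decomposition in the $\Sigma$-geometry (equivalently, bounding the Gaussian width of the intersection of the $\ell_1$-ball with the ellipsoid $\{\|\Sigma^{1/2}\vecv\|_2\le 1\}$); this is the anisotropic-vector analogue of the operator-norm step in the proof of Lemma~\ref{l:suphx}, and is where the main care is needed.
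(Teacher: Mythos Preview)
Your Step~1 and the overall architecture (sub-Gaussian increments, generic chaining via Exercise~8.6.5 of \cite{Ver2018High}, then a Gaussian-width bound on the slice $V(r_*)$, then peeling via Lemma~5 of \cite{DalTho2019Outlier}) are exactly what the paper does. The only divergence is in Step~2: where you attempt a direct order-statistics argument, the paper simply invokes Propositions~E.1 and~E.2 of \cite{BelLecTsy2018Slope}, which deliver precisely the anisotropic bound
\[
\mbb{E}\sup_{\vecv\in V(r_*)}\langle\Sigma^{1/2}\vecz',\vecv\rangle \ \le\ C\rho\bigl(\sqrt{\log(d/s)}\,r_*+\sqrt{s\log(d/s)}\bigr)
\]
as a black box, so the obstacle you flag never has to be confronted.

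That obstacle is real, and your proposal does not actually close it. Your leading-block bound reads ``Cauchy--Schwarz together with $\sum_{j\le s}(|g'|^\downarrow_j)^2\lesssim\rho^2 s\log(ed/s)$ and the ellipsoid constraint $\|\Sigma^{1/2}\vecv\|_2=1$'', but Cauchy--Schwarz on the rearranged top-$s$ block pairs the Gaussian factor with $\bigl(\sum_{j\le s}(|v|^\downarrow_j)^2\bigr)^{1/2}\le\|\vecv\|_2$, and the ellipsoid constraint gives you no control on $\|\vecv\|_2$ without a $\lambda_{\min}(\Sigma)^{-1/2}$ factor --- the rearrangement is in the Euclidean coordinate system, not one adapted to $\Sigma$. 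You acknowledge this in your ``Main obstacle'' paragraph, but saying ``this is where the main care is needed'' is not a proof. As it stands, Step~2 has a genuine gap in the anisotropic case. The clean fix is the paper's: defer to \cite{BelLecTsy2018Slope}, whose argument is built around the weighted sorted-$\ell_1$ structure and does not go through a naive coordinate-wise top-$s$/tail split.
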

\begin{proof}
	Define a set  $V_\vecv$, where 
	\begin{align}
		V_\vecv = \left\{ \vecv \in \mbb{R}^{d}\, |\,  \|\Sigma^\frac{1}{2} \vecv\|_2 =1,\, \|\vecv\|_1 \leq r_1\right\}.
	\end{align} 
	For any $\vecv,\vecv'\in \mbb{R}^{d}$, we have
	\begin{align}
		\label{mcssub2}
		\left\|h\left(\frac{\xi_i}{\lambda_o \sqrt{n}}\right) \langle \vecx_i, \vecv\rangle-h\left(\frac{\xi_i}{\lambda_o \sqrt{n}}\right) \langle \vecx_i, \vecv'\rangle\right\|_{\psi_2} \leq \|\langle \vecx_i, \vecv\rangle-\langle \vecx_i, \vecv'\rangle\|_{\psi_2}
	\end{align}
	because $\left|h\left(\frac{\xi_i}{\lambda_o \sqrt{n}}\right)\right|\leq 1$ and we see that $h\left(\frac{\xi_i}{\lambda_o \sqrt{n}}\right) \langle \vecx_i, \vecv\rangle$ is a $L$-subGaussian distribution.
	From \eqref{mcssub2} and the fact that $\left\{h\left(\frac{\xi_i}{\lambda_o \sqrt{n}}\right) \langle \vecx_i, \vecv\rangle\right\}_{i=1}^n$ is a sequence of i.i.d. random variables, for any $\vecv, \vecv' \in V_\vecv$, we have
	\begin{align}
		\left\|\frac{1}{n} \sum_{i=1}^n  h\left(\frac{\xi_i}{\lambda_o \sqrt{n}}\right)  \langle \vecz_i,\Sigma^\frac{1}{2}\vecv\rangle-\frac{1}{n} \sum_{i=1}^n  h\left(\frac{\xi_i}{\lambda_o \sqrt{n}}\right)  \langle \vecz_i,\Sigma^\frac{1}{2}\vecv'\rangle  \right\|_{\psi_2}\leq L\frac{1}{\sqrt{n}}\left\|\Sigma^\frac{1}{2}\vecv -\Sigma^\frac{1}{2}\vecv'\right\|_2.
	\end{align}
	and from Exercise 8.6.5 of \cite{Ver2018High}, with probability at least $1-\delta$, we have
	\begin{align}
		\sup_{\vecv \in V_M\vecv}\left|\frac{1}{n} \sum_{i=1}^n  h\left(\frac{\xi_i}{\lambda_o \sqrt{n}}\right) \langle \vecx_i,\vecv\rangle  \right| &\leq \frac{CL}{\sqrt{n}}\left\{\mbb{E}\sup_{\vecv \in V_\vecv} \langle \vecz'_i,\Sigma^\frac{1}{2}\vecv\rangle+\sqrt{\log(1/\delta)}\sup_{\vecv \in V_\vecv}\sqrt{\langle \vecz'_i, \Sigma^\frac{1}{2}\vecv\rangle^2}\right\},
	\end{align}
	where $\vecz_i'$ is the $d$-dimensional standard normal Gaussian random vector.
	Combining the arguments above, we have with probability at least $1-\delta$,
	\begin{align}
		\sup_{\vecv \in V_M\vecv}\left|\frac{1}{n} \sum_{i=1}^n  h\left(\frac{\xi_i}{\lambda_o \sqrt{n}}\right) \langle \vecx_i,\vecv\rangle  \right| &\leq \frac{CL}{\sqrt{n}}\left\{\mbb{E}\sup_{\vecv \in V_\vecv} \langle \vecz'_i,\Sigma^\frac{1}{2}\vecv\rangle+\sqrt{\log(1/\delta)}\sup_{\vecv \in V_\vecv}\sqrt{\langle \vecz'_i, \Sigma^\frac{1}{2}\vecv\rangle^2}\right\}\nonumber\\
		&\stackrel{(a)}{\leq} \frac{CL}{\sqrt{n}}\left\{\rho\sqrt{\log (d/s)}r_1+ \rho\sqrt{s\log (d/s)}+\sqrt{\log(1/\delta)}\sup_{\vecv \in V_\vecv}\sqrt{\langle \vecz'_i, \Sigma^\frac{1}{2}\vecv\rangle^2}\right\}\nonumber\\
		& \stackrel{(b)}{\leq} \frac{CL}{\sqrt{n}}\left\{\rho\sqrt{\log (d/s)}r_1+\rho\sqrt{s\log (d/s)}+\sqrt{\log(1/\delta)}\right\},
	\end{align}
	where (a) follows from Proposition E.1 and E.2 of \cite{BelLecTsy2018Slope} and (b) follows from Lemma H.1 of \cite{NegWai2011Estimation}.
	Lastly, using peeling device (Lemma 5 of \cite{DalTho2019Outlier}), the proof is complete.
\end{proof}

\subsection{Derivation of \eqref{ine:det:upper0} under the assumptions of Theorem \ref{t:lasso:main}}
\label{sec:a:la:upper0}
In Section \ref{sec:a:la:upper0}, we show Corollary \ref{ac:|uMv|-la}, which implies that \eqref{ine:det:upper0} is satisfied with high probability under the assumptions of Theorem \ref{t:lasso:main}. The following proposition is easily derived from Proposition 4 of \cite{Tho2020Outlier}, Remark 4 of \cite{DalTho2019Outlier}, Proposition E.1 and E.2 of \cite{BelLecTsy2018Slope} .
\begin{corollary}[Corollary of Proposition 4 of \cite{Tho2020Outlier}]
	\label{ac:|uMv|-la}
	We have for all $o$-sparse vector $\vecu \in \mbb{R}^n$ and $\vecv \in \mbb{R}^{d}$,
	\begin{align}
	&\left| \sum_{i=1}^n u_i \frac{1}{\sqrt{n}} \langle \vecx_i, \vecv \rangle\right|\leq CL\left\{\left(\frac{1+\sqrt{\log (1/\delta)}}{\sqrt{n}}+\sqrt{\frac{s\log (d/s)}{n}}+\sqrt{\frac{o}{n}}\right)\|\Sigma^\frac{1}{2} \vecv\|_2 + \rho \sqrt{\frac{\log (d/s)}{n}}\|\vecv\|_1\right\} \|\vecu\|_2
	\end{align}
	with probability at least $1-\delta$.
\end{corollary}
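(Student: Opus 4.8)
The inequality is the linear-regression specialization of Corollary~\ref{ac:|uMv|-cs}: taking $B^*$ and all the $X_i$ to be diagonal matrices, one has $\mr{T}_\Sigma(M)=\Sigma^{\frac12}\vecv$, $\|M\|_*=\|\vecv\|_1$ and $\|X_i\|_{\mr{op}}=\|\mr{diag}(X_i)\|_\infty$. So the plan is to re-run the argument behind Corollary~\ref{ac:|uMv|-cs}, the one change being that the Gaussian-width factor is estimated by a sorted-$\ell_1$ (SLOPE-type) bound instead of the crude operator-norm bound; this is precisely what upgrades the $\sqrt{\log d}$ rate to $\sqrt{\log(d/s)}$, at the price of the extra summand $\rho\sqrt{s\log(d/s)/n}\,\|\Sigma^{\frac12}\vecv\|_2$.

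Concretely, I would first invoke Proposition~4 of \cite{Tho2020Outlier} in the diagonal case, which gives, with probability at least $1-\delta$ and uniformly in $\vecu\in\mbb{R}^n$ and $\vecv\in\mbb{R}^d$, a three-part bound for $\bigl|\sum_{i=1}^n u_i\tfrac1{\sqrt n}\langle\vecx_i,\vecv\rangle\bigr|$, each part carrying the common factor $\|\vecu\|_2$: a bulk term of order $\tfrac{1+\sqrt{\log(1/\delta)}}{\sqrt n}\|\Sigma^{\frac12}\vecv\|_2$; a Gaussian-width term of order $\tfrac1{\sqrt n}\,\sup\{\langle\vecz',\Sigma^{\frac12}\vecv\rangle:\ \|\Sigma^{\frac12}\vecv\|_2\le 1,\ \|\vecv\|_1\le r_1\}$ for a standard Gaussian $\vecz'\in\mbb{R}^d$; and an outlier term governed by the supremum of a Gaussian process over the scaled ball $\mbb{B}_1^n\cap\mbb{B}_2^n/\|\vecu\|_1$, reflecting the $o$-sparsity of the contamination pattern. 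For the Gaussian-width term I would apply Propositions~E.1 and~E.2 of \cite{BelLecTsy2018Slope}, exactly as in the proof of Lemma~\ref{l:HBernstein}, bounding the expected supremum by $C\rho\bigl(\sqrt{\log(d/s)}\,r_1+\sqrt{s\log(d/s)}\bigr)$ and adding the Gaussian tail; homogenizing in $\vecv$ then yields the two summands $\rho\sqrt{\log(d/s)/n}\,\|\vecv\|_1$ and $\rho\sqrt{s\log(d/s)/n}\,\|\Sigma^{\frac12}\vecv\|_2$. For the outlier term I would invoke Remark~4 of \cite{DalTho2019Outlier}, which for $o$-sparse $\vecu$ controls the relevant supremum and produces the term $\sqrt{o/n}\,\|\Sigma^{\frac12}\vecv\|_2$. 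Collecting the three contributions and absorbing numerical constants and the $L$-dependence into a single factor $CL$ gives the stated inequality.

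I do not expect a genuine obstacle here, since the estimate is assembled from results quoted earlier or from the literature; the one point that needs care is to use Proposition~4 of \cite{Tho2020Outlier} in the form that leaves the Gaussian-width term $\sup_\vecv\langle\vecz',\Sigma^{\frac12}\vecv\rangle$ intact (rather than already dominated by $\|\vecz'\|_\infty\|\vecv\|_1$), which is what makes the sharper bound of \cite{BelLecTsy2018Slope} applicable and distinguishes Corollary~\ref{ac:|uMv|-la} from a verbatim specialization of Corollary~\ref{ac:|uMv|-cs}. A minor additional point is to check that the conditioning used in the Gaussian comparison and concentration steps remains valid under the dependence between $\{\vecx_i\}_{i=1}^n$ and $\{\xi_i\}_{i=1}^n$ permitted by Assumption~\ref{a:lasso}, and to keep track of which logarithmic factors survive the homogenization.
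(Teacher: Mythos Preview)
Your proposal is correct and follows essentially the same approach as the paper. The paper itself gives no detailed proof, only the one-line remark that the corollary ``is easily derived from Proposition~4 of \cite{Tho2020Outlier}, Remark~4 of \cite{DalTho2019Outlier}, Proposition~E.1 and~E.2 of \cite{BelLecTsy2018Slope}'', and you have correctly identified how these ingredients combine---in particular that the SLOPE-type Gaussian-width bound replaces the operator-norm step used in Corollary~\ref{ac:|uMv|-cs}, yielding the extra $\sqrt{s\log(d/s)/n}$ summand alongside the sharper $\sqrt{\log(d/s)/n}\,\|\vecv\|_1$ term.
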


\begin{proposition}
	\label{p:sc1-lasso}
	Let  
	\begin{align}
	\mc{R}_{lasso} = \left\{ \vectheta \in \mbb{R}^{d}\, |\, \|\vectheta\|_1 \leq c_\kappa\|\Sigma^\frac{1}{2}\vectheta\|_2 ,\, \|\Sigma^\frac{1}{2}\vectheta\|_2 =r\right\},
	\end{align}
	where $r$ is a number such that $0<r\leq \frac{1}{4\sqrt{3} L^2}$.
	Assume that $\lambda_o  \sqrt{n} \geq 72L^4\sigma$.
	Then, with probability at least $1-\delta$, we have
	\begin{align}
	\label{ine:sc1-1}
		&\inf_{\vectheta\in \mc{R}_{lasso}}\sum_{i=1}^n \lambda_o^2 \left\{-h\left(\frac{\xi_i + \langle \vecx_i, \vectheta\rangle}{\lambda_o\sqrt{n}}\right)-h \left(\frac{\xi_i}{\lambda_o\sqrt{n}}\right) \right\}\frac{\langle \vecx_i, \vectheta\rangle}{\lambda_o \sqrt{n}}\nonumber \\
		&\geq \frac{1}{3}\|\Sigma^\frac{1}{2}\vectheta\|_2^2-C\left(L\rho\sqrt{\frac{s\log (d/s)}{n}}+ \sqrt{\frac{\log(1/\delta)}{n}}\right)\|\vectheta\|_2 -C\frac{\log(1/\delta)}{n}.
\end{align} 
\end{proposition}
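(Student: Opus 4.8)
The plan is to transcribe the proof of Proposition~\ref{p:sc1} to the vector setting, exploiting that linear regression is the diagonal special case of trace regression; the only point requiring genuine care is the stochastic term, which must be estimated with the sharp Gaussian-width bounds for sparse vectors (Propositions~E.1 and E.2 of \cite{BelLecTsy2018Slope}) so that the rate $\sqrt{s\log(d/s)/n}$ appears in place of a crude $\sqrt{d/n}$. Writing $u_i=\xi_i/(\lambda_o\sqrt n)$ and $v_i=\langle\vecx_i,\vectheta\rangle/(\lambda_o\sqrt n)$, I would first use convexity of the Huber loss and Lemma~\ref{ap:ine:conv} to replace the left-hand side (divided by $\lambda_o^2$) by $\sum_{i=1}^n\{-h(u_i-v_i)+h(u_i)\}v_i\mr{I}_{E_i}$ with $E_i=\{|u_i|\le 1/2\}\cap\{|v_i|\le(2\lambda_o\sqrt n)^{-1}\}$, and then bound this from below by $\sum_{i=1}^n f_i(\vectheta)$, $f_i(\vectheta)=\varphi(v_i)\psi(u_i)$, where $\varphi$ and $\psi$ are the truncated functions of \eqref{def:phipsi}.

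Second, I would show $\mathbb{E}\sum_i f_i(\vectheta)\ge\tfrac{1}{3\lambda_o^2}\|\Sigma^{1/2}\vectheta\|_2^2$ uniformly over $\mc{R}_{lasso}$, exactly as in \eqref{ap:ine:cov1}--\eqref{ap:ine:cov2}: Hölder's and Markov's inequalities, the $L$-subGaussianity of $\langle\vecx_i,\vectheta\rangle$, the bound $r\le(4\sqrt3 L^2)^{-1}$ and $\lambda_o\sqrt n\ge72L^4\sigma$ together control the two truncation defects, the second of which needs only the finite first absolute moment of $\xi_i$ from Assumption~\ref{a:lasso}(ii). Third, I would control $\Delta=\sup_{\vectheta\in\mc{R}_{lasso}}|\sum_i f_i(\vectheta)-\mathbb{E}\sum_i f_i(\vectheta)|$: since $0\le f_i(\vectheta)\le\max\{v_i^2,(2\lambda_o\sqrt n)^{-2}\}$ and $\mathbb{E}f_i(\vectheta)^2\le C\lambda_o^{-4}n^{-2}\|\Sigma^{1/2}\vectheta\|_2^4$, Massart's inequality (Theorem~3 of \cite{Mas2000Constants}) gives, with probability at least $1-\delta$, that $\Delta\le 2\mathbb{E}\Delta+C\lambda_o^{-2}\sqrt{\log(1/\delta)/n}\,\|\Sigma^{1/2}\vectheta\|_2^2+C\log(1/\delta)/(\lambda_o^2 n)$, and symmetrisation followed by two applications of the contraction principle (Theorems~11.4--11.6 of \cite{BouLugMas2013concentration}) --- first stripping the bounded factor $\psi(u_i)\in[0,1]$, then the $(2\lambda_o\sqrt n)^{-1}$-Lipschitz, vanishing-at-zero map $\varphi$ --- reduces $\mathbb{E}\Delta$ to a constant times $\lambda_o^{-2}\,\mathbb{E}\sup_{\vectheta\in\mc{R}_{lasso}}\bigl|\tfrac1n\sum_i\varrho_i\langle\vecx_i,\vectheta\rangle\bigr|$.

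The step I expect to be the main obstacle is this last supremum, the vector analogue of Lemma~\ref{l:supRad}. Writing $\langle\vecx_i,\vectheta\rangle=\langle\vecz_i,\Sigma^{1/2}\vectheta\rangle$, the Rademacher process $\vectheta\mapsto\tfrac1n\sum_i\varrho_i\langle\vecz_i,\Sigma^{1/2}\vectheta\rangle$ is $L$-subGaussian with canonical distance $\tfrac{L}{\sqrt n}\|\Sigma^{1/2}(\vectheta-\vectheta')\|_2$, so by the generic-chaining comparison (Exercise~8.6.5 of \cite{Ver2018High}) its expected supremum over $\mc{R}_{lasso}$ is at most $\tfrac{CL}{\sqrt n}$ times the Gaussian width $\mathbb{E}\sup\{\langle\vecz',\Sigma^{1/2}\vectheta\rangle:\vectheta\in\mc{R}_{lasso}\}$, and the $\ell_1$/$\ell_2$ constraints defining $\mc{R}_{lasso}$ together with Propositions~E.1 and E.2 of \cite{BelLecTsy2018Slope} bound that width by $C\rho\sqrt{s\log(d/s)}\,\|\Sigma^{1/2}\vectheta\|_2$ --- the same computation as in Lemma~\ref{l:HBernstein}, with a Rademacher vector in place of $h(\xi_i/(\lambda_o\sqrt n))$. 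Assembling the mean lower bound, the concentration bound and this estimate, and multiplying through by $\lambda_o^2$, then yields the asserted inequality with probability at least $1-\delta$. The delicate accounting lies in keeping the dependence on $L$ honest through the chaining step and in matching the radius constraints of $\mc{R}_{lasso}$ to the hypotheses of the width estimates; everything else is routine bookkeeping already performed in the matrix case.
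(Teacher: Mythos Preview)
Your proposal is correct and follows the same approach the paper intends: the paper does not give a separate proof of Proposition~\ref{p:sc1-lasso} but declares the lasso inequalities to be the diagonal special case of their matrix counterparts, so the argument is verbatim that of Proposition~\ref{p:sc1} with the one genuine modification you identify --- replacing the matrix operator-norm width $\rho\sqrt{d_1+d_2}$ by the sparse-vector width $\rho\sqrt{s\log(d/s)}$ via Propositions~E.1 and E.2 of \cite{BelLecTsy2018Slope}, exactly as the paper already does in Lemma~\ref{l:HBernstein}. One trivial remark: for the expected supremum you want Exercise~8.6.4 of \cite{Ver2018High} (as in the paper's Lemma~\ref{l:supRad}) rather than 8.6.5, which gives the tail bound; otherwise your outline matches the paper's template step for step.
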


\section{Tools for proving Theorem \ref{t:mc:main1} and Theorem \ref{t:mc:main2}}
\label{sec:inesformc}
In Section \ref{sec:inesformc}, we state some inequalities used to prove Theorem \ref{t:mc:main1} and Theorem \ref{t:mc:main2}.

\subsection{Derivation of  \eqref{ine:det:xis0} under the assumptions of Theorem \ref{t:mc:main1} and Theorem \ref{t:mc:main2}}
\label{sec:a:mc:xis0}
In Section \ref{sec:a:mc:xis0}, we show Lemma \ref{l:mcspec}, which implies that \eqref{ine:det:xis0} is satisfied with high probability under the assumptions of Theorem \ref{t:mc:main1} or Theorem \ref{t:mc:main2}.

\begin{lemma}
	\label{l:mcspec}
	Suppose that Assumption \ref{a:mc1} or Assumption \ref{a:mc2} holds.
 For $\delta>0$, with probability at least $1-\delta$, we have 
	\begin{align}
		\left\| \frac{\lambda_o}{\sqrt{n}} \sum_{i=1}^n  h\left(\frac{\xi_i}{\lambda_o\sqrt{n}}\right) X_i\right\|_\mr{op} 
		& \leq C\left(\sigma_{\xi}\sqrt{\frac{d_{mc}(\log d_{mc}+\log(1/\delta))}{n}}+\lambda_o\frac{d_{mc}}{\sqrt{n}}(\log d_{mc}+\log(1/\delta))\right).
	\end{align}
\end{lemma}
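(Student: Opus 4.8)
The plan is to recognize the quantity as the operator norm of a sum of independent, centered random matrices and to apply the (rectangular) matrix Bernstein inequality. Put $W_i := \frac{\lambda_o}{\sqrt{n}}\, h\!\left(\frac{\xi_i}{\lambda_o\sqrt{n}}\right) X_i$, so the object to control is $\big\|\sum_{i=1}^n W_i\big\|_{\mr{op}}$. The matrices $W_1,\dots,W_n$ are independent, and $\mbb{E}W_i = \frac{\lambda_o}{\sqrt{n}}\,\mbb{E}\!\left[h\!\left(\frac{\xi_i}{\lambda_o\sqrt{n}}\right)X_i\right] = 0$ exactly by item (iii) of Assumption \ref{a:mc1} (or \ref{a:mc2}). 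Matrix Bernstein then needs two inputs: an almost-sure bound $R$ on $\|W_i\|_{\mr{op}}$, and a bound $v$ on the variance proxy $\max\big\{\big\|\sum_i\mbb{E}[W_iW_i^\top]\big\|_{\mr{op}},\ \big\|\sum_i\mbb{E}[W_i^\top W_i]\big\|_{\mr{op}}\big\}$.

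For $R$: because $X_i = d_{mc}\varepsilon_i E_i$ with $E_i$ a single-entry indicator matrix, $\|X_i\|_{\mr{op}} = d_{mc}$, and since $|h(t)|\le 1$ for every $t$, we get $\|W_i\|_{\mr{op}}\le \lambda_o d_{mc}/\sqrt{n}=:R$ almost surely. This is precisely where the Huber wrapping is used: the uniform bound $|h|\le 1$ turns a possibly heavy-tailed $\xi_i$ into an almost-surely bounded summand, which is what matrix Bernstein requires. For $v$ I would instead use the complementary estimate $|h(t)|\le|t|$. Since $\varepsilon_i^2=1$ and $E_iE_i^\top = e_{k_i}(d_1)e_{k_i}(d_1)^\top$ for the (random) row index $k_i$, we have $\mbb{E}[W_iW_i^\top] = \frac{\lambda_o^2 d_{mc}^2}{n}\,\mbb{E}\!\big[h^2\big(\tfrac{\xi_i}{\lambda_o\sqrt n}\big)\,e_{k_i}(d_1)e_{k_i}(d_1)^\top\big]$; all summands are diagonal and PSD, so the operator norm of $\sum_i\mbb{E}[W_iW_i^\top]$ is its largest diagonal entry, which by $h^2\le \xi_i^2/(\lambda_o^2 n)$ is at most $\frac{d_{mc}^2}{n^2}\max_k\sum_i\mbb{E}[\xi_i^2\mathbf{1}\{k_i=k\}]$. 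Using that $k_i$ is uniform on $\{1,\dots,d_1\}$ and $\mbb{E}\xi_i^2\le\sigma_\xi^2$, this is $\le d_{mc}^2\sigma_\xi^2/(nd_1)$; the $W_i^\top W_i$ term is identical with the column index $l_i$ and $d_2$ in place of $d_1$, so $v\le \max(d_1,d_2)\sigma_\xi^2/n$, which we identify with $d_{mc}\sigma_\xi^2/n$ up to the usual convention (they agree when $d_1\asymp d_2$). The bound $\mbb{E}\xi_i^2\le\sigma_\xi^2$ holds with $\sigma_\xi=\sigma_{\xi,\alpha}$ under Assumption \ref{a:mc1} by Lyapunov's inequality (since $\alpha\ge 2$), and is the named second moment under Assumption \ref{a:mc2} (finite, since a subWeibull variable has all moments finite).

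Plugging $R$ and $v$ into matrix Bernstein gives, with probability at least $1-\delta$, $\big\|\sum_i W_i\big\|_{\mr{op}}\le C\big(\sqrt{v\log\frac{d_1+d_2}{\delta}} + R\log\frac{d_1+d_2}{\delta}\big) = C\big(\sigma_\xi\sqrt{\frac{d_{mc}(\log d_{mc}+\log(1/\delta))}{n}} + \lambda_o\frac{d_{mc}(\log d_{mc}+\log(1/\delta))}{\sqrt n}\big)$, after absorbing $\log(d_1+d_2)$ into $\log d_{mc}$ in the constant. I expect the only real obstacle to be the variance step if $\xi_i$ is allowed to depend on the sampling location $(k_i,l_i)$ — the assumptions impose only the moment condition on $\xi_i$ together with the orthogonality (iii) — in which case the factorization $\mbb{E}[\xi_i^2\mathbf{1}\{k_i=k\}]=\mbb{E}[\xi_i^2]\,\mbb{P}(k_i=k)$ must be replaced by a cruder bound (e.g.\ $\sum_i\mbb{E}\xi_i^2 \le n\sigma_\xi^2$), which should be checked against the target rate; a minor point is to confirm that $h^2(\xi_i/(\lambda_o\sqrt n))\le \xi_i^2/(\lambda_o^2 n)$ is the truncation that makes the second moment $\sigma_\xi^2$ (rather than a larger moment) appear in $v$.
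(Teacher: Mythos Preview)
Your proposal is correct and follows essentially the same route as the paper: the paper defines $U_i=\frac{\lambda_o}{\sqrt n}\,h\!\big(\frac{\xi_i}{\lambda_o\sqrt n}\big)X_i$, verifies $\mbb{E}U_i=0$ from assumption (iii), uses $|h|\le 1$ together with $\|X_i\|_{\mr{op}}=d_{mc}$ to get the uniform bound $\|U_i\|_{\mr{op}}\le \lambda_o d_{mc}/\sqrt n$, uses $|h(t)|\le|t|$ to bound the variance proxy by $\sigma_\xi^2 d_{mc}/n$ (per summand), and then applies the matrix Bernstein inequality (stated as Lemma~7 of \cite{NegWai2012Restricted}). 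Your caveat about the identification $\max(d_1,d_2)\asymp d_{mc}$ and the implicit factorization $\mbb{E}[\xi_i^2 E_iE_i^\top]=\sigma_\xi^2\,\mbb{E}[E_iE_i^\top]$ are exactly the same informalities present in the paper's own argument.
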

\begin{proof}
	Let $U_i =\frac{\lambda_o}{\sqrt{n}}  h\left(\frac{\xi_i}{\lambda_o\sqrt{n}}\right) X_i$. We confirm that $U_i$ satisfies the conditions of Lemma 7 of \cite{NegWai2012Restricted} and then we apply Lemma 7 of \cite{NegWai2012Restricted} to $\sum_{i=1}^n U_i$. From Assumption \ref{a:mc1} or \ref{a:mc2}, we have $\mbb{E}\sum_{i=1}^nU_i = 0$. From the definition of $X_i$, we have
	\begin{align}
		\left\|U_i\right\|_\mr{op} = d_{mc}\left\|\frac{\lambda_o}{\sqrt{n}} h\left(\frac{\xi_i}{\lambda_o\sqrt{n}}\right)\varepsilon_i E_i \right\|_\mr{op}
	\end{align}
and from the definition of $h(\cdot)$ and $E_i$, we have	
\begin{align}
	d_{mc}\left\|\frac{\lambda_o}{\sqrt{n}} h\left(\frac{\xi_i}{\lambda_o\sqrt{n}}\right)\varepsilon_i E_i \right\|_\mr{op} \leq  \lambda_o\frac{d_{mc}}{\sqrt{n}}.
\end{align}
On the other hand, from the definition of $h(\cdot)$, we have
\begin{align}
&\max\left\{\left\|\frac{\lambda_o^2}{n}\mbb{E}h\left(\frac{\xi_i}{\lambda_o\sqrt{n}}\right)^2 X_i X_i^\top\right\|_{\mr{op}},  \left\|\frac{\lambda_o^2}{n}\mbb{E}h\left(\frac{\xi_i}{\lambda_o\sqrt{n}}\right)^2 X_i^\top X_i \right\|_{\mr{op}}\right\}\nonumber\\
&\leq \frac{d_{mc}}{n^2} \max\left\{\left\|\mbb{E}\xi_i^2E_iE_i^\top\right\|_{\mr{op}},\left\|\mbb{E}\xi_i^2E_i^\top E_i\right\|_{\mr{op}}\right\}\nonumber\\
&=\sigma^2_{\xi}\frac{d_{mc}}{n^2}
\end{align}
Applying Lemma 7 of \cite{NegWai2012Restricted}, we have
\begin{align}
	\mbb{P} \left[\left\| \frac{\lambda_o}{\sqrt{n}}\sum_{i=1}^n h\left(\frac{\xi_i}{\lambda_o \sqrt{n}}\right)X_i\right\|_\mr{op}  \geq t\right] \leq d_1d_2\max\left\{\exp\left(\frac{-t^2n}{4\sigma^2_{\xi}d_{mc}}\right),\exp  \left(\frac{-t\sqrt{n}}{2\lambda_od_{mc}}\right)\right|
\end{align}
Setting $\delta = d_{mc}^2\max\left\{\exp\left(\frac{-t^2n}{4\sigma^2_{\xi}d_{mc} }\right),\exp  \left(\frac{-t\sqrt{n}}{2\lambda_od_{mc}}\right)\right\}$, the proof is complete.
\end{proof}

\subsection{Derivation of  \eqref{ine:det:upper0} under the assumptions of Theorem \ref{t:mc:main1} and Theorem \ref{t:mc:main2}}
\label{sec:a:mc:upper0}
In Section \ref{sec:a:mc:upper0}, we show Lemma \ref{al:|uMv|-mc}, which implies that \eqref{ine:det:upper0} is satisfied under the assumptions of Theorem \ref{t:mc:main1} and Theorem \ref{t:mc:main2}.

\begin{lemma}
	\label{al:|uMv|-mc}
	Suppose that Assumption \ref{a:mc1} or Assumption \ref{a:mc2} holds.
	Assume that for any $M\in \mbb{R}^{d_1\times d_2}$,
	\begin{align}
		\label{ine:M-2}
		\|M\|_\infty \leq c_m \frac{1}{d_{mc}}\|M\|_{\mr{F}}.
	\end{align} 
	for some number $c_m$.
	Then, for any $o$-sparse vector $\vecu \in \mbb{R}^n$ such that $\|\vecu\|_2 \leq 2 \sqrt{o}$,
	we have
	\begin{align*}
	\left| \sum_{i=1}^n \frac{\lambda_o}{\sqrt{n}}u_i \langle X_i, M\rangle\right|  &\leq c_m2 \lambda_o\sqrt{n}\frac{o}{n} \|M\|_\mr{F}.
	\end{align*}
\end{lemma}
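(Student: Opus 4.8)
The plan is to exploit the very rigid structure of the covariate matrices in the matrix completion model: by construction $X_i = d_{mc}\varepsilon_i E_i$ with $E_i = e_{k_i}(d_1)e_{l_i}(d_2)^\top$ for some coordinates $(k_i,l_i)$ and $\varepsilon_i\in\{-1,+1\}$, so $\langle X_i,M\rangle = d_{mc}\varepsilon_i M_{k_i l_i}$ and hence $|\langle X_i,M\rangle|\le d_{mc}\|M\|_\infty$ for every index $i$. This is the single observation that makes the estimate elementary: no concentration or chaining is needed here, unlike in Corollary \ref{ac:|uMv|-cs} for compressed sensing, precisely because each entry of $X_i$ is deterministically bounded.

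Next I would invoke the spikiness-type hypothesis \eqref{ine:M-2} on the matrix $M$ at hand (in the application $M=\Theta_\eta$), namely $\|M\|_\infty\le c_m\frac1{d_{mc}}\|M\|_\mr{F}$, to upgrade the uniform entrywise bound to $|\langle X_i,M\rangle|\le d_{mc}\|M\|_\infty\le c_m\|M\|_\mr{F}$, valid simultaneously for all $i=1,\dots,n$. Then a plain triangle inequality gives
\begin{align*}
\left|\sum_{i=1}^n\frac{\lambda_o}{\sqrt n}u_i\langle X_i,M\rangle\right|
\le \frac{\lambda_o}{\sqrt n}\sum_{i=1}^n|u_i|\,|\langle X_i,M\rangle|
\le \frac{\lambda_o}{\sqrt n}\,c_m\|M\|_\mr{F}\,\|\vecu\|_1.
\end{align*}

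Finally I would control $\|\vecu\|_1$ using the $o$-sparsity of $\vecu$ together with the hypothesis $\|\vecu\|_2\le 2\sqrt o$: Cauchy--Schwarz restricted to the support of $\vecu$ yields $\|\vecu\|_1\le\sqrt o\,\|\vecu\|_2\le 2o$. Substituting and rewriting $o/\sqrt n = \sqrt n\,(o/n)$ gives the claimed bound $c_m\,2\lambda_o\sqrt n\,\frac{o}{n}\|M\|_\mr{F}$. There is no real obstacle in this argument; the only points requiring a little care are keeping track of the $d_{mc}$ factors when passing from $\|M\|_\infty$ to $\|M\|_\mr{F}$ via \eqref{ine:M-2}, and making sure the Cauchy--Schwarz step is applied on the $o$-element support rather than on all $n$ coordinates, which is what produces the favorable $o/n$ (rather than $\sqrt{o/n}$) rate.
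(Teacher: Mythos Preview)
Your proof is correct and follows essentially the same route as the paper's: bound $|\langle X_i,M\rangle|\le d_{mc}\|M\|_\infty$ using the structure of $X_i$, convert to $c_m\|M\|_\mr{F}$ via \eqref{ine:M-2}, then use $\|\vecu\|_1\le\sqrt{o}\,\|\vecu\|_2\le 2o$. The only cosmetic difference is that the paper packages the first step in vectorized form via H{\"o}lder's inequality on $\vecu^\top\tilde{X}\,\mr{vec}(M)$, whereas you argue entrywise; the content is identical.
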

\begin{proof}
	We re-write $\left|\sum_{i=1}^n u_i \langle X_i, M\rangle\right|$ as
	\begin{align}
	\left| \vecu^\top  \tilde{X} \mr{vec}\left(M\right)\right|,\quad \text{where}\quad \tilde{X} = \left(\begin{array}{c}
	\mr{vec}\left(X_1\right)^\top \\
	\vdots\\
	\mr{vec}\left(X_n\right)^\top
	\end{array}\right).
	\end{align}
	From H{\"o}lder's inequality and  the  definition of $X_i$, we have
	\begin{align}
	\left| \vecu^\top  \tilde{X} \mr{vec}\left(M\right)\right| \leq \|\vecu\|_1 \left\|\tilde{X}\mr{vec}(M)\right\|_\infty &\leq \|\vecu\|_1 \left\|\tilde{X}\right\|_\infty \left\|\mr{vec}(M)\right\|_\infty= \|\vecu\|_1 \left\|\tilde{X}\right\|_\infty \left\|M\right\|_\infty.
	\end{align}
From the definition of $X_i$ and \eqref{ine:M-2}, we have
\begin{align}
	\|\vecu\|_1 \left\|\tilde{X}\right\|_\infty \left\|M\right\|_\infty&\leq \|\vecu\|_1 \sqrt{d_1d_2} \left\|M\right\|_\infty \leq c_m\|\vecu\|_1 \|M\|_{\mr{F}}\leq c_m\sqrt{o}\|\vecu\|_2 \|M\|_{\mr{F}}
	\end{align}
	and the proof is complete.
\end{proof}

\subsection{Derivation of \eqref{ine:det:lower} under the assumptions of Theorems \ref{t:mc:main1} and \ref{t:mc:main2}}
\label{sec:a:la:lower}
In Section \ref{sec:a:la:lower}, we show Corollary \ref{c:mcsc1}, which implies that \eqref{ine:det:lower} is satisfied with high probability under the assumptions of  Theorem \ref{t:mc:main1} and, we show Corollary \ref{c:mcsc2}, which implies that \eqref{ine:det:lower} is satisfied with high probability under the assumptions of Theorem \ref{t:mc:main2}.
\begin{corollary}
		\label{c:mcsc1}
		Suppose that Assumption \ref{a:mc1} holds.
		Let  
		\begin{align*}
		\mc{R}_{mc} = \left\{ \Theta\in \mbb{R}^{d_1 \times d_2}\, |\, \|\Theta\|_* \leq C \sqrt{r} \|\Theta\|_{\mr{F}},\, \|\Theta\|_\mr{F} =r_{mc}\right\},
		\end{align*}
		where $r_{mc}$ is some number.
		Suppose 
		\begin{align}
			\label{ine:mcs2}
			\|\Theta\|_\infty &\leq \frac{1}{12r_{mc}} \frac{1}{d_{mc}}\|\Theta\|_{\mr{F}}\\
			\label{ine:constraint}
			\|\Theta\|_\infty &\leq 2\frac{\alpha^*}{d_{mc}}
		\end{align}
		and 
		\begin{align}
			\label{ine:mc1:lambda}
			\lambda_o  \sqrt{n} \geq 2\sigma_{\xi,\alpha}\min\left\{\left(\frac{n}{o}\right)^\frac{1}{\alpha+1},\left(\frac{n}{rd_{mc}\log d_{mc}}\right)^\frac{1}{\alpha}\right\}.
		\end{align}
		Then, with probability at least $1-\delta$, we have
		\begin{align}
		\label{inec:huberconvex-main-mc}
		&\inf_{\Theta \in \mc{R}_{mc}}\left[\lambda_o^2\sum_{i=1}^n  \left\{-h\left(\frac{\xi_i - \langle  X_i, \Theta\rangle}{\lambda_o\sqrt{n}}\right)+h \left(\frac{\xi_i}{\lambda_o\sqrt{n}}\right) \right\}\frac{\langle X_i,\Theta\rangle}{\lambda_o\sqrt{n}} \right] \nonumber\\
		&\geq \frac{2}{3}\|\Theta\|_\mr{F}^2 -C\left( \alpha^* \sqrt{r\frac{d_{mc}\{\log d_{mc}+\log(1/\delta)\}}{n}}+ \sqrt{r}\frac{d_{mc}\log  d_{mc}}{n}+ \alpha^* \left(\frac{o}{n}\right)^\frac{\alpha}{2(1+\alpha)}	\right)\|\Theta\|_\mr{F}-5\frac{\log(1/\delta)}{n}.
		\end{align} 
	\end{corollary}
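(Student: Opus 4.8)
The plan is to mirror the proof of Proposition~\ref{p:sc1} (restricted strong convexity of the Huber loss in the compressed-sensing case), replacing the $L$-subGaussian machinery by the spikiness constraints \eqref{ine:mcs2}--\eqref{ine:constraint} and the matrix Bernstein inequality. Write $u_i = \xi_i/(\lambda_o\sqrt n)$ and $v_i = \langle X_i,\Theta\rangle/(\lambda_o\sqrt n)$. By the convexity of the Huber loss and Lemma~\ref{ap:ine:conv}, the left-hand side of \eqref{inec:huberconvex-main-mc} is at least $\lambda_o^2\sum_i\{-h(u_i-v_i)+h(u_i)\}v_i\,\mr{I}_{E_i}$, where $E_i = \{|u_i|\le 1/2\}\cap\{|v_i|\le\tfrac{1}{2\lambda_o\sqrt n}\}$. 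The key simplification over the compressed-sensing case is that \eqref{ine:mcs2} together with $\|\Theta\|_\mr F = r_{mc}$ forces $|v_i| = d_{mc}|\Theta_{k_i\ell_i}|/(\lambda_o\sqrt n)\le \|\Theta\|_\mr F/(12 r_{mc}\lambda_o\sqrt n) = \tfrac{1}{12\lambda_o\sqrt n}$ \emph{deterministically}, so on $E_i$ one has $h(u_i-v_i)-h(u_i) = -v_i$ and the summand equals $v_i^2$; moreover the $v_i$-part of $E_i$ is automatically satisfied. Hence the target is at least $\lambda_o^2\sum_i v_i^2 - \lambda_o^2\sum_i v_i^2\,I(|u_i|>1/2)$, and it remains to lower-bound the expectation of the first sum, upper-bound the expectation of the second, and control the uniform deviation over $\mc R_{mc}$.

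For the expectation, since $E_i$ selects the entry $(k_i,\ell_i)$ uniformly at random, $\mbb E\langle X_i,\Theta\rangle^2 = d_{mc}^2\,\mbb E\Theta_{k_i\ell_i}^2 = \|\Theta\|_\mr F^2$, so $\lambda_o^2\mbb E\sum_i v_i^2 = \|\Theta\|_\mr F^2$. For the bad-noise term, by H\"older $\mbb E\, v_i^2 I(|u_i|>1/2)\le\sqrt{\mbb E v_i^4}\,\sqrt{\mbb P(|\xi_i|>\lambda_o\sqrt n/2)}$; the constraint \eqref{ine:constraint} gives $\sum_{k,\ell}\Theta_{k\ell}^4\le\|\Theta\|_\infty^2\|\Theta\|_\mr F^2\le 4(\alpha^*)^2 d_{mc}^{-2}\|\Theta\|_\mr F^2$, whence $\mbb E v_i^4 \le 4(\alpha^*)^2 (\lambda_o^2 n)^{-2}\|\Theta\|_\mr F^2$, while Markov's inequality applied with the $\alpha$-th moment bound and the lower bound \eqref{ine:mc1:lambda} on $\lambda_o\sqrt n$ give $\mbb P(|\xi_i|>\lambda_o\sqrt n/2)\le\max\{(o/n)^{\alpha/(\alpha+1)},\,r d_{mc}\log d_{mc}/n\}$. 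Summing over $i$ and multiplying by $\lambda_o^2$ produces a bias of order $\alpha^*\bigl((o/n)^{\alpha/(2(\alpha+1))}+\sqrt{r d_{mc}\log d_{mc}/n}\bigr)\|\Theta\|_\mr F$, which matches the $\alpha^*$-weighted terms of \eqref{inec:huberconvex-main-mc} (recall $\alpha^*\ge 1$).

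For the uniform deviation $\Delta = \sup_{\Theta\in\mc R_{mc}}\bigl|\sum_i f_i(\Theta)-\mbb E\sum_i f_i(\Theta)\bigr|$ with $f_i(\Theta) = \varphi(v_i)\psi(u_i)$ and $\varphi,\psi$ the truncated quadratic and indicator as in \eqref{def:phipsi} (with the inner threshold $\tfrac{1}{12\lambda_o\sqrt n}$), I would apply Massart's inequality (Theorem~3 of \cite{Mas2000Constants}) using $\|f_i\|_\infty\le(144\lambda_o^2 n)^{-1}$ and the variance proxy $\sigma_f^2\le\sum_i\mbb E v_i^4\le 4(\alpha^*)^2(\lambda_o^4 n)^{-1}\|\Theta\|_\mr F^2$, then symmetrization (Lemma~11.4 of \cite{BouLugMas2013concentration}) and the contraction principle (Theorems~11.5 and~11.6 of \cite{BouLugMas2013concentration}), which reduce $\mbb E\Delta$ to $\mbb E\sup_{\Theta\in\mc R_{mc}}\bigl|\sum_i\varrho_i\langle X_i,\Theta\rangle/(\lambda_o^2 n)\bigr|$ up to the $O\bigl((\lambda_o\sqrt n)^{-1}\bigr)$ Lipschitz constant of $\varphi$. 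Finally, H\"older's inequality and the cone bound $\|\Theta\|_*\le C\sqrt r\,\|\Theta\|_\mr F$ reduce this to $\mbb E\bigl\|\sum_i\varrho_i X_i\bigr\|_{\mr{op}}$, which is controlled by the matrix Bernstein bound (Lemma~7 of \cite{NegWai2012Restricted}, exactly as in Lemma~\ref{l:mcspec}), giving $n^{-1}\mbb E\bigl\|\sum_i\varrho_i X_i\bigr\|_{\mr{op}}\lesssim\sqrt{d_{mc}\log d_{mc}/n}+d_{mc}\log d_{mc}/n$. Collecting the three estimates, multiplying through by $\lambda_o^2$, and folding in the Massart remainder terms ($\sigma_f\sqrt{\log(1/\delta)}$ and $\log(1/\delta)/n$ after scaling) yields \eqref{inec:huberconvex-main-mc}.

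The main obstacle I anticipate is bookkeeping rather than conceptual: ensuring the heavy-tailed bias term produced by Markov's inequality is precisely matched by the $\alpha$-dependent lower bound \eqref{ine:mc1:lambda} on $\lambda_o\sqrt n$ in \emph{both} of its branches, and that the $\alpha^*$, $\log(1/\delta)$ and $d_{mc}\log d_{mc}$ factors land in the right places after the contraction and concentration steps. A secondary point is verifying that the truncated function $\varphi$ retains a Lipschitz constant of order $(\lambda_o\sqrt n)^{-1}$ on the whole line so that the contraction principle applies cleanly, and that the matrix Bernstein estimate for the Rademacher-weighted sum of masks can be reused verbatim from the proof of Lemma~\ref{l:mcspec}.
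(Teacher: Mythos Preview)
Your proposal is correct and follows the same template as the paper, which explicitly says the proof of Corollary~\ref{c:mcsc1} is that of Proposition~\ref{p:sc1} with the $L$-subGaussian moment estimates replaced by bounds coming from \eqref{ine:mcs2}--\eqref{ine:constraint} and the Rademacher operator-norm bound from \cite{NegWai2012Restricted}. You actually go one step further than the paper in one place: you notice that under \eqref{ine:mcs2} one has $|\langle X_i,\Theta\rangle| = d_{mc}|\Theta_{k_i\ell_i}|\le d_{mc}\|\Theta\|_\infty\le 1/12$ \emph{deterministically} on $\mc R_{mc}$, so the covariate half of the event $E_i$ is automatic and the analogue of \eqref{ap:ine:cov1} is identically zero. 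The paper does not exploit this; it keeps the full $\varphi$-truncation and bounds that term by $\tfrac13\lambda_o^{-2}\|\Theta\|_\mr F^2$ via H\"older and Markov (using \eqref{ine:mcs2} only to control $\mbb E\langle X_i,\Theta\rangle^4$), which is why the leading constant in \eqref{inec:huberconvex-main-mc} is $2/3$ rather than the $1$ your route yields. One small correction: for $\mbb E\bigl\|\sum_i\varrho_i X_i\bigr\|_{\mr{op}}$ the paper cites Lemma~6 (the expectation bound) of \cite{NegWai2012Restricted}, not the tail bound Lemma~7 you invoke by analogy with Lemma~\ref{l:mcspec}; either gives the rate $\sqrt{d_{mc}\log d_{mc}/n}+d_{mc}\log d_{mc}/n$ you need.
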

	
	\begin{proof}
		The proof is almost identical to the one of Proposition \ref{p:sc1}. However, because $\left\{X_i\right\}_{i=1}^n$ for matrix completion is not $L$-subGaussian, we should calculate  \eqref{ap:ine:cov1},  \eqref{ap:ine:cov2} and \eqref{ap:ine:cov3}  by another strategy and \eqref{ine:hub-stoc-upper} also requires another strategy.
		We note that 
		\begin{align}
			\mbb{E}\langle X_i,\Theta\rangle^2 = \|\Theta\|_{\mr{F}}^2 = r_{mc}^2.
		\end{align}
	From the definition of $X_i$ and \eqref{ine:mcs2}, we have
	\begin{align}
		\label{ine:mc:cov3:mc}
		\mbb{E}\langle X_i,\Theta\rangle^4 \leq \mbb{E}\langle X_i,\Theta\rangle^2\|X_i\|_1^2 \|\Theta\|_\infty^2  \leq \frac{1}{144}r_{mc}^2.
	\end{align}
	From the definition of $X_i$ and  \eqref{ine:constraint2}, we have
	\begin{align}
		\label{ine:mc:cov4:mc0}
		\mbb{E}\langle X_i,\Theta\rangle^4 \leq \mbb{E}\langle X_i,\Theta\rangle^2\|X_i\|_1^2 \|\Theta\|_\infty^2  \leq 4\alpha^{*2}\|\Theta\|_{\mr{F}}^2.
	\end{align}
		Instead of \eqref{ap:ine:cov1}, we have
		\begin{align}
			\sum_{i=1}^n\mbb{E}  \left[  \left|\frac{  \langle  X_i,\Theta \rangle}{\lambda_o\sqrt{n}} \right| ^2 I \bigg( \left|\frac{  \langle  X_i,\Theta \rangle}{\lambda_o\sqrt{n}} \right| \geq \frac{1}{2\lambda_o\sqrt{n}}   \bigg)  \right] &\stackrel{(a)}{\leq} 	\sum_{i=1}^n\sqrt{\mbb{E}   \left|\frac{  \langle  X_i,\Theta \rangle}{\lambda_o\sqrt{n}} \right| ^4  } \sqrt{\mbb{E} \ I \bigg( \left| \langle  X_i,\Theta \rangle \right| \geq \frac{1}{2}   \bigg) }\nonumber\\
			&\stackrel{(b)}{=} 	\sum_{i=1}^n\sqrt{\mbb{E}  \left|\frac{  \langle  X_i,\Theta \rangle}{\lambda_o\sqrt{n}} \right| ^4  } \sqrt{\mbb{P}  \left[  \left|\langle  X_i,\Theta \rangle\right| \geq \frac{1}{2} \right] }\nonumber\\
			&\stackrel{(c)}{\leq}	\sum_{i=1}^n4\sqrt{\mbb{E} \left|\frac{  \langle  X_i,\Theta \rangle}{\lambda_o\sqrt{n}} \right| ^4} \sqrt{ \mbb{E} \langle  X_i,\Theta \rangle^4 }\nonumber\\
			&\stackrel{(d)}{\leq}\frac{1}{3\lambda_o^2} \|\Theta\|_{\mr{F}}^2
			\end{align}
			where (a) follows from H{\"o}lder's inequality, 
			(b) follows from the relation between indicator function and expectation,
			 (c) follows from Markov's inequality and  (d) follows from \eqref{ine:mc:cov3:mc} and the definition of $\mc{R}_{mc}$.
		Second,  we have
		\begin{align}
			\sum_{i=1}^n\mbb{E}\left|\frac{  \langle  X_i,\Theta\rangle}{\lambda_o\sqrt{n}} \right| ^2 I\left( \left|\frac{\xi_i}{\lambda_o\sqrt{n}} \right|\geq \frac{1}{2} \right)
			&\stackrel{(a)}{\leq} \sum_{i=1}^n\sqrt{\mbb{E}\left|\frac{  \langle  X_i,\Theta\rangle}{\lambda_o\sqrt{n}} \right| ^4}  \sqrt{\mbb{E}I\left( \left|\frac{\xi_i}{\lambda_o\sqrt{n}} \right|\geq \frac{1}{2} \right)}\nonumber\\
			&\stackrel{(b)}{\leq}\sum_{i=1}^n\sqrt{\mbb{E}\left|\frac{  \langle  X_i,\Theta\rangle}{\lambda_o\sqrt{n}} \right| ^4}  \sqrt{\mbb{P}  \left[ \left|\frac{\xi_i}{\lambda_o\sqrt{n}} \right|\geq \frac{1}{2}  \right]}\nonumber\\	  
			&\stackrel{(c)}{\leq}\sum_{i=1}^n \left(\frac{2}{\lambda_o\sqrt{n}}\right)^\frac{\alpha}{2}\sqrt{\mbb{E}\left|\frac{  \langle  X_i,\Theta\rangle}{\lambda_o\sqrt{n}} \right|^4}  \sqrt{\mbb{E} |\xi_i^\alpha|}\nonumber\\	  
			&\stackrel{(d)}{\leq}\sum_{i=1}^n\left(\frac{2\sigma_{\xi, \alpha}}{\lambda_o\sqrt{n}}\right)^\frac{\alpha}{2} \sqrt{\frac{\mbb{E}\langle  X_i,\Theta\rangle^4}{\lambda_o^4n^2}} \nonumber\\
			&\stackrel{(e)}{\leq} \left(\frac{2\sigma_{\xi, \alpha}}{\lambda_o\sqrt{n}}\right)^\frac{\alpha}{2}\frac{2\alpha^*}{\lambda_o^2}\|\Theta\|_{\mr{F}}\nonumber\\
			&\stackrel{(f)}{\leq} \frac{2 \alpha^*}{\lambda_o^2} \left\{\left(\frac{o}{n}\right)^\frac{\alpha}{2(1+\alpha)}+\sqrt{\frac{rd_{mc}\log  d_{mc}}{n}}\right\}	\|\Theta\|_{\mr{F}},
			\end{align}
			where (a) follows from H{\"o}lder's inequality, (b) follows from relation between indicator function and expectation, (c) follows from Markov's inequality, (d) follows  from Assumption \ref{a:mc1}  and  (e) follows from \eqref{ine:mc:cov4:mc0},  and (f) follows from \eqref{ine:mc1:lambda}.
	
	Instead of \eqref{ap:ine:cov3}, from \eqref{ine:constraint}, we have
		\begin{align}
			\label{ine:wvmc}
			\mbb{E}f_i^2(\Theta) \leq \mbb{E}\frac{\langle X_i ,\Theta\rangle^4}{\lambda_o^4n^2}\leq \mbb{E}\frac{\langle X_i ,\Theta\rangle^2 \|X_i\|_1^2 \|\Theta\|_\infty^2}{\lambda_o^4n^2}\leq \frac{4\alpha^{*2}}{\lambda_o^4n^2}\|\Theta\|_{\mr{F}}^2
		\end{align}
		and remembering $\sigma^2_f= \sup_{ \Theta \in \mc{R}_{mc}} \sum_{i=1}^n\mbb{E}  \{f_i(\Theta)-\mbb{E}f_i(\Theta)\}^2$, we have
		\begin{align}
			\lambda_o^2 \times \sigma_f \sqrt{8\log(1/\delta)} \leq 4\alpha^{*}\sqrt{ \frac{\log(1/\delta)}{n}}\|\Theta\|_{\mr{F}}.
		\end{align}
		To obtain the result \eqref{inec:huberconvex-main-mc} by the strategy of the proof of Proposition \ref{p:sc1}, the remaining term we must calculate is 
		\begin{align}
			\mbb{E}\sup_{\Theta\in\mc{R}_{mc}}  \left|  \sum_{i=1}^n   \varrho_i  \frac{\langle X_i , \Theta\rangle}{2\lambda_o^2n}  \right|.
	\end{align}
	From Lemma 6 of \cite{NegWai2012Restricted}  and combining the definition of $\mc{R}_{mc}$, we have
	\begin{align}
		\mbb{E}\sup_{\Theta \in\mc{R}_{mc}}  \left|  \sum_{i=1}^n   \varrho_i  \frac{\langle X_i , \Theta\rangle}{2\lambda_o^2n}  \right| &\leq \frac{C}{\lambda_o^2}\left\{ \sqrt{\frac{d_{mc}\log  d_{mc}}{n}}+\frac{d_{mc}\log  d_{mc}}{n}\right\}\sup_{\Theta \in\mc{R}_{mc}}  \|\Theta\|_{*}\nonumber\\
		&\leq \frac{C}{\lambda_o^2}\left\{ \sqrt{r\frac{d_{mc}\log  d_{mc}}{n}}+\sqrt{r}\frac{d_{mc}\log d_{mc}}{n}\right\}\|\Theta\|_{\mr{F}}
\end{align}
and combining $\alpha^*\geq 1$, the proof is complete.
\end{proof}

	\begin{corollary}
		\label{c:mcsc2}
		Suppose that Assumption \ref{a:mc2} holds.
		Let  
		\begin{align}
		\mc{R}_{mc} = \left\{ L\in \mbb{R}^{d_1 \times d_2}\, |\, \|\Theta\|_* \leq C \sqrt{r} \|\Theta\|_{\mr{F}},\, \|\Theta\|_\mr{F} =r_{mc}\right\}.
		\end{align}
		Suppose 
		\begin{align}
			\label{ine:mcs3}
			\|\Theta\|_\infty &\leq \frac{1}{12r_{mc}} \frac{1}{d_{mc}}\|\Theta\|_{\mr{F}},\\
			\label{ine:constraint2}
			\|\Theta\|_\infty &\leq 2\frac{\alpha^*}{d_{mc}},
		\end{align}
		and 
		\begin{align}
			\label{ine:mc2:lambda}
			\lambda_o  \sqrt{n} \geq 2 \sigma_{\xi ,\psi_\alpha} \min\left\{\log^\frac{1}{\alpha} \frac{n}{o}, \log^\frac{1}{\alpha} \frac{n}{rd_{mc}\log d_{mc}}\right\}.
		\end{align}
		Then, with probability at least $1-\delta$, we have
		\begin{align}
		&\inf_{\Theta \in \mc{R}_{mc}}\left[\lambda_o^2\sum_{i=1}^n  \left\{-h\left(\frac{\xi_i - \langle  X_i, \Theta\rangle}{\lambda_o\sqrt{n}}\right)+h \left(\frac{\xi_i}{\lambda_o\sqrt{n}}\right) \right\}\frac{\langle   X_i,\Theta\rangle}{\lambda_o\sqrt{n}} \right] \nonumber\\
		&\geq \frac{2}{3}\|\Theta\|_\mr{F}^2 -C\left( \alpha^* \sqrt{r\frac{d_{mc}\{\log d_{mc}+\log(1/\delta)\}}{n}}+\sqrt{r}\frac{d_{mc}\log  d_{mc}}{n}+ \alpha^* \sqrt{\frac{o}{n}}\right)\|\Theta\|_\mr{F}-5\frac{\log(1/\delta)}{n}.
		\end{align} 
	\end{corollary}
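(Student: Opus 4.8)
The plan is to run the proof of Corollary \ref{c:mcsc1} essentially verbatim; the argument genuinely changes only in the tail estimate for $\xi_i$, where the subWeibull assumption replaces the polynomial moment assumption and condition \eqref{ine:mc2:lambda} replaces \eqref{ine:mc1:lambda}.

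\textbf{Reduction to a truncated empirical process.} First I set $u_i = \xi_i/(\lambda_o\sqrt{n})$ and $v_i = \langle X_i,\Theta\rangle/(\lambda_o\sqrt{n})$, and use the convexity of $H$ together with Lemma \ref{ap:ine:conv} to bound the summand below by $\sum_i\{-h(u_i-v_i)+h(u_i)\}v_i\,\mr{I}_{E_i}$, where $E_i = \{|u_i|\le 1/2\}\cap\{|v_i|\le 1/(2\lambda_o\sqrt{n})\}$. With $\varphi,\psi$ as in \eqref{def:phipsi} and $f_i(\Theta)=\varphi(v_i)\psi(u_i)$, exactly as in the proof of Proposition \ref{p:sc1} this gives $\lambda_o^2\sum_i\{\cdots\}\ge\lambda_o^2\big(\mbb{E}\sum_i f_i(\Theta)-\Delta\big)$ with $\Delta := \sup_{\Theta\in\mc{R}_{mc}}\big|\sum_i f_i(\Theta)-\mbb{E}\sum_i f_i(\Theta)\big|$.

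\textbf{Lower bound on the mean.} Since $\mbb{E}\langle X_i,\Theta\rangle^2=\|\Theta\|_{\mr{F}}^2$, I bound $\mbb{E}f_i(\Theta)\ge\mbb{E}|v_i|^2-\mbb{E}|v_i|^2 I(|v_i|\ge(2\lambda_o\sqrt{n})^{-1})-\mbb{E}|v_i|^2 I(|u_i|\ge 1/2)$ and sum. The first subtracted sum is handled verbatim as in Corollary \ref{c:mcsc1}, via H\"older, Markov, and the moment bound $\mbb{E}\langle X_i,\Theta\rangle^4\le\frac{1}{144}\|\Theta\|_{\mr{F}}^2$ coming from the spikiness bound \eqref{ine:mcs3}; it contributes at most $\frac{1}{3\lambda_o^2}\|\Theta\|_{\mr{F}}^2$. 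For the second subtracted sum I apply H\"older and the moment bound $\mbb{E}\langle X_i,\Theta\rangle^4\le 4\alpha^{*2}\|\Theta\|_{\mr{F}}^2$ coming from the constraint \eqref{ine:constraint2}, so that the crux is $\mbb{P}(|\xi_i|\ge\lambda_o\sqrt{n}/2)$. Here, instead of Markov's inequality, the subWeibull tail gives
\begin{align*}
\mbb{P}\!\left(|\xi_i|\ge \tfrac{\lambda_o\sqrt{n}}{2}\right)
&\le 2\exp\!\left(-\left(\tfrac{\lambda_o\sqrt{n}}{2\sigma_{\xi,\psi_\alpha}}\right)^{\!\alpha}\right)
\le 2\max\!\left\{\tfrac{o}{n},\,\tfrac{rd_{mc}\log d_{mc}}{n}\right\},
\end{align*}
where the last inequality is exactly condition \eqref{ine:mc2:lambda}. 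Hence this term contributes at most $\frac{C\alpha^*}{\lambda_o^2}\big(\sqrt{o/n}+\sqrt{rd_{mc}\log d_{mc}/n}\big)\|\Theta\|_{\mr{F}}$, which is linear rather than quadratic in $\|\Theta\|_{\mr{F}}$. Multiplying by $\lambda_o^2$ this yields $\lambda_o^2\,\mbb{E}\sum_i f_i(\Theta)\ge\frac{2}{3}\|\Theta\|_{\mr{F}}^2-C\alpha^*\big(\sqrt{o/n}+\sqrt{rd_{mc}\log d_{mc}/n}\big)\|\Theta\|_{\mr{F}}$.

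\textbf{Concentration of $\Delta$ and assembly.} By \eqref{ine:f-1/4} and the spikiness bound one has $\sup_i f_i(\Theta)\le 1/(4\lambda_o^2 n)$, and by \eqref{ine:constraint2}, $\sigma_f^2=\sup_{\mc{R}_{mc}}\sum_i\mbb{E}\{f_i-\mbb{E}f_i\}^2\le\sum_i\mbb{E}\langle X_i,\Theta\rangle^4/(\lambda_o^4 n^2)\le 4\alpha^{*2}\|\Theta\|_{\mr{F}}^2/(\lambda_o^4 n)$; Theorem 3 of \cite{Mas2000Constants} then gives, with probability at least $1-\delta$, $\lambda_o^2\Delta\le 2\lambda_o^2\mbb{E}\Delta+C\alpha^*\sqrt{\log(1/\delta)/n}\,\|\Theta\|_{\mr{F}}+C\log(1/\delta)/n$. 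For $\mbb{E}\Delta$ I use the symmetrization inequality and the contraction principle (valid since $\psi\in[0,1]$ and $\varphi$ is $\tfrac{1}{2\lambda_o\sqrt{n}}$-Lipschitz with $\varphi(0)=0$) to reduce to $\mbb{E}\sup_{\Theta\in\mc{R}_{mc}}\big|\sum_i\varrho_i\langle X_i,\Theta\rangle/(2\lambda_o^2 n)\big|$; since $\{X_i\}$ is not $L$-subGaussian here, in place of Lemma \ref{l:supRad} I invoke Lemma 6 of \cite{NegWai2012Restricted}, which bounds this by $\frac{C}{\lambda_o^2}\big(\sqrt{d_{mc}\log d_{mc}/n}+d_{mc}\log d_{mc}/n\big)\sup_{\mc{R}_{mc}}\|\Theta\|_*$, and then the defining inequality $\|\Theta\|_*\le C\sqrt{r}\|\Theta\|_{\mr{F}}$ of $\mc{R}_{mc}$ converts it to $\frac{C}{\lambda_o^2}\big(\sqrt{rd_{mc}\log d_{mc}/n}+\sqrt{r}\,d_{mc}\log d_{mc}/n\big)\|\Theta\|_{\mr{F}}$. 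Collecting the three contributions and using $\alpha^*\ge 1$ and $\sqrt{rd_{mc}\log d_{mc}/n}\le\sqrt{rd_{mc}(\log d_{mc}+\log(1/\delta))/n}$ gives the claimed inequality. I expect the only nonroutine point to be the tail estimate above (matching \eqref{ine:mc2:lambda} against the two alternatives in the minimum) and checking that Lemma 6 of \cite{NegWai2012Restricted} applies to the Rademacher-weighted matrix-completion designs; everything else is a transcription of the proof of Corollary \ref{c:mcsc1} with a subexponential-type bound in place of Markov's inequality.
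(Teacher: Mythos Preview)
Your proposal is correct and follows essentially the same route as the paper: the paper's proof of Corollary \ref{c:mcsc2} explicitly says that the argument of Corollary \ref{c:mcsc1} is reused verbatim for the analogues of \eqref{ap:ine:cov1}, \eqref{ap:ine:cov3} and the symmetrization/contraction step (via Lemma 6 of \cite{NegWai2012Restricted}), and that only the term \eqref{ap:ine:cov2} is treated differently, replacing Markov's inequality by the subWeibull tail bound together with \eqref{ine:mc2:lambda} to obtain the $\sqrt{o/n}+\sqrt{rd_{mc}\log d_{mc}/n}$ factor. Your write-up matches this exactly, including the observation that the $\xi_i$-term becomes linear (not quadratic) in $\|\Theta\|_{\mr{F}}$, which is why the leading constant is $2/3$ rather than $1/3$.
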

	
	\begin{proof}
		The proof is almost identical to the one of Proposition \ref{p:sc1}.
		Like Corollary \ref{c:mcsc1}, because $\left\{X_i\right\}_{i=1}^n$ for matrix completion is not $L$-subGaussian, we should calculate  \eqref{ap:ine:cov1} and  \eqref{ap:ine:cov3} in the same strategy of Corollary \ref{c:mcsc1}.
		However, to calculate \eqref{ap:ine:cov2}, we need another strategy
		than one of Corollary \ref{c:mcsc1} because of the difference of the assumption on random noise.

		From the definition of $X_i$,  \eqref{ine:constraint2}, we have
		\begin{align}
			\label{ine:mc:cov4:mc}
			\mbb{E}\langle X_i,\Theta\rangle^4 \leq \mbb{E}\langle X_i,\Theta\rangle^2\|X_i\|_1^2 \|\Theta\|_\infty^2  \leq 2 \alpha^{*2}\|\Theta\|_{\mr{F}}^2.
		\end{align}

		For  \eqref{ap:ine:cov2}, we have
		\begin{align}
		\sum_{i=1}^n\mbb{E}\left|\frac{  \langle  X_i,\Theta\rangle}{\lambda_o\sqrt{n}} \right| ^2 I\left( \left|\frac{\xi_i}{\lambda_o\sqrt{n}} \right|\geq \frac{1}{2} \right)
		&\stackrel{(a)}{\leq} \sum_{i=1}^n\sqrt{\mbb{E}\left|\frac{  \langle  X_i,\Theta\rangle}{\lambda_o\sqrt{n}} \right| ^4}  \sqrt{\mbb{E}I\left( \left|\frac{\xi_i}{\lambda_o\sqrt{n}} \right|\geq \frac{1}{2} \right)}\nonumber \\
		&\stackrel{(b)}{\leq}\sum_{i=1}^n\sqrt{\mbb{E}\left|\frac{  \langle  X_i,\Theta\rangle}{\lambda_o\sqrt{n}} \right| ^4}  \sqrt{\mbb{P}  \left[ \left|\frac{\xi_i}{\lambda_o\sqrt{n}} \right|\geq \frac{1}{2}  \right]}\nonumber \\	  
		&\stackrel{(c)}{\leq}\sum_{i=1}^n\sqrt{\mbb{E}\left|\frac{  \langle  X_i,\Theta\rangle}{\lambda_o\sqrt{n}} \right| ^4} \sqrt{ \exp\left(\frac{-\lambda^\alpha_o n^\frac{\alpha}{2}}{2^\alpha \sigma_{\xi, \psi_\alpha}^\alpha}\right)}\nonumber \\	  
		&\stackrel{(d)}{\leq}\sum_{i=1}^n \sqrt{ \frac{o}{n} }\frac{1 }{\lambda_o^2n}  \sqrt{\mbb{E}\langle  X_i,\Theta\rangle^4} \nonumber \\
		&\stackrel{(e)}{\leq} \frac{2 \alpha^*}{\lambda_o^2}\left(\sqrt{\frac{o}{n}}+\sqrt{\frac{rd_{mc}\log d_{mc}}{n}} \right)\|\Theta\|_{\mr{F}},
		\end{align}
		where (a) follows from H{\"o}lder's inequality, (b) follows from relation between indicator function and expectation, (c) follows from assumption, (d) follows from \eqref{ine:mc2:lambda} and (e) follows from \eqref{ine:mc:cov4:mc}.
	\end{proof}

\section{Proof of Theorem \ref{t:cs:main}}
\label{asec:maincs}
Suppose that  the assumptions of Theorem~\ref{t:cs:main} hold.
The proof is complete if we confirm the assumptions in Theorem~\ref{t:det:main} with
\begin{align}
		r_{mcs}  =c'_{mcs} \times\lambda_o\sqrt{n}\times L\left(\frac{1+\sqrt{\log (1/\delta)}}{\sqrt{n}}+c_\kappa\rho\sqrt{r\frac{d_1+d_2}{n}}+\frac{o}{n}\sqrt{\log \frac{n}{o}}\right)
\end{align}
are satisfied with probability at least $1-3\delta$.
We divide the proof into four steps:
\begin{itemize}
	\item[I. ] We confirm \eqref{ine:det:xis0} and \eqref{ine:det:upper0} are satisfied with probability at least $1-2\delta$.
	\item[I\hspace{-.1em}I. ]We confirm \eqref{ine:det:par} is satisfied. 
	\item[I\hspace{-.1em}I\hspace{-.1em}I.  ] We confirm \eqref{ine:det:lower} is satisfied with probability at least $1-\delta$. 
	\item[I\hspace{-.1em}V. ] We confirm~\eqref{ine:det:condc}  is satisfied.
\end{itemize}
In this section, we set 
\begin{align}
	&r_{a,\mr{F}} =r_{b,\mr{F}}=\frac{1+\sqrt{\log (1/\delta)}}{\sqrt{n}}+ \frac{o}{n} \sqrt{\log \frac{n}{o}},\quad r_{a,*}=r_{b,*} = \sqrt{\frac{d_1+d_2}{n}}, \\
	&r_{c,\mr{F}}=\rho c_\kappa \sqrt{r\frac{d_1+d_2}{n}}+\sqrt{\frac{\log(1/\delta)}{n}}, \quad r_c = \frac{\log(1/\delta)}{n}
 \end{align}
and $C_{mcs}$ is some sufficiently large numerical constant.

\subsection{Proof of I}
Remember that \eqref{ine:det:xis0} is 
\begin{align}
	&\left| \frac{\lambda_o}{\sqrt{n}}\sum_{i=1}^n h\left(\frac{\xi_i}{\lambda_o \sqrt{n}} \right) \langle  X_i, \Theta_\eta \rangle \right| \leq a_{\mr{F}} r_{a,\mr{F}} \left\| \mr{T}_\Sigma (\Theta_\eta)\right\|_\mr{F} + a_*r_{a,*}\|\Theta_\eta\|_*
\end{align}
and \eqref{ine:det:upper0} is
\begin{align}
	&\left| \sum_{i=1}^n \frac{\lambda_o}{\sqrt{n}}u_i \langle X_i, \Theta_\eta\rangle\right|
	\leq b_{\mr{F}} r_{b,\mr{F}}\left\| \mr{T}_\Sigma (\Theta_\eta)\right\|_\mr{F} + b_*r_{b,*}\|\Theta_\eta\|_*.
\end{align}
First, from Lemma \ref{l:suphx} , we confirm \eqref{ine:det:xis0} with 
\begin{align}
	a_\mr{F}= C_{mcs}\lambda_o\sqrt{n}L,\quad  a_* = C_{mcs}\lambda_o\sqrt{n}L\rho
\end{align}
with probability at least $1-\delta$.
Second, from Corollary \ref{ac:|uMv|-cs} and $\frac{o}{n}\leq 1$ we confirm \eqref{ine:det:upper0} with
\begin{align}
	b_\mr{F} = C_{mcs}\lambda_o \sqrt{n}L,\quad b_* = C_{mcs}\lambda_o \sqrt{n}L \rho,
\end{align}
with probability at least $1-\delta$.
From union bound, we confirm \eqref{ine:det:xis0} and \eqref{ine:det:upper0} are satisfied with probability at least $(1-\delta)^2$

\subsection{Proof of I\hspace{-.1em}I}
Remember we have
\begin{align}
		\lambda_*&= c_{mcs} \times\lambda_o\sqrt{n}\times L\times\left(\frac{1}{c_\kappa \sqrt{r}}\frac{1+\sqrt{\log (1/\delta)}}{\sqrt{n}}+\rho\sqrt{\frac{d_1+d_2}{n}}+\frac{1}{c_\kappa \sqrt{r}}\frac{o}{n}\sqrt{\log \frac{n}{o}}\right)\nonumber\\
		&=c_{mcs} \times\lambda_o\sqrt{n}\times L\times\left(\frac{r_{a,\mr{F}}}{c_\kappa \sqrt{r}}+\rho r_{a,*}\right)
\end{align}
for some sufficiently large numerical constant $c_{mcs}$ and we have
\begin{align}
	C_s &= \frac{a_\mr{F}+\sqrt{2}b_\mr{F}}{c_\kappa \sqrt{r}}r_{a,\mr{F}} + (a_*+\sqrt{2}b_*)r_{a,*} \nonumber\\
	&=(1+\sqrt{2})C_{mcs} \times\lambda_o\sqrt{n}\times L\times\left(\frac{r_{a,\mr{F}}}{c_\kappa \sqrt{r}}+\rho r_{a,*}\right).
\end{align}
For a sufficiently large numerical constant $c_{mcs}$, we see $\lambda_*-C_s>0$.

\subsection{Proof of I\hspace{-.1em}I\hspace{-.1em}I}
Remember that \eqref{ine:det:lower} is
\begin{align}
	c_1 \| \mr{T}_\Sigma (\Theta_{\eta})\|_\mr{F}^2-c_2r_{c,\mr{F}}\|\mr{T}_\Sigma (\Theta_{\eta})\|_{\mr{F}} -c_3 r_{c}\leq \lambda_o^2\sum_{i=1}^n \left\{	-h \left(\frac{\xi_i-\langle X_i,\Theta_{\eta}\rangle}{\lambda_o\sqrt{n}} \right)+h  \left(\frac{\xi_i}{\lambda_o\sqrt{n}}  \right)\right\} \frac{\langle X_i,\Theta_{\eta}\rangle}{\lambda_o \sqrt{n}}
\end{align}
and we can confirm \eqref{ine:det:lower} from Proposition \ref{p:sc1}, $L\geq 1$ and the definition of $\lambda_o\sqrt{n}$
with
\begin{align}
	c_1 = \frac{1}{3},\quad c_2 =C_{mcs}L,\quad c_3 =C_{mcs}^2L^2.
\end{align}

\subsection{Proof of I\hspace{-.1em}V}
From the definition of $\lambda_*,\,\lambda_o\sqrt{n}$ and the values of $a_{\mr{F}}, a_*, b_{\mr{F}}, b_*, c_1,c_2,c_3,r_{a,\mr{F}},r_{a,*},r_{a,\mr{F}},r_{b,*},r_{b,\mr{F}},r_{c,\mr{F}},r_c$, we have
\begin{align}
	&\frac{c_2r_{c,\mr{F}} +C_{\lambda_*} +\sqrt{c_1c_3r_c}}{c_1} \nonumber\\
	&\leq\frac{(a_\mr{F}+\sqrt{2}b_\mr{F})r_{a,\mr{F}} + (a_*+\sqrt{2}b_*)c_\kappa \sqrt{r} r_{a,*}+\lambda_*  c_\kappa \sqrt{r} +c_2r_{c,\mr{F}} +\sqrt{c_1c_3r_c}}{c_1}\nonumber\\
	&\leq C_{mcs}'\times \lambda_o\sqrt{n} \times L \left(r_{a,\mr{F}}+\rho c_\kappa \sqrt{r}r_{a,*}\right),
\end{align}
where $C_{mcs}'$ is some sufficiently large constant.
Remember that
\begin{align}
r_0  &=c'_{mcs} \times\lambda_o\sqrt{n}\times L\left(\frac{1+\sqrt{\log (1/\delta)}}{\sqrt{n}}+c_\kappa\rho\sqrt{r\frac{d_1+d_2}{n}}+\frac{o}{n}\sqrt{\log \frac{n}{o}}\right)\nonumber\\
&=c'_{mcs} \times\lambda_o\sqrt{n}\times L\left(r_{a,\mr{F}}+\rho c_\kappa \sqrt{r}r_{a,*}\right).
\end{align}
For sufficiently large numerical constant $c'_{mcs}$, we confirm \eqref{ine:det:condc}.

\section{Proof of Theorem \ref{t:lasso:main}}
\label{s:laap}
The proof of Theorem \ref{t:lasso:main} is almost identical to that of  Theorem \ref{t:cs:main}.
Therefore, we shall omit it.

\section{Proof of Theorem \ref{t:mc:main1}}
\label{s:mc1}
Suppose the assumptions of Theorem~\ref{t:mc:main1} hold.
The proof is complete if we confirm the assumptions in Theorem~\ref{t:det:main} with 
\begin{align}
	&r_0 = c_{mc1}'\times \nonumber\\
	&\left((\sigma_\xi+\alpha^*) \sqrt{\frac{r d_{mc} (\log d_{mc}+ \log (1/\delta))}{n}}+\lambda_o\frac{\sqrt{r}d_{mc}(\log  d_{mc}+\log(1/\delta))}{\sqrt{n}}+ \sqrt{r}\frac{d_{mc}\log d_{mc}}{n}+ \sqrt{\lambda_o \sqrt{n}\frac{o}{n}} + \alpha^*\left(\frac{o}{n}\right)^\frac{\alpha}{2(1+\alpha)}\right)
\end{align} 
are satisfied with probability at least $1-2\delta$.
We divide the proof into four steps:
\begin{itemize}
	\item[I. ] We confirm \eqref{ine:det:upper0} and \eqref{ine:det:xis0}  are satisfied with probability at least $1-\delta$.
	\item[I\hspace{-.1em}I. ]We confirm \eqref{ine:det:par} is satisfied. 
	\item[I\hspace{-.1em}I\hspace{-.1em}I.  ] We confirm \eqref{ine:det:lower} is satisfied with probability at least $1-\delta$. 
	\item[I\hspace{-.1em}V. ] We confirm~\eqref{ine:det:condc}  is satisfied.
\end{itemize}
In Section \ref{s:mc1}, we set 
\begin{align}
	&r_{a,\mr{F}} = 0,\quad r_{a,*} = \sigma_\xi\sqrt{\frac{d_{mc}(\log d_{mc} + \log(1/\delta))}{n}}+\lambda_o\frac{d_{mc}(\log  d_{mc}+\log(1/\delta))}{\sqrt{n}},\quad r_{b,\mr{F}} = \sqrt{\lambda_o\sqrt{n}\frac{o}{n}},\nonumber\\
	&r_{b,*}=0,\quad r_{c,\mr{F}} = \alpha^*\sqrt{r\frac{d_{mc}\{\log d_{mc}+\log(1/\delta)\}}{n}}+ \sqrt{r}\frac{d_{mc}\log d_{mc}}{n}+ \alpha^*\left(\frac{o}{n}\right)^\frac{\alpha}{2(1+\alpha)}	,\quad r_c = \frac{\log(1/\delta)}{n} 
 \end{align}
and $C_{mc1}$ is some sufficiently large numerical constant.

Before the proceeding each steps above, we confirm that when  
\begin{align}
	\alpha(\Theta_\eta) \geq \frac{1}{12r_0},
\end{align}
we have $\|B^*-\hat{B}\|_{\mr{F}} \leq 24\alpha^*r_0 $.
From $\alpha(\Theta_\eta) \geq \frac{1}{12r_0}$, we have
\begin{align}
	&\frac{1}{r_0}\leq \alpha(\Theta_\eta) = d_{mc}\frac{\|B^*-\hat{B}\|_\infty}{\|B^*-\hat{B}\|_{\mr{F}}}\Rightarrow \|B^*-\hat{B}\|_{\mr{F}} \leq 12\sqrt{d_1d_2} \times r_0\|B^*-\hat{B}\|_\infty.
\end{align}
From the spikiness condition and the constraint of the optimization problem and, we have $\|B^*\|_\infty \leq \frac{\alpha^*}{ d_{mc}}$ and $\|\hat{B}\|_\infty\leq \frac{\alpha^*}{ d_{mc}}$.Combining these inequalities and the triangular inequality, we have
\begin{align}
 \|B^*-\hat{B}\|_{\mr{F}} \leq 24\alpha^*r_0.
\end{align}
Therefore, in the remaining part of Section \ref{s:mc1}, we assume
\begin{align}
	\label{alpha1}
	\alpha(\Theta_\eta) \leq \frac{1}{12r_0},
\end{align}

\subsection{Proof of I}
Remember that \eqref{ine:det:xis0} is 
\begin{align}
	&\left| \frac{\lambda_o}{\sqrt{n}}\sum_{i=1}^n h\left(\frac{\xi_i}{\lambda_o \sqrt{n}} \right) \langle  X_i, \Theta_\eta \rangle \right| \leq a_{\mr{F}} r_{a,\mr{F}} \left\|\Theta_\eta\right\|_\mr{F} + a_*r_{a,*}\|\Theta_\eta\|_*
\end{align}
and \eqref{ine:det:upper0} is
\begin{align}
	&\left| \sum_{i=1}^n \frac{\lambda_o}{\sqrt{n}}u_i \langle X_i, \Theta_\eta\rangle\right|
	\leq b_{\mr{F}} r_{b,\mr{F}}\left\|\Theta_\eta\right\|_\mr{F} + b_*r_{b,*}\|\Theta_\eta\|_*.
\end{align}
First, from Lemma \ref{l:mcspec}, we confirm \eqref{ine:det:xis0} with 
\begin{align}
	a_\mr{F}=0,\quad a_* = C_{mc}
\end{align}
with probability at least $1-\delta$.
Second, from \eqref{alpha1}, we can confirm \eqref{ine:M-2}:
\begin{align}
	\|\Theta_\eta\|_\infty \leq \frac{1}{12r_0}\frac{1}{d_{mc}}\|\Theta_\eta\|_{\mr{F}} \leq \frac{1}{12\times\sqrt{\lambda_o \sqrt{n}\frac{o}{n}}}\frac{1}{d_{mc}}\|\Theta_\eta\|_{\mr{F}}.
\end{align}
From Lemma  \ref{al:|uMv|-mc}, we confirm \eqref{ine:det:upper0} with
\begin{align}
	b_\mr{F} = \frac{1}{6},\quad b_* =0.
\end{align}
Lastly, from union bound, we confirm \eqref{ine:det:xis0} and \eqref{ine:det:upper0} are satisfied with probability at least $1-\delta$.

\subsection{Proof of I\hspace{-.1em}I}
Remember we have
\begin{align}
		\lambda_*&= c_{mc1}\times  \frac{1}{ \sqrt{r}}\times\left(\sigma_\xi \sqrt{\frac{r (d_1+d_2) \{\log (d_1+d_2) + \log (1/\delta)\}}{n}}+\lambda_o\frac{d_{mc}(\log  d_{mc}+\log(1/\delta))}{\sqrt{n}}+\sqrt{\lambda_o \sqrt{n}\frac{o}{n}}\right)\nonumber\\
		&=c_{mc1}\times \left(r_{a,*}+\frac{r_{b,\mr{F}}}{\sqrt{r}}\right)
\end{align}
for a sufficiently large numerical constant $c_{mc1}$ and we have
\begin{align}
C_s =\frac{a_\mr{F}r_{a,\mr{F}}+\sqrt{2}b_\mr{F}r_{b,\mr{F}}}{c_\kappa \sqrt{r}} + (a_*r_{a,*}+\sqrt{2}b_*r_{b,*}) = \frac{\sqrt{2}r_{b,\mr{F}}}{12\sqrt{r}} + C_{mc1}r_{a,*} \leq \left(C_{mc1} + \frac{\sqrt{2}}{12}\right)\left(r_{a,*}+\frac{r_{b,\mr{F}}}{\sqrt{r}}\right),
\end{align}
where we use the fact that we can set $c_0=1, \kappa=1$ and $c_\kappa = 2$.
Consequently, we confirm $\lambda_*-C_s>0$.

\subsection{Proof of I\hspace{-.1em}I\hspace{-.1em}I}
Remember that \eqref{ine:det:lower} is
\begin{align}
	c_1 \| \Theta_{\eta}\|_\mr{F}^2-c_2r_{c,\mr{F}}\|\Theta_{\eta}\|_{\mr{F}} -c_3 r_{c}\leq \lambda_o^2\sum_{i=1}^n \left\{	-h \left(\frac{\xi_i-\langle X_i,\Theta_{\eta}\rangle}{\lambda_o\sqrt{n}} \right)+h  \left(\frac{\xi_i}{\lambda_o\sqrt{n}}  \right)\right\} \frac{\langle X_i,\Theta_{\eta}\rangle}{\lambda_o \sqrt{n}}
\end{align}
for matrix compressed sensing.
From \eqref{alpha1} and the constraint of the optimization problem, we confirm that \eqref{ine:mcs2} and \eqref{ine:constraint} and we have
\begin{align}
&\inf_{\Theta \in \mc{R}_{mc}}\left[\lambda_o^2\sum_{i=1}^n  \left\{-h\left(\frac{\xi_i - \langle  X_i, \Theta\rangle}{\lambda_o\sqrt{n}}\right)+h \left(\frac{\xi_i}{\lambda_o\sqrt{n}}\right) \right\}\frac{\langle X_i,\Theta\rangle}{\lambda_o\sqrt{n}} \right] \nonumber\\
&\geq \frac{2}{3}\|\Theta\|_\mr{F}^2 -C\left( \alpha^* \sqrt{r\frac{d_{mc}\{\log d_{mc}+\log(1/\delta)\}}{n}}+ \sqrt{r}\frac{d_{mc}\log  d_{mc}}{n}+ \alpha^* \left(\frac{o}{n}\right)^\frac{\alpha}{2(1+\alpha)}	\right)\|\Theta\|_\mr{F}-5\frac{\log(1/\delta)}{n}
\end{align} 
with probability at least $1-\delta$.
We can confirm \eqref{ine:det:lower} by Corollary \ref{c:mcsc1}
with
\begin{align}
	c_1 = \frac{2}{3},\quad c_2 =c_3 =C_{mc1}.
\end{align}

\subsection{Proof of I\hspace{-.1em}V}
From the definition of $\lambda_*,\,\lambda_o\sqrt{n}$ and the values of $a_{\mr{F}}, a_*, b_{\mr{F}}, b_*, c_1,c_2,c_3,r_{a,\mr{F}},r_{a,*},r_{a,\mr{F}},r_{b,*},r_{b,\mr{F}},r_{c,\mr{F}},r_c$, we have
\begin{align}
	&\frac{c_2r_{c,\mr{F}} +C_{\lambda_*} +\sqrt{c_1c_3r_c}}{c_1} \nonumber\\
	&\leq\frac{(a_\mr{F}+\sqrt{2}b_\mr{F})r_{a,\mr{F}} + (a_*+\sqrt{2}b_*)c_\kappa \sqrt{r} r_{a,*}+\lambda_*  \sqrt{r} +c_2r_{c,\mr{F}} +\sqrt{c_1c_3r_c}}{c_1}\nonumber \\
		&\leq C_{mc1}'\times(\sqrt{r}r_{a,*}+r_{b,\mr{F}}+r_{c,\mr{F}}+\sqrt{r_c}),
\end{align}
where $C_{mc1}'$ is some sufficiently large constant and we use the fact that $\sqrt{r_c}\leq r_{c,\mr{F}}$.
Remember that $r_0$
\begin{align}
	&r_0 = c_{mc1}'\times \nonumber\\
	&\left((\sigma_\xi+\alpha^*) \sqrt{\frac{r d_{mc} (\log d_{mc}+ \log (1/\delta))}{n}}+\lambda_o\frac{\sqrt{r}d_{mc}(\log  d_{mc}+\log(1/\delta))}{\sqrt{n}}+ \sqrt{r}\frac{d_{mc}\log d_{mc}}{n}+ \sqrt{\lambda_o \sqrt{n}\frac{o}{n}} + \alpha^*\left(\frac{o}{n}\right)^\frac{\alpha}{2(1+\alpha)}\right)\nonumber\\
	&= c_{mc1}'\times (\sqrt{r}r_{a,*}+r_{b,\mr{F}}+r_{c,\mr{F}}+\sqrt{r_c})
\end{align} 
for sufficiently large numerical constant $c'_{mc1}$, and we confirm \eqref{ine:det:condc}.

\section{Proof of Theorem \ref{t:mc:main2}}
\label{s:mc2}
The proof of Theorem \ref{t:mc:main2} is almost identical to that of  Theorem \ref{t:mc:main1}.
Therefore, we shall omit it.

\end{document}